\begin{document}

\title{Hinge-Loss Markov Random Fields \\and Probabilistic Soft Logic}

\author{\name Stephen H.\ Bach \email bach@cs.stanford.edu \\
       \addr Computer Science Department \\
       Stanford University \\
       Stanford, CA 94305, USA
       \AND
       \name Matthias Broecheler \email matthias@datastax.com \\
       \addr DataStax
       \AND
       \name Bert Huang \email bhuang@vt.edu \\
       \addr Computer Science Department \\
       Virginia Tech\\
       Blacksburg, VA 24061, USA
       \AND
       \name Lise Getoor \email getoor@soe.ucsc.edu \\
       \addr Computer Science Department \\
       University of California, Santa Cruz \\
       Santa Cruz, CA 95064, USA}

\editor{Luc De Raedt}

\maketitle


\begin{abstract}
A fundamental challenge in developing high-impact machine learning technologies is balancing the
need to model rich, structured domains with the ability to scale to big data.
Many important problem areas are both richly structured and large scale, from social and biological networks,
to knowledge graphs and the Web, to images, video, and natural language.
In this paper, we introduce two new formalisms for modeling structured data, and show that they can both capture rich structure and scale to big data.
The first, hinge-loss Markov random fields (HL-MRFs), is a new kind of probabilistic graphical model that generalizes different
approaches to convex inference.
We unite three approaches from the randomized algorithms, probabilistic graphical models,
and fuzzy logic communities, showing that all three lead to the same inference objective.
We then define HL-MRFs by generalizing this unified objective.
The second new formalism, probabilistic soft logic (PSL), is a probabilistic programming language
that makes HL-MRFs easy to define using a syntax based on first-order logic.
We introduce an algorithm for inferring most-probable variable assignments (MAP inference)
that is much more scalable than general-purpose convex optimization methods,
because it uses message passing to take advantage of sparse dependency structures.
We then show how to learn the parameters of HL-MRFs.
The learned HL-MRFs are as accurate as analogous discrete models, but much more scalable.
Together, these algorithms enable HL-MRFs and PSL to model rich, structured data at scales not previously possible.
\end{abstract}

\begin{keywords}
Probabilistic graphical models, statistical relational learning, structured prediction
\end{keywords}


\section{Introduction}

In many problems in machine learning, the domains are rich and structured, with many interdependent
elements that are best modeled jointly.
Examples include social networks, biological networks, the Web, natural language, computer vision,
sensor networks, and so on.
Machine learning subfields such as statistical relational learning \citep{getoor:book07},
inductive logic programming \citep{muggleton:lp94}, and structured prediction \citep{bakir:book07}
all seek to represent dependencies in data induced by relational structure.
With the ever-increasing size of available data, there is a growing need for models that are highly
scalable while still able to capture rich structure.

In this paper, we introduce \emph{hinge-loss Markov random fields} (HL-MRFs),
a new class of probabilistic graphical models designed to enable scalable modeling of rich, structured data.
HL-MRFs are analogous to discrete MRFs, which are undirected probabilistic graphical models in which probability
mass is log-proportional to a weighted sum of feature functions.
Unlike discrete MRFs, however, HL-MRFs are defined over continuous variables in the $[0,1]$ unit interval.
To model dependencies among these continuous variables, we use linear and quadratic hinge functions, so that probability density is lost according to a weighted sum of hinge losses.
As we will show, hinge-loss features capture many common modeling patterns for structured data.

When designing classes of models, there is generally a trade off between scalability and expressivity:
the more complex the types and connectivity structure of the dependencies, the more computationally
challenging inference and learning become.
HL-MRFs address a crucial gap between the two extremes.
By using hinge-loss functions to model the dependencies among the variables, which admit
highly scalable inference without restrictions on their connectivity structure, HL-MRFs can
capture a wide range of useful relationships.
One reason they are so expressive is that hinge-loss dependencies are at the core of a number of
scalable techniques for modeling both discrete and continuous structured data.

To motivate HL-MRFs, we unify three different approaches for scalable inference in structured models:
(1) randomized algorithms for MAX SAT \citep{goemans:discmath94},
(2) local consistency relaxation \citep{wainwright:book08} for discrete Markov random fields defined using Boolean logic,
and (3) reasoning about continuous information with fuzzy logic.
We show that all three approaches lead to the same convex programming objective.
We then define HL-MRFs by generalizing this unified inference objective as a
weighted sum of hinge-loss features and using them as the weighted features of graphical models.
Since HL-MRFs generalize approaches that reason about relational data with weighted logical knowledge
bases, they retain the same high level of expressivity.
As we show in Section~\ref{sec:experiments}, they are effective for modeling both discrete and continuous data.

We also introduce \emph{probabilistic soft logic} (PSL), a new probabilistic programming language
that makes HL-MRFs easy to define and use for large, relational data sets.\footnote{An open source implementation, tutorials, and data sets are available at \url{http://psl.linqs.org}.}
This idea has been explored for other classes of models,
such as Markov logic networks \citep{richardson:ml06} for discrete MRFs,
relational dependency networks \citep{neville:jmlr07} for dependency networks,
and probabilistic relational models \citep{getoor:jmlr02} for Bayesian networks.
We build on these previous approaches, as well as the connection between hinge-loss potentials and logical clauses,
to define PSL.
In addition to probabilistic rules, PSL provides syntax that enables users to easily apply many common
modeling techniques, such as domain and range constraints, blocking and canopy functions,
and aggregate variables defined over other random variables.

Our next contribution is to introduce a number of inference and learning algorithms.
First, we examine MAP inference, i.e., the problem of finding a most probable assignment to the
unobserved random variables.
MAP inference in HL-MRFs is always a convex optimization.
Although any off-the-shelf optimization toolkit could be used, such methods typically do not
leverage the sparse dependency structures common in graphical models.
We introduce a consensus-optimization approach to MAP inference for HL-MRFs, showing how the
problem can be decomposed using the alternating direction method of multipliers (ADMM) and how the
resulting subproblems can be solved analytically for hinge-loss potentials.
Our approach enables HL-MRFs to easily scale beyond the capabilities of off-the-shelf optimization software or sampling-based
inference in discrete MRFs.
We then show how to learn HL-MRFs from training data using a variety of methods: structured perceptron, maximum pseudolikelihood, and large margin estimation.
Since structured perceptron and large margin estimation rely on inference as subroutines, and maximum
pseudolikelihood estimation is efficient by design, all of these methods are highly scalable for HL-MRFs.
We evaluate them on core relational learning and structured prediction tasks, such as collective classification and link prediction.
We show that HL-MRFs offer predictive accuracy comparable to analogous discrete models
while scaling much better to large data sets.

This paper brings together and expands work on scalable models for structured data that can be either discrete,
continuous, or a mixture of both \citep{broecheler:uai10, bach:nips12, bach:uai13, bach:aistats15}.
The effectiveness of HL-MRFs and PSL has been demonstrated on many problems,
including information extraction \citep{liu:aaai16} and automatic knowledge base construction \citep{pujara:iswc13},
extracting and evaluating natural-language arguments on the Web \citep{samadi:aaai16},
high-level computer vision \citep{london:sptli13},
drug discovery \citep{fakhraei:tcbb14} and predicting drug-drug interactions \citep{sridhar:bioinformatics16},
natural language semantics \citep{beltagy:acl14, sridhar:acl15, deng:emnlp15, ebrahimi:emnlp16},
automobile-traffic modeling \citep{chen:icdm14},
recommender systems \citep{kouki:recsys15},
information retrieval \citep{alshukaili:iswc16},
and predicting attributes \citep{li:arxiv14} and trust \citep{huang:sbp13, west:tacl14} in social networks.
The ability to easily incorporate latent variables into HL-MRFs and PSL \citep{bach:icml15} has enabled further applications,
including modeling latent topics in text \citep{foulds:icml15},
and predicting student outcomes in massive open online courses (MOOCs) \citep{ramesh:aaai14, ramesh:acl15}.
Researchers have also studied how to make HL-MRFs and PSL even more scalable by developing distributed
implementations \citep{miao:bigdata13, magliacane:krr15}.
That they are already being widely applied indicates HL-MRFs and PSL address an open need in the
machine learning community.

The paper is organized as follows.
In Section~\ref{sec:logic}, we first consider models for structured prediction that are defined using logical clauses.
We unify three different approaches to scalable inference in such models, showing that they all optimize
the same convex objective.
We then generalize this objective in Section~\ref{sec:hlmrf} to define HL-MRFs.
In Section~\ref{sec:psl}, we introduce PSL, specifying the language and giving many examples of common usage.
Next we introduce a scalable message-passing algorithm for MAP inference in
Section~\ref{sec:map} and a number of learning algorithms in Section~\ref{sec:learning}, evaluating them on a range
of tasks.
Finally, in Section~\ref{sec:related}, we discuss related work.

%
%


\section{Unifying Convex Inference for Logic-Based Graphical Models}
\label{sec:logic}

In many structured domains, propositional and first-order logics are useful tools for describing
the intricate dependencies that connect the unknown variables.
However, these domains are usually noisy; dependencies among the variables do not always hold.
To address this, logical semantics can be incorporated into probability distributions to create models
that capture both the structure and the uncertainty in machine learning tasks.
One common way to do this is to use logic to define feature functions in a probabilistic model.
We focus on Markov random fields (MRFs), a popular class of probabilistic graphical models.
Informally, an MRF is a distribution that assigns probability mass using a scoring function that is a
weighted combination of feature functions called potentials.
We will use logical clauses to define these potentials.
We first define MRFs more formally to introduce necessary notation:
\begin{definition}
\label{def:mrf}
Let $\msvarset = (\msvar_1,\dots,\msvar_\nmsvar)$ be a vector of random variables and let
$\ppot = (\pot_1,\dots,\pot_\npot)$ be a vector of potentials where each potential
$\pot_\ipot(\msvarset)$ assigns configurations of the variables a real-valued score.
Also, let $\pparam = (\param_1, \dots, \param_\npot)$ be a vector of real-valued weights.
Then, a {\bf Markov random field} is a probability distribution of the form
\begin{equation}
\prob(\msvarset) \propto \exp \left(\pparam^\top\ppot(\msvarset) \right)~.
\end{equation}
\end{definition}
In an MRF, the potentials should capture how the domain behaves, assigning higher scores to more probable configurations of the variables.
If a modeler does not know how the domain behaves, the potentials should capture how it might
behave, so that a learning algorithm can find weights that lead to accurate predictions.
Logic provides an excellent formalism for defining such potentials in structured and relational domains.

We now introduce some notation to make this logic-based approach more formal.
Consider a set of logical clauses $\msclauseset = \{\msclause_1,\dots,\msclause_\nmsclause\}$, i.e., a knowledge base, where
each clause $\msclause_\imsclause \in \msclauseset$ is a disjunction of literals
and each literal is a variable $\msvar$ or its negation $\neg \msvar$ drawn from the variables
$\msvarset$ such that each variable $\msvar_\imsvar \in \msvarset$ appears at most once in $\msclause_\imsclause$.
Let $\msindicatorset_\imsclause^+$~(resp. $\msindicatorset_\imsclause^-$)~$\subset \{1,\dots,\nmsvar \}$
be the set of indices of the variables that are not negated (resp. negated) in $\msclause_\imsclause$.
Then $\msclause_\imsclause$ can be written as
\begin{equation}
\left( \bigvee_{\imsvar \in \msindicatorset_\imsclause^+} \msvar_\imsvar \right)
\bigvee
\left( \bigvee_{\imsvar \in \msindicatorset_\imsclause^-} \neg \msvar_\imsvar \right)~.
\end{equation}

Logical clauses of this form are expressive because they can be viewed equivalently as implications from conditions to consequences:
\begin{equation}
\bigwedge_{\imsvar \in \msindicatorset_\imsclause^-} \msvar_\iy
\implies
\bigvee_{\imsvar \in \msindicatorset_\imsclause^+} \msvar_\imsvar~.
\end{equation}
This ``if-then'' reasoning is intuitive and can describe many dependencies in structured data.

Assuming we have a logical knowledge base $\msclauseset$ describing a structured domain,
we can embed it in an MRF by defining each potential $\pot_\ipot$ using a corresponding clause $\msclause_\imsclause$.
If an assignment to the variables $\msvarset$ satisfies $\msclause_\imsclause$, then we let
$\pot_\ipot(\msvarset)$ equal 1, and we let it equal 0 otherwise.
For our subsequent analysis we assume $\param_\ipot \geq 0$ $(\forall \ipot = 1,\dots,\npot)$.
The resulting MRF preserves the structured dependencies described in $\msclauseset$ but enables
much more flexible modeling.
Clauses no longer must always hold, and the model can express uncertainty over different possible worlds.
The weights express how strongly the model expects each corresponding clause to hold; the higher the weight,
the more probable that it is true according to the model.

This notion of embedding weighted, logical knowledge bases in MRFs is an appealing one.
For example, Markov logic \citep{richardson:ml06} is a popular formalism that induces MRFs from weighted
first-order knowledge bases.
Given a data set, the first-order clauses are grounded using the constants in the data to create the set
of propositional clauses $\msclauseset$.
Each propositional clause has the weight of the first-order clause from which it was grounded.
In this way, a weighted, first-order knowledge base can compactly specify an entire family of MRFs
for a structured machine-learning task.

Although we now have a method for easily defining rich, structured models for a wide range of problems,
there is a new challenge: finding a most probable assignment to the variables, i.e., MAP inference, is NP-hard \citep{shimony:ai94, garey:tcs76}.
This means that (unless P=NP) our only hope for performing tractable inference is to perform it approximately.
Observe that MAP inference for an MRF defined by $\msclauseset$ is the integer linear program
\begin{equation}
\label{eq:maxsat}
\begin{aligned}
\argmax_{\msvarset \in \{0,1\}^\nmsvar} \prob(\msvarset)
&~\equiv~
\argmax_{\msvarset \in \{0,1\}^\nmsvar}~\pparam^\top \ppot(\msvarset) \\
&\equiv~
\argmax_{\msvarset \in \{0,1\}^\nmsvar}
\sum_{\msclause_\imsclause \in \msclauseset} \param_\imsclause
\min \left\{
\sum_{\imsvar \in \msindicatorset_\imsclause^+} \msvar_\imsvar
+ \sum_{\imsvar \in \msindicatorset_\imsclause^-} (1- \msvar_\imsvar)
, 1 \right\}~.
\end{aligned}
\end{equation}
While this program is intractable, it does admit convex programming relaxations.

In this section, we show how convex programming can be used to perform tractable inference in
MRFs defined by weighted knowledge bases.
We first discuss in Section~\ref{sec:maxsat} an approach developed by \citet{goemans:discmath94} that
views MAP inference as an instance of the classic MAX SAT problem and relaxes it to a convex program
from that perspective.
This approach has the advantage of providing strong guarantees on the quality of the discrete solutions it obtains.
However, it has the disadvantage that general-purpose convex programming toolkits do not scale well
to relaxed MAP inference for large graphical models \citep{yanover:jmlr06}.
In Section~\ref{sec:lcr} we then discuss a seemingly distinct approach, local consistency relaxation,
with complementary advantages and disadvantages: it offers highly scalable message-passing algorithms but comes with no quality guarantees.
We then unite these approaches by proving that they solve equivalent optimization problems with identical solutions.
Then, in Section~\ref{sec:fuzzy}, we show that the unified inference objective is also equivalent to exact
MAP inference if the knowledge base $\msclauseset$ is interpreted using
{\L}ukasiewicz logic, an infinite-valued logic for reasoning about naturally continuous quantities such
as similarity, vague or fuzzy concepts, and real-valued data.

That these three interpretations all lead to the same inference objective---whether reasoning
about discrete or continuous information---is useful.
To the best of our knowledge, we are the first to show their equivalence.
This equivalence indicates that the same modeling formalism, inference algorithms, and learning algorithms can be
used to reason scalably and accurately about both discrete and continuous information in
structured domains.
We generalize the unified inference objective in Section~\ref{sec:derivation} to define hinge-loss MRFs,
and in the rest of the paper we develop a probabilistic programming language and algorithms that
realize the goal of a scalable and accurate framework for structured data, both discrete and continuous.

\subsection{MAX SAT Relaxation}
\label{sec:maxsat}

One approach to approximating objective~(\ref{eq:maxsat}) is to use relaxation techniques developed in
the randomized algorithms community for the MAX SAT problem.
Formally, the MAX SAT problem is to find a Boolean assignment to a set of variables that maximizes the
total weight of satisfied clauses in a knowledge base composed of disjunctive clauses annotated
with nonnegative weights.
In other words, objective~(\ref{eq:maxsat}) is an instance of MAX SAT.
Randomized approximation algorithms can be constructed for MAX SAT by independently rounding each
Boolean variable $\msvar_\imsvar$ to true with probability $\msprob_\imsvar$.
Then, the expected weighted satisfaction $\msexpweight_\imsclause$ of a clause $\msclause_\imsclause$ is
\begin{equation}
\msexpweight_\imsclause
= \msweight_\imsclause \left(1 - \prod_{\imsvar \in \msindicatorset_\imsclause^+}
(1-\msprob_\imsvar) \prod_{\imsvar \in \msindicatorset_\imsclause^-}\msprob_\imsvar \right)~,
\end{equation}
also known as a (weighted) noisy-or function, and the expected total score $\msexptotalweight$ is
\begin{equation}
\label{eq:expected}
\msexptotalweight = \sum_{\msclause_\imsclause \in \msclauseset}
\msweight_\imsclause \left(1 - \prod_{\imsvar \in \msindicatorset_\imsclause^+}
(1-\msprob_\imsvar) \prod_{\imsvar \in \msindicatorset_\imsclause^-}\msprob_\imsvar \right)~.
\end{equation}
Optimizing $\msexptotalweight$ with respect to the rounding probabilities would give the exact
MAX SAT solution, so this randomized approach has not made the problem any easier yet, but
\citet{goemans:discmath94} showed how to bound $\msexptotalweight$ below with a tractable linear program.

To approximately optimize $\msexptotalweight$, associate with each Boolean variable $\msvar_\imsvar$
a corresponding continuous variable $\mssoftvar_\imsvar$ with domain $[0,1]$.
Then let $\mssoftvarset^\star$ be the optimum of the linear program
\begin{equation}
\label{eq:lpmaxsat}
\argmax_{\mssoftvarset \in [0,1]^\nmsvar}
\sum_{\msclause_\imsclause \in \msclauseset}
\msweight_\imsclause
\min \left\{ \sum_{\imsvar \in \msindicatorset_\imsclause^+} \mssoftvar_\imsvar
+ \sum_{\imsvar \in \msindicatorset_\imsclause^-}(1-\mssoftvar_\imsvar), 1 \right\}~.
\end{equation}
Observe that objectives~(\ref{eq:maxsat}) and~(\ref{eq:lpmaxsat}) are of the same form, except that
the variables are relaxed to the unit hypercube in objective~(\ref{eq:lpmaxsat}).
\citet{goemans:discmath94} proved that if $\msprob_\imsvar$ is set to $\mssoftvar_\imsvar^\star$ for all
$\imsvar$, then $\msexptotalweight \geq .632~\msilpopt$,
where $\msilpopt$ is the optimal total weight for the MAX SAT problem.
If each $\msprob_\imsvar$ is set using any function in a special class, then this lower bound improves
to a .75 approximation.
One simple example of such a function is
\begin{equation}
\label{eq:msprobfunc}
\msprob_\imsvar = \frac{1}{2} \mssoftvar_\imsvar^\star + \frac{1}{4}~.
\end{equation}
In this way, objective~(\ref{eq:lpmaxsat}) leads to an expected .75 approximation of the MAX SAT solution.

The following method of conditional probabilities \citep{alon:pmbook08} can find a single Boolean assignment that
achieves at least the expected score from a set of rounding probabilities, and therefore at least .75 of the
MAX SAT solution when objective~(\ref{eq:lpmaxsat}) and function~(\ref{eq:msprobfunc}) are used to obtain them.
Each variable $\msvar_\imsvar$ is greedily set to the value that maximizes the expected weight over
the unassigned variables, conditioned on either possible value of $\msvar_\imsvar$ and the previously
assigned variables.
This greedy maximization can be applied quickly because, in many models, variables only participate in
a small fraction of the clauses, making the change in expectation quick to compute for each variable.
Specifically, referring to the definition of $\hat{W} $ (\ref{eq:expected}), the assignment to $\msvar_\imsvar$
only needs to maximize over the clauses $\msclause_\imsclause$ in which $\msvar_\imsvar$ participates, i.e.,
$\imsvar \in \msindicatorset_\imsclause^+ \cup \msindicatorset_\imsclause^-$, which is usually a small set.

This approximation is powerful because it is a tractable linear program that comes with strong guarantees
on solution quality.
However, even though it is tractable, general-purpose convex optimization toolkits do not scale well
to large MAP problems.
In the following subsection, we unify this approximation with a complementary one developed
in the probabilistic graphical models community.

\subsection{Local Consistency Relaxation}
\label{sec:lcr}

Another approach to approximating objective~(\ref{eq:maxsat}) is to apply a relaxation developed for
Markov random fields called local consistency relaxation \citep{wainwright:book08}.
This approach starts by viewing MAP inference as an equivalent optimization over marginal
probabilities.\footnote{This treatment is for discrete MRFs. We have omitted a discussion of continuous MRFs for conciseness.}
For each $\pot_\ipot \in \ppot$, let $\ppmlpfactorvar_\ipot$ be a marginal distribution over joint
assignments $\xx_\ipot$.
For example, $\pmlpfactorvar_\ipot(\xx_\ipot)$ is the probability that the subset of variables associated with potential $\pot_\ipot$ is in a particular joint state $\xx_\ipot$.
Also, let $\x_\ipot(\iy)$ denote the setting of the variable with index $\iy$ in the state $\xx_\ipot$.

With this variational formulation, inference can be relaxed
to an optimization over the {\em first-order local polytope} $\localpolytope$.
Let $\ppmlpvar = (\pmlpvar_1, \dots, \pmlpvar_\ny)$ be a vector of probability distributions,
where $\pmlpvar_\iy(\istate)$ is the marginal probability that $\x_i$ is in state $\istate$.
The first-order local polytope is
\begin{equation}
\localpolytope \triangleq \left\{(\ppmlpfactorvar,\ppmlpvar) \geq {\boldsymbol 0}~\middle|~
\begin{array}{lr}
	\sum_{\msvarset_\imsclause| \msvar_\imsclause(\imsvar) = \istate}
		\pmlpfactorvar_\imsclause(\msvarset_\imsclause) = \pmlpvar_\imsvar(\istate)
		& \forall \iy, \ipot, \istate \\[0.5em]
	\sum_{\msvarset_\imsclause} \pmlpfactorvar_\imsclause(\msvarset_\imsclause) = 1
		& \forall \ipot \\[0.5em]
	\sum_{\istate = 0}^{\nstate_\iy-1} \pmlpvar_\iy(\istate) = 1 & \forall \iy
\end{array}
\right\},
\end{equation}
which constrains each marginal distribution $\ppmlpfactorvar_\ipot$ over joint states $\xx_\ipot$
to be consistent only with the marginal distributions $\ppmlpvar$ over individual variables that 
participate in the potential $\pot_\ipot$.

MAP inference can then be approximated with the {\em first-order local consistency relaxation}:
\begin{equation}
\label{eq:localvariational}
\argmax_{(\ppmlpfactorvar, \ppmlpvar) \in \localpolytope}~\sum_{\ipot=1}^\npot \param_\ipot
\sum_{\xx_\ipot} \pmlpfactorvar_\ipot(\xx_\ipot)~\pot_\ipot(\xx_\ipot),
\end{equation}
which is an upper bound on the true MAP objective.
Much work has focused on solving the first-order local consistency relaxation for large-scale MRFs,
which we discuss further in Section~\ref{sec:related}.
These algorithms are appealing because they are well-suited to the sparse dependency structures
common in MRFs, so they can scale to large problems.
However, in general, the solutions can be fractional, and there are no guarantees on the approximation
quality of a tractable discretization of these fractional solutions.

We show that for MRFs with potentials defined by $\msclauseset$ and nonnegative weights,
local consistency relaxation is equivalent to MAX SAT relaxation.
\begin{theorem}
\label{thm:equivalence}
For an MRF with potentials corresponding to disjunctive logical clauses and associated nonnegative
weights, the first-order local consistency relaxation of MAP inference is equivalent to the
MAX SAT relaxation of \citet{goemans:discmath94}.
Specifically, any partial optimum $\ppmlpvar^\star$ of objective~(\ref{eq:localvariational}) is an optimum
$\mssoftvarset^\star$ of objective~(\ref{eq:lpmaxsat}), and vice versa.
\end{theorem}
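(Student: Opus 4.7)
My plan is to reduce objective~(\ref{eq:localvariational}) to a function of the per-variable marginals $\ppmlpvar$ alone, by maximizing out each per-clause joint marginal $\ppmlpfactorvar_\imsclause$ in closed form, and then to recognize the resulting reduced objective as exactly~(\ref{eq:lpmaxsat}) under the identification $\mssoftvar_\imsvar = \pmlpvar_\imsvar(1)$. The first simplification is immediate: because $\pot_\imsclause$ is the indicator that $\msclause_\imsclause$ holds, and a disjunctive clause is falsified by exactly one joint assignment $\msvarset_\imsclause^{\mathrm{bad}}$ (each positive literal set to $0$ and each negated literal set to $1$), the inner sum $\sum_{\msvarset_\imsclause} \pmlpfactorvar_\imsclause(\msvarset_\imsclause)\,\pot_\imsclause(\msvarset_\imsclause)$ equals $1 - \pmlpfactorvar_\imsclause(\msvarset_\imsclause^{\mathrm{bad}})$. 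Since the weights are nonnegative and the local polytope couples each $\ppmlpfactorvar_\imsclause$ only to the singleton marginals $\ppmlpvar$ (not to other clauses' joint marginals), for any fixed $\ppmlpvar$ the LCR objective is maximized by independently minimizing $\pmlpfactorvar_\imsclause(\msvarset_\imsclause^{\mathrm{bad}})$ over joint distributions consistent with those singletons.

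The heart of the proof is then a single-clause calculation: writing $S_\imsclause := \sum_{\imsvar \in \msindicatorset_\imsclause^+} \pmlpvar_\imsvar(1) + \sum_{\imsvar \in \msindicatorset_\imsclause^-} \pmlpvar_\imsvar(0)$, I will show that this minimum equals $\max\{0,\, 1 - S_\imsclause\}$. The lower bound is a one-line union-bound: $\pmlpfactorvar_\imsclause(\msvarset_\imsclause^{\mathrm{bad}})$ is the probability that every literal of $\msclause_\imsclause$ is false, and since the literal-true probabilities sum to $S_\imsclause$, the intersection has probability at least $\max\{0,\, 1 - S_\imsclause\}$. The matching upper bound demands an explicit consistent $\ppmlpfactorvar_\imsclause$ achieving it. When $S_\imsclause \leq 1$, I put mass $1 - S_\imsclause$ on $\msvarset_\imsclause^{\mathrm{bad}}$ and, for each literal $\ell$, mass equal to $\ell$'s true-probability on the unique assignment in which $\ell$ alone is true; the singleton marginals then come out correctly by inspection. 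When $S_\imsclause \geq 1$, I place intervals whose lengths equal the literal-true probabilities inside $[0,1]$, arranged with overlap so that their union covers all of $[0,1]$, and use a latent $U \sim \mathrm{Unif}[0,1]$ together with the rule ``literal $\ell$ is true iff $U$ lies in $\ell$'s interval'' to define a joint distribution that has the prescribed singleton marginals and assigns zero mass to $\msvarset_\imsclause^{\mathrm{bad}}$.

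Substituting $\max\{0,\, 1 - S_\imsclause\}$ back produces the reduced objective $\sum_\imsclause \msweight_\imsclause \min\{1,\, S_\imsclause\}$, which, after rewriting $\pmlpvar_\imsvar(0) = 1 - \pmlpvar_\imsvar(1)$, coincides term by term with objective~(\ref{eq:lpmaxsat}) under $\mssoftvar_\imsvar = \pmlpvar_\imsvar(1)$. Both directions of the claimed equivalence then follow at once: the $\ppmlpvar^\star$-component of any LCR optimum solves the MAX SAT LP, and conversely any MAX SAT LP optimum $\mssoftvarset^\star$ extends to an LCR optimum by setting $\pmlpvar_\imsvar(1) = \mssoftvar_\imsvar^\star$ and taking each $\ppmlpfactorvar_\imsclause$ to be the achieving distribution constructed above. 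The main obstacle is the $S_\imsclause \geq 1$ case of the inner minimization: one must exhibit a joint Bernoulli distribution with prescribed one-variable marginals that avoids one specified corner of the hypercube, and while this is geometrically transparent for two literals, for general arity the overlapping-interval construction is what I expect to need to supply a clean closed-form witness.
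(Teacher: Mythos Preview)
Your proposal is correct and follows the same high-level architecture as the paper's proof: reformulate the local consistency relaxation as an outer optimization over the singleton pseudomarginals $\ppmlpvar$ with independent inner problems over each $\ppmlpfactorvar_\imsclause$, solve those inner problems in closed form to obtain $\msweight_\imsclause \min\{S_\imsclause, 1\}$, and substitute back to recover objective~(\ref{eq:lpmaxsat}).

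Where you diverge is in how the inner problem is solved. The paper attacks each inner LP via its KKT conditions: it shows that when $S_\imsclause < 1$ any optimizer places mass only on the all-false state and the single-literal-flip states (Lemma~\ref{le:objunsat}), and then handles $S_\imsclause \geq 1$ non-constructively by contradiction (Lemma~\ref{le:sat}), never exhibiting a feasible $\ppmlpfactorvar_\imsclause$ with $\pmlpfactorvar_\imsclause(\msvarset_\imsclause^\false) = 0$. You instead give a two-sided probabilistic argument: a union-bound lower bound on $\pmlpfactorvar_\imsclause(\msvarset_\imsclause^{\mathrm{bad}})$, together with explicit achieving distributions in both regimes---the single-flip construction when $S_\imsclause \leq 1$ (which coincides with what the paper's KKT analysis derives as the optimizer), and an interval-covering coupling via a latent uniform variable when $S_\imsclause \geq 1$. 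Your route is more elementary and fully constructive, which also makes the converse direction of the theorem (extending a MAX~SAT optimum $\mssoftvarset^\star$ to a full LCR optimum) immediate; the paper's KKT route is heavier machinery but is more mechanical and would transfer more directly to potentials that are not $\{0,1\}$-valued clause indicators.
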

We prove Theorem~\ref{thm:equivalence} in Appendix~\ref{app:equivalence}.
Our proof analyzes the local consistency relaxation to derive an equivalent, more compact optimization over
 only the variable pseudomarginals $\ppmlpvar$ that is identical to the MAX SAT relaxation.
Theorem~\ref{thm:equivalence}  is significant because it shows that the rounding guarantees
of MAX SAT relaxation also apply to local consistency relaxation, and the scalable message-passing
algorithms developed for local consistency relaxation also apply to MAX SAT relaxation.

\subsection{{\L}ukasiewicz Logic}
\label{sec:fuzzy}

The previous two subsections showed that the same convex program can approximate MAP
inference in discrete, logic-based models, whether viewed from the perspective of randomized algorithms or variational methods.
In this subsection, we show that this convex program can also be used to reason about naturally continuous information,
such as similarity, vague or fuzzy concepts, and real-valued data.
Instead of interpreting the clauses $\msclauseset$ using Boolean logic, we can interpret
them using {\L}ukasiewicz logic \citep{klir:book95}, which extends  Boolean logic to infinite-valued logic in
which the propositions $\msvarset$ can take truth values in the continuous interval $[0,1]$.
Extending truth values to a continuous domain enables them to represent concepts that are vague, in
the sense that they are often neither completely true nor completely false.
For example, the propositions that a sensor value is high, two entities are similar, or a protein is highly
expressed can all be captured in a more nuanced manner in {\L}ukasiewicz logic.
We can also use the now continuous valued $\msvarset$ to represent quantities that are naturally
continuous (scaled to [0,1]), such as actual sensor values, similarity scores, and protein expression levels.
The ability to reason about continuous values is valuable, as many important applications are not
entirely discrete.

The extension to continuous values requires a corresponding extended interpretation of the logical operators
$\wedge$ (conjunction), $\vee$ (disjunction), and $\neg$ (negation).
The {\L}ukasiewicz t-norm and t-co-norm are $\wedge$ and $\vee$ operators that correspond to the Boolean logic operators
for integer inputs (along with the negation operator $\neg$):
\begin{align}
\msvar_1 \wedge \msvar_2 &= \max \left\{ \msvar_1 + \msvar_2 - 1, 0 \right\} \\
\msvar_1 \vee \msvar_2 &= \min \left\{ \msvar_1 + \msvar_2, 1 \right\} \\
\neg \msvar &= 1 - \msvar~.
\end{align}
The analogous MAX SAT problem for {\L}ukasiewicz logic is therefore
\begin{equation}
\argmax_{\msvarset \in [0,1]^\nmsvar} \sum_{\msclause_\imsclause \in \msclauseset} \msweight_\imsclause
\min \left\{ \sum_{\imsvar \in \msindicatorset_\imsclause^+}\msvar_\imsvar + \sum_{\imsvar \in \msindicatorset_\imsclause^-}(1-\msvar_\imsvar), 1 \right\}~,
\end{equation}
which is identical in form to the relaxed MAX SAT objective~(\ref{eq:lpmaxsat}).
Therefore, if an MRF is defined over continuous variables with domain $[0,1]^\ny$ and the logical
knowledge base $\msclauseset$ defining the potentials is interpreted using {\L}ukasiewicz logic, then
\emph{exact} MAP inference is identical to finding the optimum using the unified, relaxed inference
objective derived for Boolean logic in the previous two subsections.
This result shows the equivalence of all three approaches: MAX SAT relaxation, local consistency relaxation,
and MAX SAT using {\L}ukasiewicz logic.


\section{Hinge-Loss Markov Random Fields}
\label{sec:hlmrf}

We have shown that a specific family of convex programs can be used to reason scalably and accurately about both
discrete and continuous information.
In this section, we generalize this family to define \emph{hinge-loss Markov random fields}
(HL-MRFs), a new kind of probabilistic graphical model.
HL-MRFs retain the convexity and expressivity of convex programs discussed in Section~\ref{sec:logic}, and additionally support an even richer space of dependencies.

To begin, we define HL-MRFs as density functions over continuous variables $\yy = (\y_1,\dots,\y_\ny)$
with joint domain $[0,1]^\ny$.
These variables have different possible interpretations depending on the application.
Since we are generalizing the interpretations explored in Section~\ref{sec:logic}, HL-MRF MAP states can be viewed
as rounding probabilities or pseudomarginals, or they can represent naturally continuous information.
More generally, they can be viewed simply as degrees of belief, confidences, or rankings of possible states;
and they can describe discrete, continuous, or mixed domains.
The application domain typically determines which interpretation is most appropriate.
The formalisms and algorithms described in the rest of this paper are general with respect to such interpretations.

\subsection{Generalized Inference Objective}
\label{sec:derivation}

To define HL-MRFs, we will first generalize the unified inference objective of Section~\ref{sec:logic} in several ways,
which we first restate in terms of the HL-MRF variables $\yy$:
\begin{equation}
\argmax_{\yy \in [0,1]^\nmsvar}
\sum_{\msclause_\imsclause \in \msclauseset}
\msweight_\imsclause
\min \left\{ \sum_{\imsvar \in \msindicatorset_\imsclause^+} \y_\imsvar
+ \sum_{\imsvar \in \msindicatorset_\imsclause^-}(1-\y_\imsvar), 1 \right\}~.
\end{equation}
For now, we are still assuming that the objective terms are defined using a weighted knowledge base $\msclauseset$,
but we will quickly drop this requirement.
To do so, we examine one term in isolation.
Observe that the maximum value of any unweighted term is 1, which is achieved
when a linear function of the variables is at least 1.
We say that the term is \emph{satisfied} whenever this occurs.
When a term is unsatisfied, we can refer to its \emph{distance to satisfaction}, which is how far it is from achieving its maximum value.
Also observe that we can rewrite the optimization explicitly in terms of distances to satisfaction:
\begin{equation}
\argmin_{\yy \in [0,1]^\nmsvar}
\sum_{\msclause_\imsclause \in \msclauseset}
\msweight_\imsclause
\max \left\{ 1 - \sum_{\imsvar \in \msindicatorset_\imsclause^+} \y_\imsvar
- \sum_{\imsvar \in \msindicatorset_\imsclause^-}(1-\y_\imsvar), 0 \right\}~,
\end{equation}
so that the objective is equivalently to minimize the total weighted distance to satisfaction.
Each unweighted objective term now measures how far the linear constraint
\begin{equation}
1 - \sum_{\imsvar \in \msindicatorset_\imsclause^+} \y_\imsvar
- \sum_{\imsvar \in \msindicatorset_\imsclause^-}(1-\y_\imsvar) \leq 0
\end{equation}
is from being satisfied.

\subsubsection{Relaxed Linear Constraints}

With this view of each term as a relaxed linear constraint, we can easily generalize them to arbitrary linear constraints.
We no longer require that the inference objective be defined using only logical clauses, and instead each term can
be defined using any function $\linfun_\ipot(\yy)$ that is linear in $\yy$.
These functions can capture more general dependencies, such as beliefs about the range of values a variable can take and arithmetic relationships among variables.

The new inference objective is
\begin{equation}
\argmin_{\yy \in [0,1]^\nmsvar}
\sum_{\ipot = 1}^\npot
\msweight_\imsclause
\max \left\{ \linfun_\ipot(\yy), 0 \right\}~.
\end{equation}
In this form, each term represents the distance to satisfaction of a linear constraint $\linfun_\ipot(\yy) \leq 0$.
That constraint could be defined using logical clauses as discussed above, or it could be defined using
other knowledge about the domain.
The weight $\param_\ipot$ indicates how important it is to satisfy a constraint relative to others by scaling
the distance to satisfaction.
The higher the weight, the more distance to satisfaction is penalized.
Additionally, two relaxed inequality constraints, $\linfun_\ipot(\yy) \leq 0$ and
$-\linfun_\ipot(\yy) \leq 0$, can be combined to represent a relaxed equality constraint
$\linfun_\ipot(\yy) = 0$.

\subsubsection{Hard Linear Constraints}

Now that our inference objective admits arbitrary relaxed linear constraints, it is natural to also allow hard
constraints that must be satisfied at all times.
Hard constraints are important modeling tools.
They enable groups of variables to represent mutually exclusive possibilities, such as a multinomial or categorical variable, and functional or partial functional relationships.
Hard  constraints can also represent background knowledge about the domain, restricting
the domain to regions that are feasible in the real world.
Additionally, they can encode more complex model components such as defining a random variable as
an aggregate over other unobserved variables, which we discuss further in Section~\ref{sec:aggregates}.

We can think of including hard constraints as allowing a weight $\param_\ipot$ to take an infinite value.
Again, two inequality constraints can be combined to represent an equality constraint.
However, when we introduce an inference algorithm for HL-MRFs in Section~\ref{sec:map}, it will be
useful to treat hard constraints separately from relaxed ones, and further, treat hard inequality
constraints separately from hard equality constraints.
Therefore, in the definition of HL-MRFs, we will define these three components separately.

\subsubsection{Generalized Hinge-Loss Functions}

The objective terms measuring each constraint's distance to satisfaction are hinge losses.
There is a flat region, on which the distance to satisfaction is 0, and an angled region, on which the
distance to satisfaction grows linearly away from the hyperplane $\linfun_\ipot(\yy) = 0$.
This loss function is useful---as we discuss in the previous section, it is a bound
on the expected loss in the discrete setting, among other things---but it is not appropriate
for all modeling situations.

A piecewise-linear loss function makes MAP inference ``winner take all,'' in the sense that it is preferable
to fully satisfy the most highly weighted objective terms completely before reducing the
distance to satisfaction of terms with lower weights.
For example, consider the following optimization problem:
\begin{equation}
\argmin_{\y_1 \in [0,1]}~\param_1 \max \left\{ \y_1, 0 \right\} + \param_2 \max \left\{ 1 - \y_1, 0 \right\}~.
\end{equation}
If $\param_1 > \param_2 \geq 0$, then the optimizer is $\y_1 = 0$ because the term that prefers $\y_1 = 0$ overrules
the term that prefers $\y_1 = 1$.
The result does not indicate any ambiguity or uncertainty, but if the two objective terms
are potentials in a probabilistic model, it is sometimes preferable that the result
reflect the conflicting preferences.
We can change the inference problem so that it smoothly trades off satisfying conflicting
objective terms by squaring the hinge losses.
Observe that in the modified problem
\begin{equation}
\argmin_{\y_1 \in [0,1]}~\param_1 \left( \max \left\{ \y_1, 0 \right\} \right)^2
+ \param_2 \left( \max \left\{ 1 - \y_1, 0 \right\} \right)^2
\end{equation}
the optimizer is now $\y_1 = \frac{\param_2}{\param_1 + \param_2}$, reflecting the relative influence of the two loss functions.

Another advantage of squared hinge-loss functions is that they can behave more intuitively in the
presence of hard constraints.
Consider the problem
\begin{equation}
\begin{aligned}
& \argmin_{(\y_1,\y_2) \in [0,1]^2} &\max \left\{ 0.9 - \y_1, 0 \right\} + \max \left\{ 0.6 - \y_2, 0 \right\} \\
&\phantom{a} & \\
&\text{such that} &\y_1 + \y_2 \leq 1~.
\end{aligned}
\end{equation}
The first term prefers $\y_1 \geq 0.9$, the second term prefers $\y_2 \geq 0.6$, and the constraint
requires that $\y_1$ and $\y_2$ are mutually exclusive.
Such problems are very common and arise when conflicting evidence of different strengths
support two mutually exclusive possibilities.
The evidence values 0.9 and 0.6 could come from many sources, including base models trained
to make independent predictions on individual random variables, domain-specialized similarity
functions, or sensor readings.
For this problem, any solution $\y_1 \in [0.4, 0.9]$ and $\y_2 = 1 - \y_1$ is an optimizer.
This solution set includes counterintuitive optimizers like $\y_1 = 0.4$ and $\y_2 = 0.6$, even though
the evidence supporting $\y_1$ is stronger.
Again, squared hinge losses ensure the optimizers better reflect the relative strength of evidence.
For the problem
\begin{equation}
\begin{aligned}
& \argmin_{(\y_1,\y_2) \in [0,1]^2}
&\left( \max \left\{ 0.9 - \y_1, 0 \right\} \right)^2 + \left( \max \left\{ 0.6 - \y_2, 0 \right\} \right)^2 \\
&\phantom{a} & \\
&\text{such that}
&\y_1 + \y_2 \leq 1~,
\end{aligned}
\end{equation}
the only optimizer is $\y_1 = 0.65$ and $\y_2 = 0.35$, which is a more informative solution.

We therefore complete our generalized inference objective by allowing either hinge-loss or
squared hinge-loss functions.
Users of HL-MRFs have the choice of either one for each potential, depending on which is appropriate for their task.

\subsection{Definition}

We can now formally state the full definition of HL-MRFs.
They are defined so that a MAP state is a solution to the generalized inference objective proposed in the previous subsection.
We state the definition in a conditional form for later convenience, but this definition is fully general since
the vector of conditioning variables may be empty.

\begin{definition}
\label{def:energy}
Let $\yy = (\y_1, \dots, \y_{\ny})$ be a vector of $\ny$ variables
and $\xx = (\x_1, \dots, \x_{\nx})$ a vector of $\nx$ variables
with joint domain $\Dom = [0,1]^{\ny+\nx}$.
Let ${\boldsymbol \pot} = (\pot_1, \dots, \pot_\npot)$ be a vector of $\npot$ continuous potentials of the form
\begin{equation}
\pot_\ipot(\yy, \xx) = \left(\max{\{\linfun_\ipot(\yy, \xx), 0\}}\right)^{\p_{\ipot}}
\end{equation}
where $\linfun_\ipot$ is a linear function of $\yy$ and $\xx$ and $\p_{\ipot} \in \{1,2\}$.
Let ${\boldsymbol \constr} = (\constr_1, \ldots, \constr_\nconstr)$ be a vector of $\nconstr$ linear constraint functions associated with index sets denoting equality constraints $\Equality$ and inequality constraints $\Inequality$, which define the feasible set
\begin{equation}
\FeasibleDom = \left\{(\yy, \xx) \in \Dom \middle| 
\begin{array}{c}
\constr_\iconstr(\yy, \xx) = 0, \forall \iconstr \in \Equality\\
\constr_\iconstr(\yy, \xx) \leq 0, \forall \iconstr \in \Inequality
\end{array}
\right\}~.
\end{equation}
For $(\yy, \xx) \in \Dom$, given a vector of $\npot$ nonnegative free parameters, i.e., weights,
$\pparam = (\param_1, \dots, \param_\npot)$,
a {\bf constrained hinge-loss energy function} $\energy$ is defined as
\begin{equation}
\energy(\yy,\xx) = \sum_{\ipot=1}^\npot \param_\ipot \phi_\ipot(\yy, \xx)~.
\end{equation}
\end{definition}

We now define HL-MRFs by placing a probability density over the inputs to a constrained hinge-loss energy function.
Note that we negate the hinge-loss energy function so that states with lower energy are more probable, in contrast
with Definition~\ref{def:mrf}.
This change is made for later notational convenience.
\begin{definition}
\label{def:hl-mrf}
A {\bf hinge-loss Markov random field} $\prob$ over random variables $\yy$
and conditioned on random variables $\xx$ is a probability density defined as follows:
if $(\yy, \xx) \notin \FeasibleDom$, then $\prob(\yy | \xx) = 0$;
if $(\yy, \xx) \in \FeasibleDom$, then
\begin{equation}
\prob(\yy | \xx) = \frac{1}{\partition(\pparam, \xx)} \exp \left( - \energy(\yy,\xx) \right)
\end{equation}
where
\begin{equation}
\partition(\pparam, \xx) = \int_{\yy | (\yy,\xx) \in \FeasibleDom} \exp \left( - \energy(\yy,\xx) \right)~d\yy~.
\end{equation}
\end{definition}

In the rest of this paper, we will explore how to use HL-MRFs to solve a wide range of structured machine
learning problems.
We first introduce a probabilistic programming language that makes HL-MRFs easy to define
for large, rich domains.


\section{Probabilistic Soft Logic}
\label{sec:psl}

In this section we introduce a general-purpose probabilistic programming language,
{\em probabilistic soft logic} (PSL).
PSL allows HL-MRFs to be easily applied to a broad range of structured machine learning problems
by defining \emph{templates} for potentials and constraints.
In models for structured data, there are very often repeated patterns of probabilistic dependencies.
A few of the many examples include the strength of ties between similar people in social networks,
the preference for triadic closure when predicting transitive relationships,
and the ``exactly one active'' constraints on functional relationships.
Often, to make graphical models both easy to define and able to generalize across different
data sets, these repeated dependencies are defined using templates.
Each template defines an abstract dependency, such as the form of a potential function or
constraint, along with any necessary
parameters, such as the weight of the potential, each of which has a single value across all dependencies defined by that template.
Given input data, an undirected graphical model is constructed from a set of templates by first identifying
the random
variables in the data and then ``grounding out'' each template by introducing a potential or constraint into
the graphical model for each subset of random variables to which the template applies.

A PSL program is written in a declarative, first-order syntax and defines a class of HL-MRFs that are parameterized by
the input data.
PSL provides a natural interface to represent hinge-loss potential templates using two types of rules:
logical rules and arithmetic rules.
Logical rules are based on the mapping from logical clauses to hinge-loss potentials introduced in Section~\ref{sec:logic}.
Arithmetic rules provide additional syntax for defining an even wider range of hinge-loss potentials and hard constraints.

\subsection{Definition}
\label{sec:def_psl}

In this subsection we define PSL.
Our definition covers the essential functionality that should be supported by all implementations,
but many extensions are possible.
The PSL syntax we describe can capture a wide range of HL-MRFs, but new settings and scenarios
could motivate the development of additional syntax to make the construction of different kinds of
HL-MRFs more convenient.

\subsubsection{Preliminaries}
We begin with a high-level definition of PSL programs.
\begin{definition}
A {\bf PSL program} is a set of rules, each of which is a template for hinge-loss potentials or hard linear constraints.
When grounded over a base of ground atoms, a PSL program induces a HL-MRF conditioned on any
specified observations.
\end{definition}
In the PSL syntax, many components are named using {\em identifiers}, which are strings that begin
with a letter (from the set $\left\{\texttt{A},\dots,\texttt{Z}, \texttt{a}, \dots, \texttt{z} \right\}$),
followed by zero or more letters, numeric digits, or underscores.

PSL programs are grounded out over data, so the universe over which to ground must be defined.
\begin{definition}
A {\bf constant} is a string that denotes an element in the universe over which a PSL program is grounded.
\end{definition}
Constants are the elements in a universe of discourse.
They can be entities or attributes.
For example, the constant \texttt{"person1"} can denote a person, the constant \texttt{"Adam"}
can denote a person's name, and the constant \texttt{"30"} can denote a person's age.
In PSL programs, constants are written as strings in double or single quotes.
Constants use backslashes as escape characters, so they can be used to encode quotes within constants.
It is assumed that constants are unambiguous, i.e., different constants refer to different
entities and attributes.\footnote{Note that ambiguous references to underlying entities can be modeled
by using different constants for different references and representing whether they refer to the same
underlying entity as a predicate.}
Groups of constants can be represented using variables.
\begin{definition}
A {\bf variable} is an identifier for which constants can be substituted.
\end{definition}
Variables and constants are the arguments to logical predicates.
Together, they are generically referred to as terms.
\begin{definition}
A {\bf term} is either a constant or a variable.
\end{definition}
Terms are connected by relationships called predicates.
\begin{definition}
A {\bf predicate} is a relation defined by a unique identifier and a positive integer called its arity,
which denotes the number of terms it accepts as arguments.
Every predicate in a PSL program must have a unique identifier as its name.
\end{definition}
We refer to a predicate using its identifier and arity appended with a slash.
For example, the predicate \texttt{Friends/2} is a binary predicate,
i.e., taking two arguments, which represents whether two constants are friends.
As another example, the predicate \texttt{Name/2} can relate a person to the string that is
that person's name.
As a third example, the predicate \texttt{EnrolledInClass/3} can relate two entities, a student and professor,
with an additional attribute, the subject of the class.

Predicates and terms are combined to create atoms.
\begin{definition}
An {\bf atom} is a predicate combined with a sequence of terms of length equal to
the predicate's arity.
This sequence is called the atom's arguments.
An atom with only constants for arguments is called a ground atom.
\end{definition}
Ground atoms are the basic units of reasoning in PSL.
Each represents an unknown or observation of interest and can take any value in $[0,1]$.
For example, the ground atom \texttt{Friends("person1", "person2")}
represents whether \texttt{"person1"} and \texttt{"person2"} are friends.
Atoms that are not ground are placeholders for sets of ground atoms.
For example, the atom \texttt{Friends(X, Y)} stands for all ground atoms that can be obtained
by substituting constants for variables \texttt{X} and \texttt{Y}.

\subsubsection{Inputs}

As we have already stated, PSL defines templates for hinge-loss potentials and hard linear constraints that are
grounded out over a data set to induce a HL-MRF.
We now describe how that data set is represented and provided as the inputs to a PSL program.
The first inputs are two sets of predicates: a set $\closedpredicates$ of \emph{closed}
predicates, the atoms of which are completely observed, and a set $\openpredicates$ of \emph{open}
predicates, the atoms of which may be unobserved.
The third input is the \emph{base} $\base$, which is the set of all ground atoms under consideration.
All atoms in $\base$ must have a predicate in either $\closedpredicates$ or $\openpredicates$.
These are the atoms that can be substituted into the rules and constraints of a PSL program,
and each will later be associated with a HL-MRF random variable with domain $[0,1]$.
The final input is a function $\observations : \base \to [0,1] \cup \{ \unobserved \}$ that maps
the ground atoms in the base to either an observed value in $[0,1]$ or a symbol $\unobserved$
indicating that it is unobserved.
The function $\observations$ is only valid if all atoms with a predicate in $\closedpredicates$
are mapped to a $[0,1]$ value.
Note that this definition makes the sets $\closedpredicates$ and $\openpredicates$ redundant in a sense,
since they can be derived from $\base$ and $\observations$, but it will be convenient later
to have $\closedpredicates$ and $\openpredicates$ explicitly defined.

Ultimately, the method for specifying PSL's inputs is implementation-specific,
since different choices make it more or less convenient for different scenarios.
In this paper, we will assume that $\closedpredicates$, $\openpredicates$, $\base$, and
$\observations$ exist, and we remain agnostic about how they were specified.
However, to make this aspect of using PSL more concrete, we will describe one possible
method for defining them here.

Our example method for specifying PSL's inputs is text-based.
The first section of the text input is a definition of the constants in the universe, which
are grouped into types.
An example universe definition follows.
\begin{align*}
&\texttt{Person = \{"alexis", "bob", "claudia", "david"\}} \\
&\texttt{Professor = \{"alexis", "bob"\}} \\
&\texttt{Student = \{"claudia", "david"\}} \\
&\texttt{Subject = \{"computer science", "statistics"\}}
\end{align*}
This universe includes six constants, four with two types (\texttt{"alexis"}, \texttt{"bob"},
\texttt{"claudia"}, and \texttt{"david"}) and two with one type (\texttt{"computer science"} and
\texttt{"statistics"}).

The next section of input is the definition of predicates.
Each predicate includes the types of constants it takes as arguments and whether it is
closed.
For example, we can define predicates for an advisor-student relationship prediction task as follows:
\begin{align*}
&\texttt{Advises(Professor, Student)} \\
&\texttt{Department(Person, Subject) (closed)} \\
&\texttt{EnrolledInClass(Student, Subject, Professor) (closed)}
\end{align*}
In this case, there is one open predicate (\texttt{Advises}) and two closed predicates
(\texttt{Department} and \texttt{EnrolledInClass}).

The final section of input is any associated observations.
They can be specified in a list, for example:
\begin{align*}
&\texttt{Advises("alexis", "david") = 1} \\
&\texttt{Department("alexis", "computer science") = 1} \\
&\texttt{Department("bob", "computer science") = 1} \\
&\texttt{Department("claudia", "statistics") = 1} \\
&\texttt{Department("david", "statistics") = 1} 
\end{align*}
In addition, values for atoms with the \texttt{EnrolledInClass} predicate could also be specified.
If a ground atom does not have a specified value, it will have a default observed value of 0 if its
predicate is closed or remain unobserved if its predicate is open.

We now describe how this text input is processed into the formal inputs $\closedpredicates$,
$\openpredicates$, $\base$, and $\observations$.
First, each predicate is added to either $\closedpredicates$ or $\openpredicates$ based on whether it
is annotated with the \texttt{(closed)} tag.
Then, for each predicate in $\closedpredicates$ or $\openpredicates$, ground atoms of that predicate are
added to $\base$ with each sequence of constants as arguments that can be created by selecting a
constant of each of the predicate's argument types.
For example, assume that the input file contains a single predicate definition
\[
\texttt{Category(Document, Cat\_Name)}
\]
where the universe is $\texttt{Document} = \{ \texttt{"d1"}, \texttt{"d2"} \}$ and
$\texttt{Cat\_Name} = \{ \texttt{"politics"}, \texttt{"sports"} \}$.
Then,
\begin{equation}
\base = \left\{
\begin{array}{l}
\texttt{Category("d1", "politics")}, \\
\texttt{Category("d1", "sports")}, \\
\texttt{Category("d2", "politics")}, \\
\texttt{Category("d2", "sports")}
\end{array}
\right\}~.
\end{equation}
Finally, we define the function $\observations$.
Any atom in the explicit list of observations is mapped to the given value.
Then, any remaining atoms in $\base$ with a predicate in $\closedpredicates$ are mapped to 0,
and any with a predicate in $\openpredicates$ are mapped to $\unobserved$.

Before moving on, we also note that PSL implementations can support predicates and atoms that
are defined functionally.
Such predicates can be thought of as a type of closed predicate.
Their observed values are defined as a function of their arguments.
One of the most common examples is inequality, atoms of which can be represented with the
shorthand infix operator \texttt{!=}.
For example, the following atom has a value of 1 when two variables \texttt{A} and \texttt{B}
are replaced with different constants and 0 when replaced with the same constant.
\[
\texttt{A != B}
\]
Such functionally defined predicates can be implemented without requiring their values over all
arguments to be specified by the user.

\subsubsection{Rules and Grounding}

Before introducing the syntax and semantics of specific PSL rules, we define the grounding
procedure that induces HL-MRFs in general.
Given the inputs $\closedpredicates$, $\openpredicates$, $\base$, and $\observations$,
PSL induces a HL-MRF $\prob(\yy|\xx)$ as follows.
First, each ground atom $\groundatom \in \base$ is associated with a random variable with domain $[0,1]$.
If $\observations(\groundatom) = \unobserved$, then the variable is included in the free variables $\yy$, 
and otherwise it is included in the observations $\xx$ with a value of $\observations(\groundatom)$.

With the variables in the distribution defined, each rule in the PSL program is applied to the inputs
and produces hinge-loss potentials or hard linear constraints, which are added to the HL-MRF.
In the rest of this subsection, we describe two kinds of PSL rules: logical rules and arithmetic rules.

\subsubsection{Logical Rules}

The first kind of PSL rule is a logical rule, which is made up of literals.
\begin{definition}
A {\bf literal} is an atom or a negated atom.
\end{definition}
In PSL, the prefix operator \texttt{!} or \texttt{\textasciitilde} is used for negation.
A negated atom has a value of one minus the value of the unmodified atom.
For example, if \texttt{Friends("person1", "person2")} has a value of 0.7, then
\texttt{!Friends("person1", "person2")} has a value of 0.3.
\begin{definition}
A {\bf logical rule} is a disjunctive clause of literals.
Logical rules are either weighted or unweighted.
If a logical rule is weighted, it is annotated with a nonnegative weight and optionally a power of two.
\end{definition}
Logical rules express logical dependencies in the model.
As in Boolean logic, the negation, disjunction (written as \texttt{\pslor} or \texttt{\textbar}),
and conjunction (written as \texttt{\psland} or \texttt{\&}) operators obey De Morgan's Laws.
Also, an implication (written as \texttt{\pslimpliesright} or \texttt{\pslimpliesleft})
can be rewritten as the negation of the body disjuncted with the head.
For example
\begin{align*}
&\texttt{P1(A, B) \psland\ P2(A, B) \pslimpliesright\ P3(A, B) \pslor\ P4(A, B)} \\
\equiv~ &\texttt{!(P1(A, B) \psland\ P2(A, B)) \pslor\ P3(A, B) \pslor\ P4(A, B)} \\
\equiv~ &\texttt{!P1(A, B) \pslor\ !P2(A, B) \pslor\ P3(A, B) \pslor\ P4(A, B)}
\end{align*}
Therefore, any formula written as an implication with (1) a literal or conjunction of literals in the body
and (2) a literal or disjunction of literals in the head is also a valid logical rule, because it is equivalent to a disjunctive clause.

There are two kinds of logical rules: weighted or unweighted.
A weighted logical rule is a template for a hinge-loss potential that penalizes how far the rule is from being
satisfied.
A weighted logical rule begins with a nonnegative weight and optionally ends with an exponent of two
(\texttt{\pslcaret2}).
For example, the weighted logical rule
\[
\texttt{1 : Advisor(Prof, S) \psland\ Department(Prof, Sub) \pslimpliesright\ Department(S, Sub)}
\]
has a weight of 1 and induces potentials propagating department membership from advisors to advisees.
An unweighted logical rule is a template for a hard linear constraint that requires that the rule always be satisfied.
For example, the unweighted logical rule
\[
\texttt{Friends(X, Y) \psland\ Friends(Y, Z) \pslimpliesright\ Friends(X, Z) .}
\]
induces hard linear constraints enforcing the transitivity of the \texttt{Friends/2} predicate.
Note the period (\texttt{.}) that is used to emphasize that this rule is always enforced and 
disambiguate it from weighted rules.

A logical rule is grounded out by performing all distinct substitutions from variables to constants such that the resulting ground atoms are in the base $\base$.
This procedure produces a set of \emph{ground rules}, which are rules containing only ground atoms.
Each ground rule will then be interpreted as either a potential or hard constraint in the induced HL-MRF.
For notational convenience, we assume without loss of generality that all the random variables are
unobserved, i.e., $\observations(\groundatom) = \unobserved, \forall \groundatom \in \base$.
If the input data contain any observations, the following description still applies, except that
some free variables will be replaced with observations from $\xx$.
The first step in interpreting a ground rule is to map its disjunctive clause to a linear constraint.
This mapping is based on the unified inference objective derived in Section~\ref{sec:logic}.
Any ground PSL rule is a disjunction of literals, some of which are negated.
Let $\msindicatorset^+$ be the set of indices of the variables that correspond to atoms
that are not negated in the ground rule, when expressed as a disjunctive clause,
and, likewise, let $\msindicatorset^-$ be the indices of the variables corresponding to atoms
that are negated.
Then, the clause is mapped to the inequality
\begin{equation}
1 - \sum_{\iy \in \msindicatorset^+} \y_\iy
- \sum_{\iy \in \msindicatorset^-} (1 - \y_\iy)
\leq 0~.
\end{equation}
If the logical rule that templated the ground rule is weighted with a weight of $\param$ and is
\emph{not} annotated with \texttt{\textasciicircum2}, then the potential
\begin{equation}
\pot(\yy, \xx) = \max{\left\{
1- \sum_{\iy \in \msindicatorset^+} \y_\iy
- \sum_{\iy \in \msindicatorset^-} (1 - \y_\iy), 0 \right\}}
\end{equation}
is added to the HL-MRF with a parameter of $\param$.
If the rule is weighted with a weight $\param$ and annotated with \texttt{\textasciicircum2}, then the potential
\begin{equation}
\pot(\yy, \xx) = \left(\max{\left\{
1 - \sum_{\iy \in \msindicatorset^+} \y_\iy
- \sum_{\iy \in \msindicatorset^-} (1 - \y_\iy), 0 \right\}}
\right)^2
\end{equation}
is added to the HL-MRF with a parameter of $\param$.
If the rule is unweighted, then the function
\begin{equation}
\constr(\yy, \xx) = 1 - \sum_{\iy \in \msindicatorset^+} \y_\iy
- \sum_{\iy \in \msindicatorset^-} (1 - \y_\iy)
\end{equation}
is added to the set of constraint functions and its index is included in the set $\Inequality$ to define
a hard inequality constraint $\constr(\yy, \xx) \leq 0$.

As an example of the grounding process, consider the following logical rule.
As part of a program for link prediction, it is often helpful to model the transitivity of a relationship.
\[
\texttt{3 : Friends(A, B) \psland\ Friends(B, C) -> Friends(C, A) \pslcaret2}
\]
Imagine that the input data are $\closedpredicates = \{ \}$, $\openpredicates = \{ \texttt{Friends/2} \}$,
\begin{equation}
\base = \left\{
\begin{array}{l}
\texttt{Friends("p1", "p2")}, \\
\texttt{Friends("p1", "p3")}, \\
\texttt{Friends("p2", "p1")}, \\
\texttt{Friends("p2", "p3")}, \\
\texttt{Friends("p3", "p1")}, \\
\texttt{Friends("p3", "p2")}
\end{array}
\right\}~,
\end{equation}
and $\observations(\groundatom) = \unobserved, \forall \groundatom \in \base$.
Then, the rule will induce six ground rules.
One such ground rule is
\[
\texttt{3 : Friends("p1", "p2") \psland\ Friends("p2", "p3") \pslimpliesright\ Friends("p3", "p1") \pslcaret2}
\]
which is equivalent to the following.
\[
\texttt{3 : !Friends("p1", "p2") \pslor\ !Friends("p2", "p3") \pslor\ Friends("p3", "p1") \pslcaret2}
\]
If the atoms \texttt{Friends("p1", "p2")}, \texttt{Friends("p2", "p3")}, and \texttt{Friends("p3", "p1")}
correspond to the random variables $\y_1$, $\y_2$, and $\y_3$, respectively, then this ground
rule is interpreted as the weighted hinge-loss potential
\begin{equation}
3 ~ \left( \max\{\y_1 + \y_2 - \y_3 - 1, 0\} \right)^2~.
\end{equation}
Since the grounding process uses the mapping from Section~\ref{sec:logic},
logical rules can be used to reason accurately and efficiently about both discrete and continuous information.
They are a convenient method for constructing HL-MRFs with the unified inference objective for weighted
logical knowledge bases as their MAP inference objective.
They also allow the user to seamlessly incorporate some of the additional features of HL-MRFs,
such as squared potentials and hard constraints.
Next, we introduce an even more flexible class of PSL rules.

\subsubsection{Arithmetic Rules}
\label{sec:psl_arithmetic}

Arithmetic rules in PSL are more general templates for hinge-loss potentials and hard linear constraints.
Like logical rules, they come in weighted and unweighted variants, but instead of using logical operators
they use arithmetic operators.
In general, an arithmetic rule relates two linear combinations of atoms with an inequality or an equality.
A simple example enforces the mutual exclusivity of liberal and conservative ideologies.
\[
\texttt{Liberal(P) + Conservative(P) = 1 .}
\]
Just like logical rules, arithmetic rules are grounded out by performing all possible substitutions of constants for variables to make ground atoms in the base $\base$.
In this example, each substitution for \texttt{Liberal(P)} and \texttt{Conservative(P)}
is constrained to sum to 1.
Since the rule is unweighted and arithmetic, it defines a hard constraint $\constr(\yy, \xx)$ and
its index will be included in $\Equality$ because it is an equality constraint.

To make arithmetic rules more flexible and easy to use, we define some additional syntax.
The first is a generalized definition of atoms that can be substituted with sums of ground atoms, rather
than just a single atom.
\begin{definition}
A {\bf summation atom} is an atom that takes terms and/or sum variables as arguments.
A summation atom represents the summations of ground atoms that can be obtained by substituting
individual constants for variables and summing over all possible constants for sum variables.
\end{definition}
A sum variable is represented by prepending a plus symbol (\texttt{+}) to a variable.
For example, the summation atom
\[
\texttt{Friends(P, +F)}
\]
is a placeholder for the sum of all ground atoms with predicate \texttt{Friends/2} in $\base$ that share a first argument.
Note that sum variables can be used at most once in a rule, i.e., each sum variable in a rule must
have a unique identifier.
Summation atoms are useful because they can describe dependencies without needing to
specify the number of atoms that can participate.
For example, the arithmetic rule
\[
\texttt{Label(X, +L) = 1 .}
\]
says that labels for each constant substituted for \texttt{X} should sum to one, without needing
to specify how many possible labels there are.

The substitutions for sum variables can be restricted using logical clauses as filters.
\begin{definition}
A {\bf filter clause} is a logical clause defined for a sum variable in an arithmetic rule.
The logical clause only contains atoms (1) with predicates that appear in $\closedpredicates$ and (2) that only take as arguments
(a) constants, (b) variables that appear in the arithmetic rule, and (c) the sum variable for which it is defined.
\end{definition}
Filter clauses restrict the substitutions for a sum variable in the corresponding arithmetic rule by
only including substitutions for which the clause evaluates to true.
The filters are evaluated using Boolean logic.
Each ground atom $\groundatom$ is treated as having a value of 0 if and only if
$\observations(\groundatom) = 0$.
Otherwise, it is treated as having a value of 1.
For example, imagine that we want to restrict the summation in the following arithmetic rule
to only constants that satisfy a property \texttt{Property/1}.
\[
\texttt{Link(X, +Y) <= 1 .}
\]
Then, we can add the following filter clause.
\[
\texttt{\{Y: Property(Y)\}}
\]
Then, the hard linear constraints templated by the arithmetic rule will only sum over constants
substituted for \texttt{Y} such that \texttt{Property(Y)} is non-zero.

In arithmetic rules, atoms can also be modified with coefficients.
These coefficients can be hard-coded.
As a simple example, in the rule
\[
\texttt{Susceptible(X) >= 0.5 Biomarker1(X) + 0.5 Biomarker2(X) .}
\]
the property \texttt{Susceptible/1}, which represents the degree to which
a patient is susceptible to a particular disease, must be at least the average value
of two biomarkers.

PSL also supports two forms of coefficient-defining syntax.
The first form of coefficient syntax is a cardinality function that counts the number of terms substituted
for a sum variable.
Cardinality functions enable rules that depend on the number of substitutions in order to be scaled
correctly, such as when averaging.
Cardinality is denoted by enclosing a sum variable, without the \texttt{+}, in pipes.
For example, the rule
\[
\texttt{1 / |Y| Friends(X, +Y) = Friendliness(X) .}
\]
defines the \texttt{Friendliness/1} property of a person \texttt{X} in a social network as the average
strength of their outgoing friendship links.
In cases in which \texttt{Friends/2} is not symmetric, we can extend this rule to sum over both
outgoing and incoming links as follows.
\begin{align*}
&\texttt{1 / |Y1| |Y2| Friends(X, +Y1) + 1 / |Y1| |Y2| Friends(+Y2, X)} \\
&\texttt{= Friendliness(X) .}
\end{align*}

The second form of coefficient syntax is built-in coefficient functions.
The exact set of supported functions is implementation specific, but standard functions like
maximum and minimum should be included.
Coefficient functions are prepended with \texttt{@} and use square brackets instead of parentheses to
distinguish them from predicates.
Coefficient functions can take either scalars or cardinality functions as arguments.
For example, the following rule for matching two sets of constants requires that the sum of the \texttt{Matched/2}
atoms be the minimum of the sizes of the two sets.
\[
\texttt{Matched(+X, +Y) = @Min[|X|, |Y|] .}
\]
Note that PSL's coefficient syntax can also be used to define constants, as in this example.

So far we have focused on using arithmetic rules to define templates for linear constraints,
but they can also be used to define hinge-loss potentials.
For example, the following arithmetic rule prefers that the degree to which a person \texttt{X} is extroverted (represented with \texttt{Extroverted/1}) does not exceed the average extroversion of their friends:
\begin{align*}
&\texttt{2 : Extroverted(X) <= 1 / |Y| Extroverted(+Y) \pslcaret2} \\
&\texttt{\{Y: Friends(X, Y) || Friends(Y, X)\}}
\end{align*}
This rule is a template for weighted hinge-loss potentials of the form
\begin{equation}
2 \left( \max{\left\{\egovar - \frac{1}{|\friendset|} \sum_{\iy \in \friendset} \y_i, 0 \right\}} \right)^2 \; ,
\end{equation}
where $\egovar$ is the variable corresponding to a grounding of the atom
\texttt{Extroverted(X)} and $\friendset$ is the set of the indices of the variables corresponding to
\texttt{Extroverted(Y)} atoms of the friends \texttt{Y} that satisfy the rule's filter clause.
Note that the weight of 2 is distinct from the coefficients in the linear constraint
$\linfun(\yy, \xx) \leq 0$ defining the hinge-loss potential.
If the arithmetic rule were an equality instead of an inequality, each grounding would be two hinge-loss
potentials, one using $\linfun(\yy, \xx) \leq 0$ and one using $-\linfun(\yy, \xx) \leq 0$.
In this way, arithmetic rules can define general hinge-loss potentials.

For completeness, we state the full, formal definition of an arithmetic rule and define its grounding procedure.
\begin{definition}
An {\bf arithmetic rule} is an inequality or equality relating two linear combinations of summation atoms.
Each sum variable in an arithmetic rule can be used once.
An arithmetic rule can be annotated with filter clauses for a subset of its sum variables that restrict its groundings.
Arithmetic rules are either weighted or unweighted.
If an arithmetic rule is weighted, it is annotated with a nonnegative weight and optionally a power of two.
\end{definition}
An arithmetic rule is grounded out by performing all distinct substitutions from variables to constants such that the resulting ground atoms are in the base $\base$.
In addition, summation atoms are replaced by the appropriate summations over ground atoms
(possibly restricted by corresponding filter clauses) and the coefficient is distributed across the summands.
This leads to a set of ground rules for each arithmetic rule given a set of inputs.
If the arithmetic rule is an unweighted inequality, each ground rule can be algebraically
manipulated to be of the form $\constr(\yy, \xx) \leq 0$.
Then $\constr(\yy, \xx)$ is added to the set of constraint functions and its index is added to $\Inequality$.
If instead the arithmetic rule is an unweighted equality, each ground rule is manipulated to
$\constr(\yy, \xx) = 0$, $\constr(\yy,\xx)$ is added to the set of constraint functions, and its index is
added to $\Equality$.
If the arithmetic rule is a weighted inequality with weight $\param$, each ground rule is
manipulated to $\linfun(\yy, \xx) \leq 0$ and included as a potential of the form
\begin{equation}
\pot(\yy, \xx) = \max{\left\{\linfun(\yy, \xx), 0 \right\}}
\end{equation}
with a weight of $\param$.
If the arithmetic rule is a weighted equality with weight $\param$, each ground rule
is again manipulated to $\linfun(\yy, \xx) \leq 0$ and two potentials are included,
\begin{equation}
\pot_1(\yy, \xx) = \max{\left\{\linfun(\yy, \xx), 0 \right\}}, ~~
\pot_2(\yy, \xx) = \max{\left\{-\linfun(\yy, \xx), 0 \right\}}~,
\end{equation}
each with a weight of $\param$.
In either case, if the weighted arithmetic rule is annotated with \texttt{\textasciicircum2},
then the induced potentials are squared.

\subsection{Expressivity}
\label{sec:expressivity}

An important question is the expressivity of PSL, which uses disjunctive clauses with positive weights
for its logical rules.
Other logic-based languages support different types of clauses, such as Markov logic networks
\citep{richardson:ml06}, which support clauses with conjunctions and clauses with negative weights.
As we discuss in this section, PSL's logical rules capture a general class of structural dependencies, capable of
modeling arbitrary probabilistic relationships among Boolean variables, such as those defined by Markov logic networks.
The advantage of PSL is that it defines HL-MRFs, which are much more scalable than discrete MRFs and
often just as accurate, as we show in Section~\ref{sec:experiments}.

The expressivity of PSL is tied to the expressivity of the MAX SAT problem, since they both use the same
class of weighted clauses.
There are two conditions on the clauses: (1) they have nonnegative weights, and (2) they are disjunctive.
We first consider the nonnegativity requirement and show that can actually be viewed as a restriction on the
structure of a clause.
To illustrate, consider a weighted disjunctive clause of the form
\begin{equation}
\label{eq:convert_neg}
-\param~~:~~\left( \bigvee_{\imsvar \in \msindicatorset_\imsclause^+} \msvar_\imsvar \right)
\bigvee
\left( \bigvee_{\imsvar \in \msindicatorset_\imsclause^-} \neg \msvar_\imsvar \right)~.
\end{equation}
If this clause were part of a generalized MAX SAT problem, in which there were no restrictions on weight sign or
clause structure, but the goal were still to maximize the sum of the weights of the satisfied clauses,
then this clause could be replaced with an equivalent one without changing the optimizer:
\begin{equation}
\label{eq:convert_pos}
\param~~:~~\left( \bigwedge_{\imsvar \in \msindicatorset_\imsclause^+} \neg \msvar_\imsvar \right)
\bigwedge
\left( \bigwedge_{\imsvar \in \msindicatorset_\imsclause^-} \msvar_\imsvar \right)~.
\end{equation}
Note that the clause has been changed in three ways: (1) the sign of the weight has been changed,
(2) the disjunctions have been replaced with conjunctions, and (3) the literals have all been negated.
Due to this equivalence, the restriction on the sign of the weights is subsumed by the restriction
on the structure of the clauses.
In other words, any set of clauses can be converted to a set with nonnegative weights that has the same
optimizer, but it might require including conjunctions in the clauses.
It is also easy to verify that if Equation~(\ref{eq:convert_neg}) is used to define a potential
in a discrete MRF, replacing it with a potential defined by~(\ref{eq:convert_pos}) leaves the distribution
unchanged, due to the normalizing partition function.

We now consider the requirement that clauses be disjunctive and illustrate how conjunctive clauses can be
replaced by an equivalent set of disjunctive clauses.
The idea is to construct a set of disjunctive clauses such that all assignments to the variables are mapped
to the same score, up to a constant.
A simple example is replacing a conjunction
\begin{equation}
\label{eq:convert_conjunct}
\param~~:~~\msvar_1 \wedge \msvar_2
\end{equation}
with disjunctions
\begin{align}
&\param~~:~~\msvar_1 \vee \msvar_2 \\
&\param~~:~~\neg \msvar_1 \vee \msvar_2 \\
&\param~~:~~\msvar_1 \vee \neg \msvar_2~.
\end{align}
Observe that the total score for all assignments to the variables remains the same, up to a constant.

This example generalizes to a procedure for encoding any Boolean MRF into
a set of disjunctive clauses with nonnegative weights.
\citet{park:aaai02} showed that the MAP problem for any discrete Bayesian network can be
represented as an instance of MAX SAT.
For distributions of bounded factor size, the MAX SAT problem has size polynomial in the number of
variables and factors of the distribution.
We describe how any Boolean MRF can be represented with disjunctive clauses and nonnegative weights.
Given a Boolean MRF with arbitrary potentials defined by mappings from joint states of subsets of
the variables to scores, a new MRF is created as follows.
For each potential in the original MRF, a new set of potentials defined by disjunctive clauses is created.
A conjunctive clause is created corresponding to each entry in the potential's mapping with a weight
equal to the score assigned by the weighted potential in the original MRF.
Then, these clauses are converted to equivalent disjunctive clauses as in the example of
Equations~(\ref{eq:convert_neg}) and~(\ref{eq:convert_pos}) by also flipping the sign of their weights 
and negating the literals.
Once this is done for all entries of all potentials, what remains is an MRF defined by disjunctive clauses,
some of which might have negative weights.
We make all weights positive by adding a sufficiently large constant to all weights of all clauses,
which leaves the distribution unchanged due to the normalizing partition function.

It is important to note two caveats when converting arbitrary Boolean MRFs to MRFs defined using only
disjunctive clauses with nonnegative weights.
First, the number of clauses required to represent a potential in the original MRF is exponential
in the degree of the potential.
In practice, this is rarely a significant limitation, since MRFs often contain low-degree potentials.
The other important point is that the step of adding a constant to all the weights increases the
total score of the MAP state.
Since the bound of \citet{goemans:discmath94} is relative to this score, the bound is loosened for
the original problem the larger the constant added to the weights is.
This is to be expected, since even approximating MAP is NP-hard in general \citep{abdelbar:ai98}.

We have described how general structural dependencies can be modeled with the logical rules of PSL.
It is possible to represent arbitrary logical relationships with them.
The process for converting general rules to PSL's logical rules can be done automatically and made
transparent to the user.
We have elected in this section to define PSL's logical rules without making this conversion automatic to make clear the
underlying formalism.

\subsection{Modeling Patterns}
\label{sec:patterns}

PSL is a flexible language, and there are some patterns of usage that come up in many applications.
We illustrate some of them in this subsection with a number of examples.

\subsubsection{Domain and Range Rules}
\label{sec:domain_and_range}

In many problems, the number of relations that can be predicted among some constants is known.
For binary predicates, this background knowledge can be viewed as constraints on the domain (first
argument) or range (second argument) of the predicate.
For example, it might be background knowledge that each entity, such as a document, has
exactly one label.
An arithmetic rule to express this follows.
\[
\texttt{Label(Document, +LabelName) = 1 .}
\]
The predicate \texttt{Label} is said to be {\em functional}.

Alternatively, sometimes it is the first argument that should be summed over.
For example, imagine the task of predicting relationships among students and professors.
Perhaps it is known that each student has exactly one advisor.
This constraint can be written as follows.
\[
\texttt{Advisor(+Professor, Student) = 1 .}
\]
The predicate \texttt{Advisor} is said to be {\em inverse functional}.

Finally, imagine a scenario in which two social networks are being aligned.
The goal is to predict whether each pair of people, one from each network, is the same person,
which is represented with atoms of the \texttt{Same} predicate.
Each person aligns with at most one person in the other network, but might not align with anyone.
This can be expressed with the following two arithmetic rules.
\begin{align*}
& \texttt{Same(Person1, +Person2) <= 1 .} \\
& \texttt{Same(+Person1, Person2) <= 1 .}
\end{align*}
The predicate \texttt{Same} is said to be both {\em partial functional} and {\em partial inverse functional}.

Many variations on these examples are possible.
For example, they can be generalized to predicates with more than two arguments.
Additional arguments can either be fixed or summed over in each rule.
As another example, domain and range rules can incorporate multiple predicates, so that
an entity can participate in a fixed number of relations counted among multiple predicates.

\subsubsection{Similarity}

\label{sec:similarity}

Many problems require explicitly reasoning about similarity, rather than simply whether entities are the same or different.
For example, reasoning with similarity has been explored using kernel methods, such as kFoil \citep{landwehr:ml10} that bases
similarity computation on the relational structure of the data.
The continuous variables of HL-MRFs make modeling similarity straightforward, and PSL's support for functionally defined
predicates makes it even easier.
For example, in an entity resolution task, the degree to which two entities are believed to be the same might
depend on how similar their names are.
A rule expressing this dependency is
\[
\texttt{1.0 : Name(P1, N1) \psland\  Name(P2, N2) \psland\ Similar(N1, N2) \pslimpliesright\ Same(P1, P2)}
\]
This rule uses the \texttt{Similar} predicate to measure similarity.
Since it is a functionally defined predicate, it can be implemented as one of many different, possibly domain specialized,
string similarity functions.
Any similarity function that can output values in the range $[0,1]$ can be used.

\subsubsection{Priors}

If no potentials are defined over a particular atom, then it is equally probable that it has any value
between zero and one.
Often, however, it should be more probable that an atom has a value of zero,
unless there is evidence that it has a nonzero value.
Since atoms typically represent the existence of some entity, attribute, or relation, this bias promotes sparsity
among the things inferred to exist.
Further, if there is a potential that prefers that an atom should have a value that is at least some numeric constant,
such as when reasoning with similarities as discussed in Section~\ref{sec:similarity}, it often should also be more probable that
an atom is no higher in value than is necessary to satisfy that potential.
To accomplish both these goals, simple priors can be used to state that atoms should have low values in the
absence of evidence to overrules those priors.
A prior in PSL can be a rule consisting of just a negative literal with a small weight.
For example, in a link prediction task, imagine that this preference should apply to atoms of the \texttt{Link} predicate.
A prior is then
\[
\texttt{0.1 : !Link(A, B)}
\]
which acts as a regularizer on \texttt{Link} atoms.

\subsubsection{Blocks and Canopies}

In many tasks, the number of unknowns can quickly grow large, even for modest amounts of data.
For example, in a link prediction task the goal is to predict relations among entities.
The number of possible links grows quadratically with the number of entities (for binary relations).
If handled naively, this growth could make scaling to large data sets difficult, but this problem is often handled by
constructing {\em blocks} \citep[e.g.,][]{newcombe:cacm62} or {\em canopies} \citep{mccallum:kdd00} over the entities,
so that a limited subset of all possible links are actually considered.
Blocking partitions the entities so that only links among entities in the same partition element, i.e., block, are considered.
Alternatively, for a finer grained pruning, a canopy is defined for each entity, which is the set of other entities to which it could possibly link.
Blocks and canopies can be computed using specialized, domain-specific functions, and
PSL can incorporate them by including them as atoms in the bodies of rules.
Since blocks can be seen as a special case of canopies, we let the atom \texttt{InCanopy(A,~B)} be 1 if \texttt{B}
is in the canopy or block of \texttt{A}, and 0 if it is not.
Including \texttt{InCanopy(A,~B)} atoms as additional conditions in the bodies of logical rules will ensure that
the dependencies only exist between the desired entities.

\subsubsection{Aggregates}
\label{sec:aggregates}

Another powerful feature of PSL is its ability to easily define \emph{aggregates},
which are rules that define random variables to be deterministic functions of sets of other random variables.
The advantage of aggregates is that they can be used to define dependencies that do not scale
in magnitude with the number of groundings in the data.
For example, consider a model for predicting interests in a social network.
A fragment of a PSL program for this task follows.
\begin{align*}
&\texttt{1.0 : Interest(P1, I) \psland\  Friends(P1, P2) \pslimpliesright\ Interest(P2, I)} \\
&\texttt{1.0 : Age(P, "20-29") \psland\ Lives(P, "California") \pslimpliesright\ Interest(P, "Surfing")}
\end{align*}
These two rules express the belief that interests are correlated along friendship links in the social network,
and also that certain demographic information is predictive of specific interests.
The question any domain expert or learning algorithm faces is how strongly each rule should be
weighted relative to each other.
The challenge of answering this question when using templates is that the number of groundings of
the first rule varies from person to person based on the number of friends, while the groundings
of the second remain constant (one per person).
This inconsistent scaling of the two types of dependencies makes it difficult to find weights that
accurately reflect the relative influence each type of dependency should have across people with
different numbers of friends.

Using an aggregate can solve this problem of inconsistent scaling.
Instead of using a separate ground rule to relate the interest of each friend, we can define a rule
that is only grounded once for each person, relating an average interest across all friends to each
person's own interests.
A PSL fragment for this approach is
\begin{align*}
&\texttt{1.0 : AverageFriendInterest(P, I) \pslimpliesright\ Interest(P, I)} \\
&\texttt{AverageFriendInterest(P, I) = 1 / |F| Interest(+F, I) .} \\
&\texttt{\{F: Friends(P, F)\}} \\
&\phantom{a} \\
&\texttt{/* Demographic dependencies are also included. */}
\end{align*}
where the predicate \texttt{AverageFriendInterest/2} is an aggregate that is constrained to be
the average amount of interest each friend of a person \texttt{P} has in an interest \texttt{I}.
The weight of the logical rule can now be scaled more appropriately relative to other types of features because
there is only one grounding per person.

For a more complex example, consider the problem of determining whether two references in the data
refer to the same underlying person.
One useful feature to use is whether they have similar sets of friends in the social network.
Again, a rule could be defined that is grounded out for each friendship pair, but this would suffer from
the same scaling issues as the previous example.
Instead, we can use an aggregate to directly express how similar the two references' sets of friends are.
A function that measures the similarity of two sets $A$ and $B$ is \emph{Jaccard similarity}:
\[
J(A, B) = \frac{|A \cap B|}{|A \cup  B|}~.
\]
Jaccard similarity is a nonlinear function, meaning that it cannot be used directly without breaking the
log-concavity of HL-MRFs, but we can approximate it with a linear function.
We define \texttt{SameFriends/2} as an aggregate that approximates Jaccard similarity
(where \texttt{SamePerson/2} is functional and inverse functional).
\begin{align*}
&\texttt{SameFriends(A, B) = 1 / @Max[|FA|, |FB|] SamePerson(+FA, +FB) . } \\
&\texttt{\{FA : Friends(A, FA)\}} \\
&\texttt{\{FB : Friends(B, FB)\}} \\
&\texttt{SamePerson(+P1, P2) = 1 . } \\
&\texttt{SamePerson(P1, +P2) = 1 . }
\end{align*}
The aggregate \texttt{SameFriends/2} uses the sum of the \texttt{SamePerson/2} atoms
as the intersection of the two sets, and the maximum of the sizes of the two sets of friends
as a lower bound on the size of their union.


\section{MAP Inference}
\label{sec:map}

Having defined HL-MRFs and a language for creating them, PSL, we turn to algorithms for inference and learning.
The first task we consider is maximum a posteriori (MAP) inference, the problem
of finding a most probable assignment to the free variables $\yy$ given observations $\xx$.
In HL-MRFs, the normalizing function $\partition(\pparam, \xx)$ is constant over $\yy$ and the
exponential is maximized by minimizing its negated argument, so the MAP problem is
\begin{equation}
\label{eq:map}
\begin{aligned}
\argmax_\yy \prob(\yy | \xx) \; \; \; \equiv \; \; \; &\argmin_{\yy|\yy,\xx \in \FeasibleDom} \energy(\yy,\xx) & \\
\equiv \; \; \; &\argmin_{\yy \in [0,1]^\ny} \pparam^\top \ppot(\yy,\xx) & \\
&\phantom{a} & \\ 
&\text{such that} &\constr_\iconstr(\yy,\xx) = 0, \; \; \; \forall \iconstr \in \Equality~\phantom{.}\\
& &\constr_\iconstr(\yy,\xx) \leq 0, \; \; \; \forall \iconstr \in \Inequality~.
\end{aligned}
\end{equation}
MAP is a fundamental problem because (1) it is the method we will use to make predictions, and
(2) weight learning often requires performing MAP inference many times with different weights
(as we discuss in Section~\ref{sec:learning}).
Here, HL-MRFs have a distinct advantage over general discrete models, since minimizing $\energy$ is a
convex optimization rather than a combinatorial one.
There are many off-the-shelf solutions for convex optimization, the most popular
of which are interior-point methods, which have worst-case polynomial time complexity in the number
of variables, potentials, and constraints \citep{nesterov:book94}.
Although in practice they perform better than their worst-case bounds \citep{wright:ams05},
they do not scale well to large structured prediction problems \citep{yanover:jmlr06}.
We therefore introduce a new algorithm for exact MAP inference designed to scale to large HL-MRFs
by leveraging the sparse connectivity structure of the potentials and hard constraints that are typical
of models for real-world tasks.

\subsection{Consensus Optimization Formulation}
\label{sec:conopt}

Our algorithm uses {\em consensus optimization}, a technique that divides an optimization problem
into independent subproblems and then iterates to reach a consensus on the optimum \citep{boyd:book11}.
Given a HL-MRF $\prob(\yy|\xx)$, we first construct an equivalent MAP problem in which each potential
and hard constraint is a function of different variables.
The variables are then constrained to make the new and original MAP problems equivalent.
We let $\localyyi{\ipot}$ be a local copy of the variables in $\yy$ that are used in the potential function
$\pot_\ipot$, $\ipot=1,\dots, \npot$ and $\localyyi{\iconstr+\npot}$ be a copy of those used
in the constraint function $\constr_\iconstr$, $\iconstr=1,\dots, \nconstr$.
We refer to the concatenation of all of these vectors as $\localyy$.
We also introduce a characteristic function $\indicator_\iconstr$ for each constraint function where
$\indicator_\iconstr \left[ \constr_\iconstr( \localyyi{\iconstr+\npot}, \xx ) \right] = 0$ if the
constraint is satisfied and infinity if it is not.
Likewise, let $\indicator_{[0,1]}$ be a characteristic function that is 0 if the input is in the interval $[0,1]$
and infinity if it is not.
We drop the constraints on the domain of $\yy$, letting them range in principle over $\mathbb{R}^\ny$
and instead use these characteristic functions to enforce the domain constraints.
This formulation will make computation easier when the problem is later decomposed.
Finally, let $\correspondingyyi{\isubproblem}$ be the variables in $\yy$ that correspond to
$\localyyi{\isubproblem}$, $\isubproblem = 1,\dots, \npot+\nconstr$.
Operators between $\localyyi{\isubproblem}$ and $\correspondingyyi{\isubproblem}$ are defined
element-wise, pairing the corresponding copied variables.
Consensus optimization solves the reformulated MAP problem
\begin{equation}
\label{eq:modmap}
\begin{aligned}
&\argmin_{ (\localyy, \yy) }
&\sum_{\ipot=1}^\npot{\param_\ipot \pot_\ipot \left( \localyyi{\ipot}, \xx \right) }
+ \sum_{\iconstr=1}^\nconstr{ \indicator_\iconstr \left[ \constr_\iconstr
\left( \localyyi{\iconstr+\npot}, \xx \right) \right] } + \sum_{\iy=1}^\ny \indicator_{[0,1]} \left[ \y_\iy \right] \\
& \phantom{a} & \\
&\text{such that}  & \localyyi{\isubproblem} = \correspondingyyi{\isubproblem} \; \; \;
\forall \isubproblem = 1,\dots,\npot + \nconstr~.
\end{aligned}
\end{equation}
Inspection shows that problems~(\ref{eq:map}) and~(\ref{eq:modmap}) are equivalent.

This reformulation enables us to relax the equality constraints
$\localyyi{\isubproblem} = \correspondingyyi{\isubproblem}$ in order to divide problem~(\ref{eq:modmap})
into independent subproblems that are easier to solve,
using the alternating direction method of multipliers (ADMM) \citep{glowinski:france75, gabay:cma76, boyd:book11}.
The first step is to form the \emph{augmented Lagrangian} function for the problem.
Let $\llagrangemult = (\llagrangemult_1,\dots, \llagrangemult_{\npot + \nconstr})$ be a
concatenation of vectors of Lagrange multipliers.
Then the augmented Lagrangian is
\begin{multline}
\lagrangian(\localyy, \llagrangemult, \yy)
= \sum_{\ipot=1}^\npot{\param_\ipot \pot_\ipot \left( \localyyi{\ipot}, \xx \right) }
+ \sum_{\iconstr=1}^\nconstr{ \indicator_\iconstr \left[ \constr_\iconstr \left( \localyyi{\iconstr+\npot}, \xx \right) \right] }
+ \sum_{\iy=1}^\ny \indicator_{[0,1]} \left[ \y_\iy \right] \\
+ \sum_{\isubproblem = 1}^{\npot + \nconstr} \llagrangemult_\isubproblem^\top \left(\localyyi{\isubproblem} - \correspondingyyi{\isubproblem}\right)
+ \frac{\admmstep}{2} \sum_{\isubproblem = 1}^{\npot + \nconstr} \left\| \localyyi{\isubproblem} - \correspondingyyi{\isubproblem} \right\|_2^2
\end{multline}
using a step-size parameter $\admmstep > 0$.
ADMM finds a saddle point of  $\lagrangian(\localyy, \llagrangemult, \yy)$ by
updating the three blocks of variables at each iteration $\timestep$:
\begin{align}
\llagrangemult_\isubproblem^\timestep &\leftarrow \llagrangemult_\isubproblem^{\timestep-1}
+ \admmstep \left(\localyyi[\timestep-1]{\isubproblem} - \correspondingyyi[\timestep-1]{\isubproblem} \right)
&  \forall \isubproblem = 1,\dots,\npot+\nconstr  \label{eq:admmyupdate} \\[0.5em]
\localyy[\timestep] &\leftarrow \argmin_{\localyy}~\lagrangian\left(\localyy, \llagrangemult^\timestep, \yy^{\timestep-1}\right) & \label{eq:admmxupdate} \\[0.5em]
\yy^\timestep &\leftarrow \argmin_{\yy}~\lagrangian\left(\localyy[\timestep], \llagrangemult^\timestep, \yy\right) \label{eq:admmzupdate} &
\end{align}

The ADMM updates ensure that $\yy$ converges to the global optimum $\yy^\star$, the MAP
state of $\prob(\yy | \xx)$, assuming that there exists a feasible assignment to $\yy$.
We check convergence using the criteria suggested by \citet{boyd:book11}, measuring the primal
and dual residuals at the end of iteration $\timestep$, defined as
\begin{equation}
\| \bar{{\boldsymbol r}}^\timestep \|_2 \triangleq \left( \sum_{\isubproblem = 1}^{\npot + \nconstr} \|\localyyi[\timestep]{\isubproblem} - \correspondingyyi[\timestep]{\isubproblem} \|^2_2 \right)^{\frac{1}{2}}~~~~~~~~
\| \bar{{\boldsymbol s}}^\timestep \|_2 \triangleq \admmstep \left( \sum_{\iy = 1}^\ny \numcopies_\iy (\y_\iy^\timestep - \y_\iy^{\timestep-1})^2 \right)^{\frac{1}{2}}
\end{equation}
where $\numcopies_\iy$ is the number of copies made of the variable $\y_\iy$, i.e., the number of
different potentials and constraints in which the variable participates.
The updates are terminated when both of the following conditions are satisfied
\begin{align}
\| \bar{{\boldsymbol r}}^\timestep \|_2 & \leq \epsilon^\text{abs} \sqrt{\sum_{\iy = 1}^\ny \numcopies_\iy}
+ \epsilon^\text{rel} \max \left\{
\left( \sum_{\isubproblem = 1}^{\npot + \nconstr} \| \localyyi[\timestep]{\isubproblem} \|_2^2 \right)^\frac{1}{2},
\left( \sum_{\iy = 1}^\ny \numcopies_\iy (\y_\iy^\timestep)^2 \right)^\frac{1}{2}
\right\} \\
\| \bar{{\boldsymbol s}}^\timestep \|_2 & \leq\epsilon^\text{abs} \sqrt{\sum_{\iy = 1}^\ny \numcopies_\iy}
+ \epsilon^\text{rel} \left( \sum_{\isubproblem = 1}^{\npot + \nconstr} \|\llagrangemult_\isubproblem^\timestep \|_2^2 \right)^\frac{1}{2}
\end{align}
using convergence parameters $\epsilon^\text{abs}$ and $\epsilon^\text{rel}$.

\subsection{Block Updates}	
\label{sec:block_updates}

We now describe how to implement the ADMM block updates~(\ref{eq:admmyupdate}),~(\ref{eq:admmxupdate}),
and~(\ref{eq:admmzupdate}).
Updating the Lagrange multipliers $\llagrangemult$ is a simple step in the gradient direction~(\ref{eq:admmyupdate}).
Updating the local copies $\localyy$~(\ref{eq:admmxupdate}) decomposes over each potential and constraint in the HL-MRF.
For the variables $\localyyi{\ipot}$ for each potential $\pot_\ipot$, this requires independently
optimizing the weighted potential plus a squared norm:
\begin{equation}
\label{eq:potsubproblem}
\argmin_{\localyyi{\ipot}}~\param_\ipot \left( \max\left\{ \linfun_\ipot(\localyyi{\ipot}, \xx), 0 \right\} \right)^{\p_\ipot}
     + \frac{\admmstep}{2} \left\| \localyyi{\ipot} - \correspondingyyi{\ipot} + \frac{1}{\rho} \llagrangemult_\ipot \right\|_2^2~.
\end{equation}
Although this optimization problem is convex, the presence of the hinge function complicates it.
It could be solved in principle with an iterative method, such as an interior-point method, but such methods would become
very expensive over many ADMM updates.
Fortunately, we can reduce the problem to checking several cases and find solutions much more quickly.

There are three cases for $\localyyi[\star]{\ipot}$, the optimizer of problem~(\ref{eq:potsubproblem}), which correspond to the
three regions in which the solution could lie:
(1) the region $\linfun(\localyyi{\ipot}, \xx) < 0$,
(2) the region $\linfun(\localyyi{\ipot}, \xx) > 0$,
and (3) the region $\linfun(\localyyi{\ipot}, \xx) = 0$.
We check each case by replacing the potential with its value on the corresponding region, optimizing, and checking
if the optimizer is in the correct region.
We check the first case by replacing the potential $\pot_\ipot$ with zero.
Then, the optimizer of the modified problem is $\correspondingyyi{\ipot} - \llagrangemult_\ipot / \rho$.
If $\linfun_\ipot(\correspondingyyi{\ipot} - \llagrangemult_\ipot / \rho, \xx) \leq 0$, then 
$\localyyi[\star]{\ipot} = \correspondingyyi{\ipot} - \llagrangemult_\ipot / \rho$,
because it optimizes both the potential and the squared norm independently.
If instead $\linfun_\ipot(\correspondingyyi{\ipot} - \llagrangemult_\ipot / \rho, \xx) > 0$,
then we can conclude that $\linfun_\ipot(\localyyi[\star]{\ipot}, \xx) \geq 0$, leading to one of the next two cases.

In the second case, we replace the maximum term with the inner linear function.
Then the optimizer of the modified problem is found by taking the gradient of the objective with respect to $\localyyi{\ipot}$,
setting the gradient equal to the zero vector, and solving for $\localyyi{\ipot}$.
In other words, the optimizer is the solution for $\localyyi{\ipot}$ to the equation
\begin{equation}
\label{eq:potsystem}
\nabla_{\localyyi{\ipot}}
\left[
\param_\ipot \left( \linfun_\ipot(\localyyi{\ipot}, \xx) \right)^{\p_\ipot}
     + \frac{\admmstep}{2} \left\| \localyyi{\ipot} - \correspondingyyi{\ipot} + \frac{1}{\rho} \llagrangemult_\ipot \right\|_2^2
\right] = {\boldsymbol 0}~.
\end{equation}
This condition defines a simple system of linear equations.
If $\p_\ipot = 1$, then the coefficient matrix is diagonal and trivial to solve.
If $\p_\ipot = 2$, then the coefficient matrix is symmetric and positive definite, and the system can be solved
via Cholesky decomposition.
(Since the potentials of an HL-MRF often have shared structures, perhaps templated by a PSL program, the
Cholesky decompositions can be cached and shared among potentials for improved performance.)
Let $\localyyi[\prime]{\ipot}$ be the optimizer of the modified problem, i.e., the solution to equation~(\ref{eq:potsystem}).
If $\linfun_\ipot(\localyyi[\prime]{\ipot}, \xx) \geq 0$, then $\localyyi[\star]{\ipot} = \localyyi[\prime]{\ipot}$ because
we know the solution lies in the region $\linfun_\ipot(\localyyi{\ipot}, \xx) \geq 0$ and the objective of
problem~(\ref{eq:potsubproblem}) and the modified objective are equal on that region.
In fact, if $\p_\ipot = 2$, then $\linfun_\ipot(\localyyi[\prime]{\ipot}, \xx) \geq 0$ whenever
$\linfun_\ipot(\correspondingyyi{\ipot} - \llagrangemult_\ipot / \rho, \xx) \geq 0$, because
 the modified term is symmetric about the line $\linfun_\ipot(\localyyi{\ipot}, \xx) = 0$.
We therefore will only reach the following third case when $\p_\ipot = 1$.
If $\linfun_\ipot(\correspondingyyi{\ipot} - \llagrangemult_\ipot / \rho, \xx) > 0$ and
$\linfun_\ipot(\localyyi[\prime]{\ipot}, \xx) < 0$, then we can conclude that $\localyyi[\star]{\ipot}$
is the projection of $\correspondingyyi{\ipot} - \llagrangemult_{\ipot} / \rho$ onto the hyperplane
$\constr_\iconstr(\localyyi{\ipot}, \xx) = 0$.
This constraint must be active because it is violated by the optimizers of both modified objectives \citep[Lemma 17]{martins:jmlr15}.
Since the potential has a value of zero whenever the constraint is active, solving problem~(\ref{eq:potsubproblem}) reduces
to the projection operation.

For the local copies $\localyyi{\iconstr + \npot}$ for each constraint $\constr_\iconstr$,
the subproblem is easier:
\begin{equation}
\argmin_{\localyyi{\iconstr+\npot}}~\indicator_\iconstr \left[ \constr_\iconstr( \localyyi{\iconstr+\npot}, \xx ) \right] 
     + \frac{\admmstep}{2} \left\| \localyyi{\iconstr + \npot} - \correspondingyyi{\iconstr+\npot} + \frac{1}{\rho} \llagrangemult_{\iconstr+\npot} \right\|_2^2~.
\end{equation}
Whether $\constr_\iconstr$ is an equality or inequality constraint, the solution is the projection
of $\correspondingyyi{\iconstr+\npot} - \llagrangemult_{\iconstr+\npot} / \rho$ to the
feasible set defined by the constraint.
If $\constr_\iconstr$ is an equality constraint, i.e., $\iconstr \in \Equality$, then the optimizer
$\localyyi[\star]{\iconstr+\npot}$ is the projection of $\correspondingyyi{\iconstr+\npot} - \llagrangemult_{\iconstr+\npot} / \rho$
onto $\constr_\iconstr(\localyyi{\iconstr+\npot}, \xx) = 0$.
If, on the other hand, $\constr_\iconstr$ is an inequality constraint, i.e., $\iconstr \in \Inequality$, then
there are two cases.
First, if $\constr_\iconstr(\correspondingyyi{\iconstr+\npot} - \llagrangemult_{\iconstr+\npot} / \rho, \xx) \leq 0$,
then the solution is simply $\correspondingyyi{\iconstr+\npot} - \llagrangemult_{\iconstr+\npot} / \rho$.
Otherwise, it is again the projection onto $\constr_\iconstr(\localyyi{\iconstr+\npot}, \xx) = 0$.

To update the variables $\yy$~(\ref{eq:admmzupdate}), we solve the optimization
\begin{equation}
\argmin_{\yy}~\sum_{\iy=1}^\ny \indicator_{[0,1]} \left[ \y_\iy \right] + \frac{\admmstep}{2} \sum_{\isubproblem = 1}^{\npot + \nconstr} \left\| \localyyi{\isubproblem} - \correspondingyyi{\isubproblem} + \frac{1}{\rho} \llagrangemult_{\isubproblem} \right\|_2^2.
\end{equation}
The optimizer is the state in which $\y_\iy$ is set to the average of its corresponding local copies added with
their corresponding Lagrange multipliers divided by the step size $\admmstep$,
and then clipped to the $[0,1]$ interval.
More formally, let $\mathtt{copies}(\y_\iy)$ be the set of local copies $\y_c$ of $\y_\iy$, each with a corresponding Lagrange multiplier
$\lagrangemult_c$.
Then, we update each $\y_\iy$ using
\begin{equation}
\label{eq:consensusupdate}
\y_\iy \leftarrow \frac{1}{|\mathtt{copies}(\y_\iy)|}   \sum_{y_c \in \mathtt{copies}(\y_\iy)}{\left( y_c + \frac{\alpha_c}{\rho}\right)}
\end{equation}
and clip the result to $[0,1]$.
Specifically, if, after update~(\ref{eq:consensusupdate}), $\y_\iy > 1$, then we set $\y_\iy$ to 1 and likewise set it to 0
if $\y_\iy < 0$.

\begin{algorithm}
\caption{MAP Inference for HL-MRFs}
\label{alg:co}
\begin{algorithmic}
\STATE {\bfseries Input:} HL-MRF $\prob(\yy | \xx)$, $\admmstep > 0$
\vskip 0.4em

\STATE Initialize $\localyyi{\ipot}$ as local copies of variables $\correspondingyyi{\ipot}$ that are in $\pot_\ipot$, $j=1,\dots, \npot$

\STATE Initialize $\localyyi{\iconstr+\npot}$ as local copies of variables $\correspondingyyi{\iconstr+\npot}$ that are in $\constr_\iconstr$, $\iconstr=1,\dots, \nconstr$

\STATE Initialize Lagrange multipliers $\llagrangemult_\isubproblem$ corresponding to copies $\localyyi{\isubproblem}$, $\isubproblem=1,\dots, \npot+\nconstr$
\vskip 0.5em

\WHILE {not converged}
\vskip 0.5em

  \FOR {$\ipot = 1,\dots, \npot$}
  \STATE $\llagrangemult_\ipot \leftarrow \llagrangemult_\ipot + \admmstep(\localyyi{\ipot}-\correspondingyyi{\ipot})$
  \STATE $\localyyi{\ipot} \leftarrow \correspondingyyi{\ipot} - \frac{1}{\rho}\llagrangemult_\ipot$
  \IF {$\linfun_\ipot(\localyyi{\ipot}, \xx) > 0$}
    \STATE $\localyyi{\ipot} \leftarrow \argmin_{ \localyyi{\ipot}} 
    \param_\ipot \left( \linfun_\ipot(\localyyi{\ipot}, \xx) \right)^{\p_\ipot}
     + \frac{\admmstep}{2} \left\| \localyyi{\ipot} - \correspondingyyi{\ipot} + \frac{1}{\rho} \llagrangemult_\ipot \right\|_2^2 $
    \IF {$\linfun_\ipot(\localyyi{\ipot}, \xx) < 0$}
      \STATE $\localyyi{\ipot} \leftarrow \project_{\linfun_\ipot = 0} (\correspondingyyi{\ipot} - \frac{1}{\rho} \llagrangemult_{\ipot})$
    \ENDIF
  \ENDIF
  \ENDFOR
\vskip 0.5em

  \FOR {$\iconstr = 1,\dots, \nconstr$}
  \STATE $\llagrangemult_{\iconstr+\npot} \leftarrow \llagrangemult_{\iconstr+\npot} + \rho (\localyyi{\iconstr+\npot} - \correspondingyyi{\iconstr+\npot})$
  \STATE $\localyyi{\iconstr+\npot} \leftarrow \project_{\constr_\iconstr}(\correspondingyyi{\iconstr + \npot} - \frac{1}{\rho} \llagrangemult_{\iconstr+\npot})$
  \ENDFOR
\vskip 0.5em

  \FOR {$\iy = 1,\dots, \ny$}
  \STATE $\y_\iy \leftarrow \frac{1}{|\mathtt{copies}(\y_\iy)|}   \sum_{y_c \in \mathtt{copies}(\y_\iy)}{\left( y_c + \frac{\alpha_c}{\rho}\right)}$
\vskip 0.1em
  \STATE Clip $\y_\iy$ to [0,1]
  \ENDFOR
  \vskip 0.5em

\ENDWHILE
\end{algorithmic}
\end{algorithm}

Algorithm \ref{alg:co} shows the complete pseudocode for MAP inference.
The method starts by initializing local copies
of the variables that appear in each potential and constraint, along with a corresponding Lagrange
multiplier for each copy.
Then, until convergence, it iteratively performs the updates~(\ref{eq:admmyupdate}),~(\ref{eq:admmxupdate}),
and~(\ref{eq:admmzupdate}).
In the pseudocode, we have interleaved updates~(\ref{eq:admmyupdate}) and~(\ref{eq:admmxupdate}),
updating both the Lagrange multipliers $\llagrangemult_\isubproblem$ and the local copies $\localyyi{\isubproblem}$
together for each subproblem, because they are local operations that do not depend on other variables
once $\yy$ is updated in the previous iteration.
This independence reveals another advantage of our inference algorithm: it is very easy to parallelize.
The updates~(\ref{eq:admmyupdate}) and~(\ref{eq:admmxupdate}) can be performed in parallel, the results gathered,
update~(\ref{eq:admmzupdate}) performed, and the updated $\yy$ broadcast back to the subproblems.
Parallelization makes our MAP inference algorithm even faster and more scalable.

\subsection{Lazy MAP Inference}
\label{sec:lazymap}

\newcommand{\lazysubset}{\hat{\ppot}}

One interesting and useful property of HL-MRFs is that it is not always necessary to completely
materialize the distribution in order to find a MAP state.
Consider a subset $\lazysubset$ of the index set $\{1,\dots,\npot\}$ of the potentials $\ppot$.
Observe that if a feasible assignment to $\yy$ minimizes
\begin{equation}
\sum_{\ipot \in \lazysubset} \param_\ipot \pot_\ipot(\yy, \xx)
\end{equation}
and $\pot_\ipot(\yy, \xx) = 0, \forall \ipot \notin \lazysubset$, then that assignment must be a MAP
state because 0 is the global minimum for any potential.
Therefore, if we can identify a set of potentials that is small, such that all the other potentials
are 0 in a MAP state, then we can perform MAP inference in a reduced amount of time.
Of course, identifying this set is as hard as MAP inference itself, but we can iteratively grow the set
by starting with an initial set, performing inference over the current set, adding any potentials that have
nonzero values, and repeating.

Since the lazy inference procedure requires that the assignment be feasible, there are two ways to handle any constraints
in the HL-MRF.
One is to include all constraints in the inference problem from the beginning.
This strategy ensures feasibility, but the idea of lazy grounding can also be extended to constraints to improve performance further.
Just as we check if potentials are unsatisfied, i.e., nonzero, we can also check if constraints are unsatisfied, i.e., violated.
So the algorithm now iteratively grows the set of active potentials and active constraints, adding any that are unsatisfied
until the MAP state of the HL-MRF defined by the active potentials and constraints is also a feasible MAP state of the true HL-MRF.

The efficiency of lazy MAP inference can be improved heuristically by not adding all unsatisfied potentials and constraints,
but instead only adding those that are unsatisfied by some threshold.
This heuristic can decrease computational cost significantly, although the results are no longer guaranteed to be correct.
Bounding the resulting error when possible is an important direction for future work.

\subsection{Evaluation of MAP Inference}
\label{sec:map_experiments}

In this section we evaluate the empirical performance of our MAP inference algorithm.\footnote{Code is available at \url{https://github.com/stephenbach/bach-jmlr17-code}.} We compare its running times against those of
MOSEK,\footnote{http://www.mosek.com} a commercial convex optimization toolkit that uses interior-point methods (IPMs).
We confirm the results of \citet{yanover:jmlr06} that IPMs do not scale well to large structured-prediction problems,
and we show that our MAP inference algorithm scales much better.
In fact, we observe that our method scales linearly in practice with the number of potentials and constraints in the HL-MRF.

We evaluate scalability by generating social networks of varying sizes,
constructing HL-MRFs over them, and measuring the running time required to find a MAP state.
We compare our algorithm to MOSEK's IPM.
The social networks we generate are designed to be representative of common social-network analysis tasks.
We generate networks of users that are connected by different types of relationships, such as friendship and marriage,
and our goal is to predict the political preferences, e.g., liberal or conservative, of each user.
We also assume that we have local information about each user, representing features such as demographic information.

We generate the social networks using power-law distributions according to a procedure described
by \citet{broecheler:socialcom10}.
For a target number of users $N$, in-degrees and out-degrees $d$ for each edge type are sampled from the power-law distribution
$D(k) \equiv \alpha k^{-\gamma}$.
Incoming and outgoing edges of the same type are then matched randomly to create edges
until no more matches are possible.
The number of users is initially the target number plus the expected number of users with zero edges,
and then users without any edges are removed.
We use six edge types with various parameters to represent relationships
in social networks with different combinations of  abundance and exclusivity,
choosing $\gamma$ between 2 and 3,  and $\alpha$ between 0 and 1, as suggested by Broecheler et al.
We then annotate each vertex with a value in $[-1,1]$ uniformly at random to represent local features indicating one political preference or the other.

We generate social networks with between 22k and 66k vertices,
which induce HL-MRFs with between 130k and 397k total potentials and constraints.
In all the HL-MRFs, roughly 85\% of those totals are potentials.
For each social network, we create both a (log) piecewise-linear HL-MRF
($\p_\ipot=1,\forall \ipot = 1,\dots,\npot$ in Definition~\ref{def:energy})
and a piecewise-quadratic one ($\p_\ipot=2,\forall \ipot = 1,\dots,\npot$).
We weight local features with a parameter of $0.5$ and choose parameters in $[0, 1]$ for the relationship potentials representing a mix of more and less influential relationships.

We implement ADMM in Java and compare with the IPM in MOSEK (version 6) by encoding the entire MPE problem as a
linear program or a second-order cone program as appropriate and passing the encoded problem
via the Java native interface wrapper.
All experiments are performed on a single machine with a 4-core 3.4 GHz Intel Core i7-3770 processor with 32GB of RAM.
Each optimizer used a single thread, and all results are averaged over 3 runs.

\begin{figure}
\centering
\label{fig:map_experiments_scalability}
	\subfloat[Linear MAP problems]{
		\centering
		\includegraphics[width=0.45\linewidth]{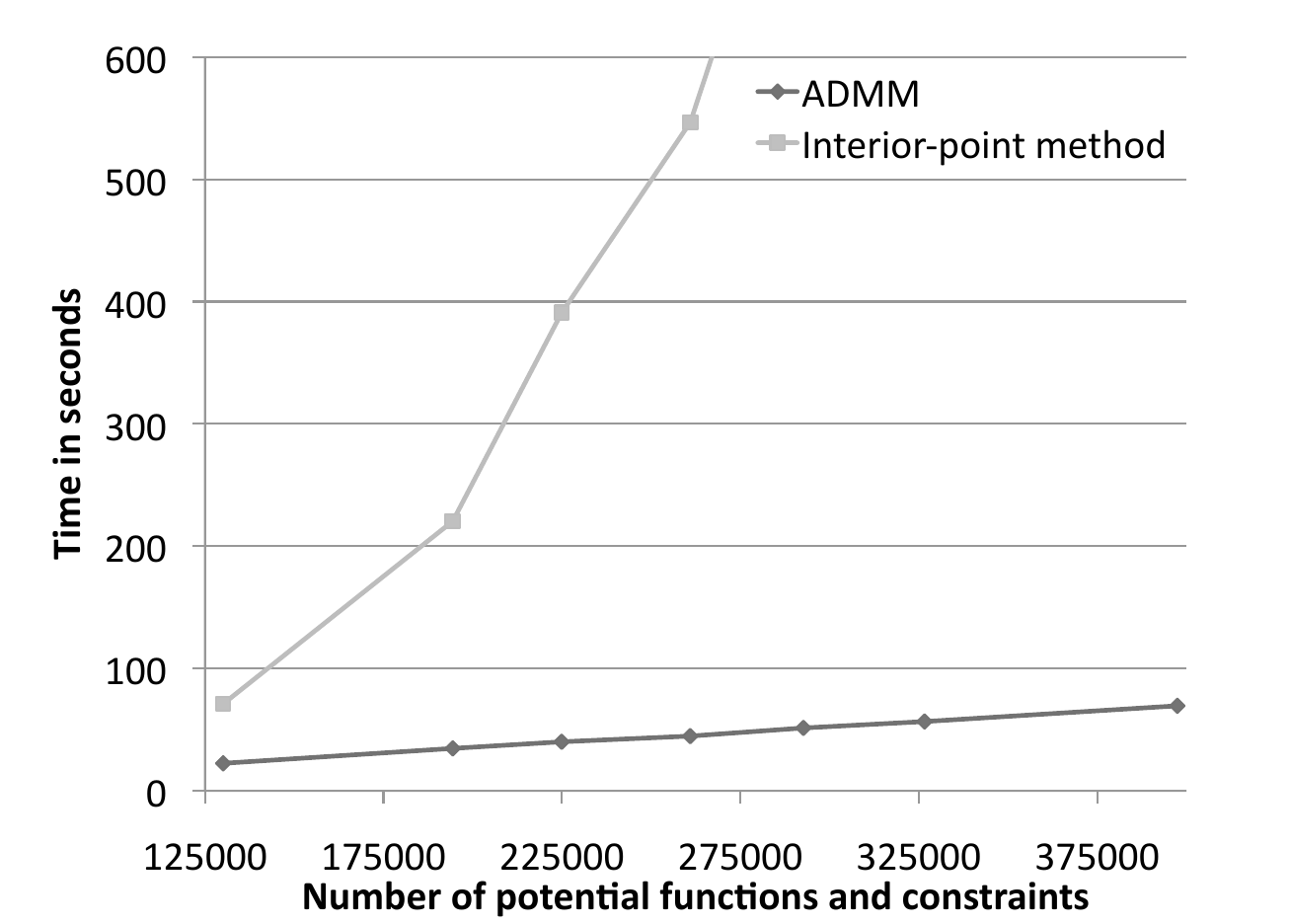}
		\label{fig:linear}
	}
	\;
	\subfloat[Quadratic MAP problems]{
		\centering
		\includegraphics[width=0.45\linewidth]{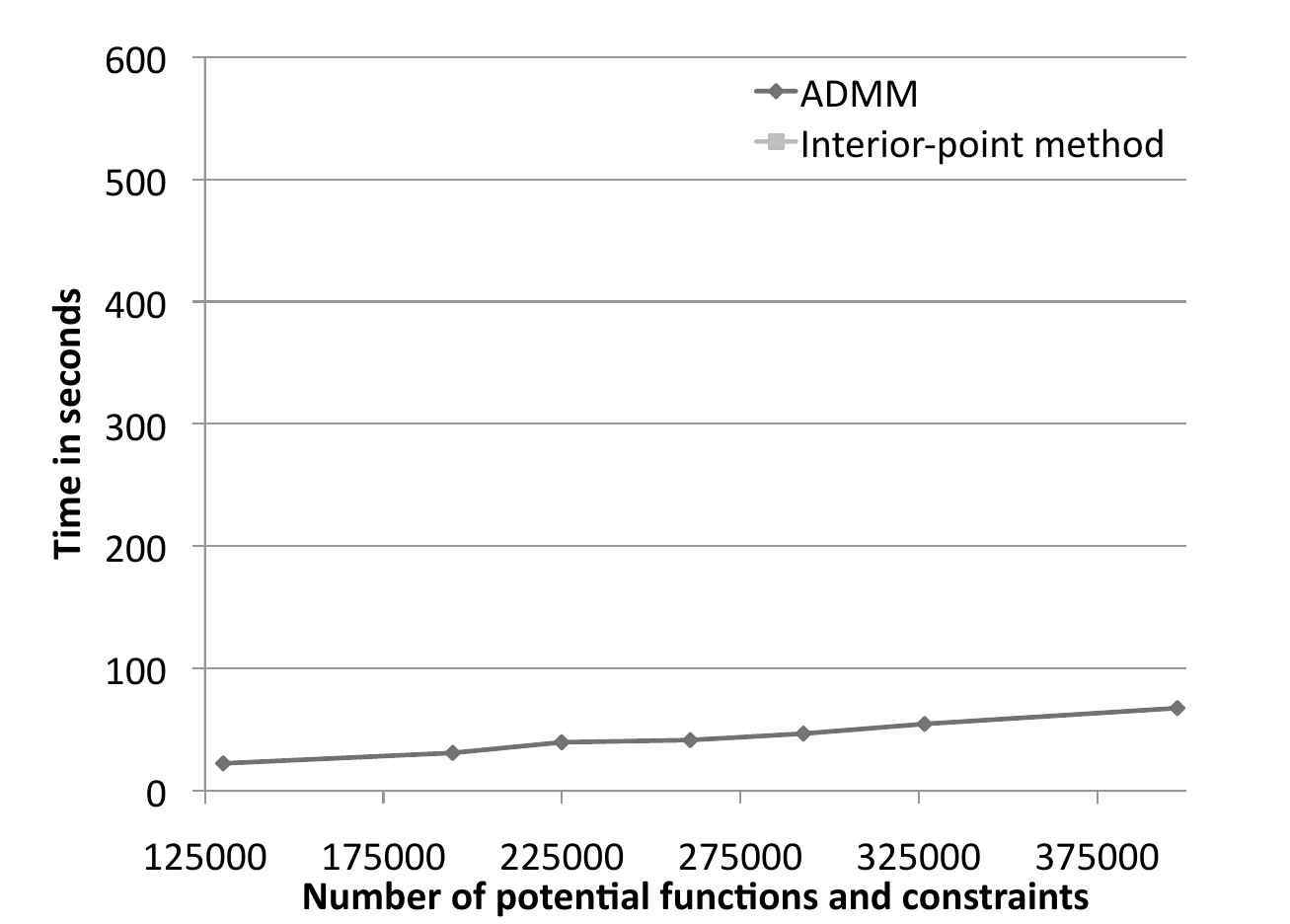}
		\label{fig:quad}
	}
	\\
	\subfloat[Linear MAP problems (log scale)]{
		\centering
		\includegraphics[width=0.45\linewidth]{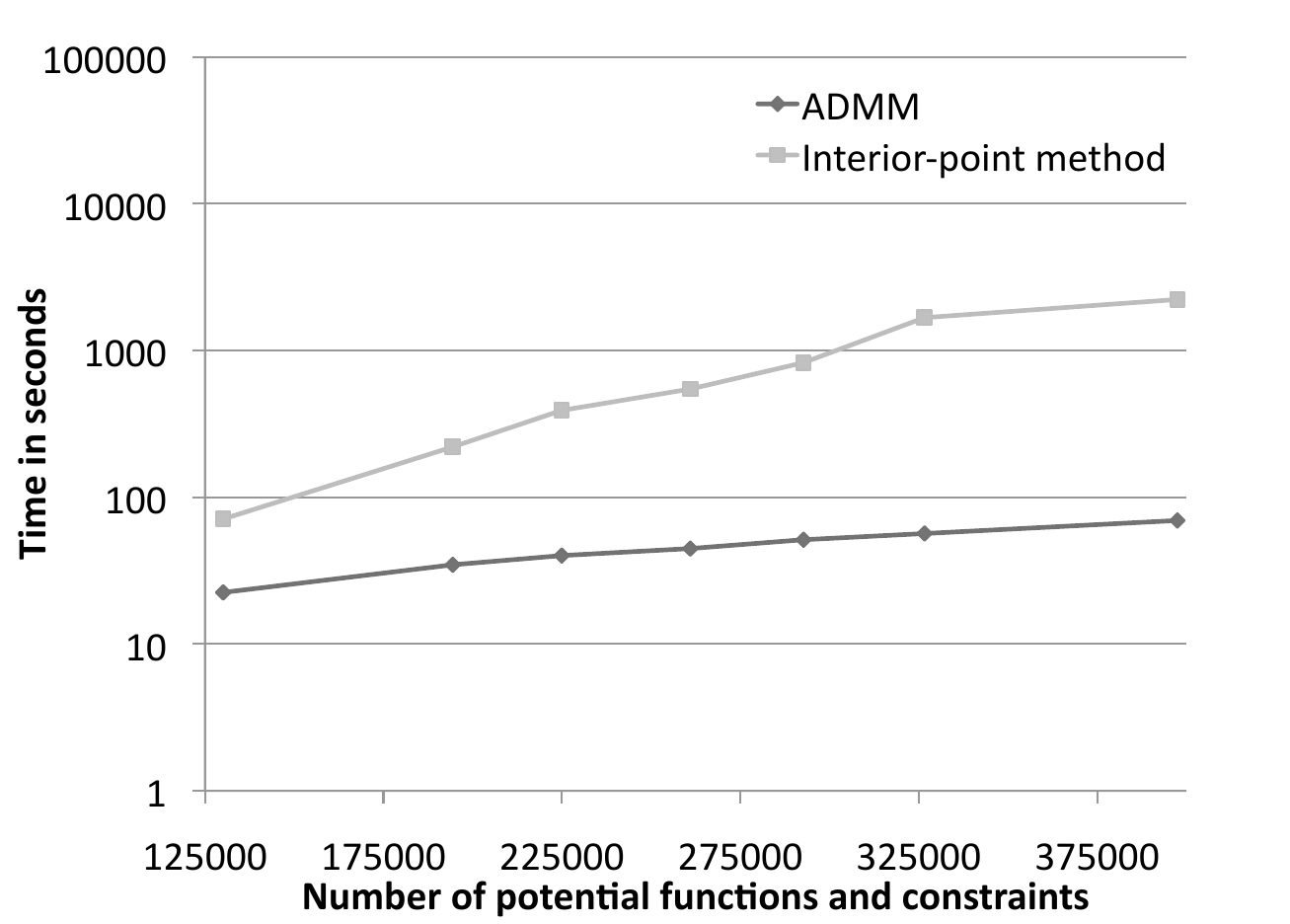}
		\label{fig:linear-log}
	}
	\;
	\subfloat[Quadratic MAP problems (log scale)]{
		\centering
		\includegraphics[width=0.45\linewidth]{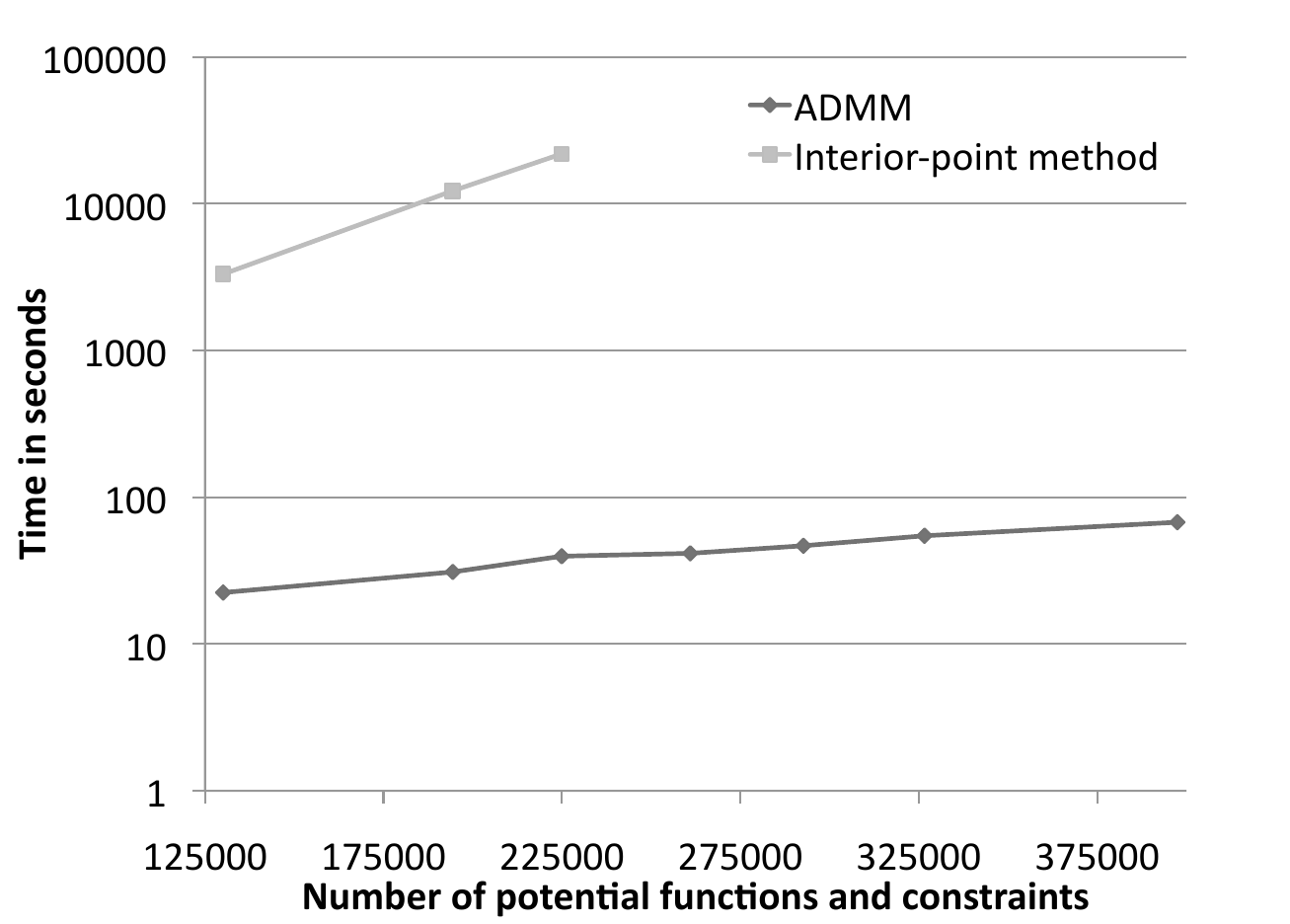}
		\label{fig:quad-log}
	}
	\caption{Average running times to find a MAP state for HL-MRFs.}
	\label{fig:all}
\end{figure}

We first evaluate the scalability of ADMM when solving piecewise-linear MAP problems and compare with MOSEK's
interior-point method.
Figures~\ref{fig:linear} (normal scale) and~\ref{fig:linear-log} (log scale) show the results.
The running time of the IPM quickly explodes as the problem size increases.
The IPM's average running time on the largest problem is about 2,200 seconds (37 minutes).
This result demonstrates the limited scalability of the interior-point method.
In contrast, ADMM displays excellent scalability.
The average running time on the largest problem is about 70 seconds.
Further, the running time appears to grow linearly in the number of potential functions and constraints
in the HL-MRF, i.e., the number of subproblems that must be solved at each iteration.
The line of best fit for all runs on all sizes has a coefficient of determination $R^2 = 0.9972$.
Combined with Figure~\ref{fig:linear}, this shows that ADMM scales linearly with increasing problem size in this experiment.
We emphasize that the implementation of ADMM is research code written in Java and the IPM is a
commercial package compiled to native machine code.

We then evaluate the scalability of ADMM when solving piecewise-quadratic MAP problem and again compare with MOSEK.
Figures~\ref{fig:quad} (normal scale) and~\ref{fig:quad-log} (log scale) show the results.
Again, the running time of the interior-point method quickly explodes.
We can only test it on the three smallest problems, the largest of which took an average of about
21k seconds to solve (over 6 hours).
ADMM again scales linearly to the problem ($R^2 = 0.9854$).
It is just as fast for quadratic problems as linear ones, taking average of about 70 seconds on the largest problem.

One of the advantages of IPMs is great numerical stability and accuracy.
Consensus optimization, which treats both objective terms and constraints as subproblems,
often returns solutions that are only optimal and feasible to moderate precision for non-trivially constrained
problems \citep{boyd:book11}.
Although this is often acceptable, we quantify the mix of infeasibility
and suboptimality by repairing the infeasibility and measuring the resulting total suboptimality.
We first project the solutions returned by consensus optimization onto the feasible region, which
took a negligible amount of computational time.
Let $p_\text{ADMM}$ be the value of the objective in Problem~(\ref{eq:modmap})
at such a point and let $p_\text{IPM}$ be the value of the objective at the solution returned by the IPM.
Then the relative error on that problem is $(p_\text{ADMM} - p_\text{IPM}) / p_\text{IPM}$.
The relative error was consistently small; it varied between 0.2\% and 0.4\%, and did not trend upward as the problem size increased.
This shows that ADMM was accurate, in addition to being much more scalable.



\section{Weight Learning}
\label{sec:learning}

In this section we present three weight learning methods for HL-MRFs, each with a different objective function.
The first method approximately maximizes the likelihood of the training data.
The second method maximizes the pseudolikelihood.
The third method finds a large-margin solution, preferring weights that discriminate the ground truth from other nearby states.
Since weights are often shared among many potentials defined by a template,
such as all the groundings of a PSL rule, we describe these learning algorithms in terms of templated HL-MRFs.
We introduce some necessary notation for HL-MRF templates.
Let $\Temps = (\temp_1,\dots,\temp_\ntemp)$ denote a vector of templates with associated weights
$\PParam = (\Param_1,\dots,\Param_{\ntemp})$.
We partition the potentials by their associated templates and let $\temp_\itemp$ also denote the set of indices
of the potentials defined by that template.
So, $\ipot \in \temp_{\itemp}$ is a shorthand for saying that the potential $\pot_\ipot(\yy, \xx)$ was defined by template
$\temp_\itemp$.
Then, we refer to the sum of the potentials defined by a template as
\begin{equation}
\Pot_{\itemp}(\yy,\xx) = \sum_{\ipot \in \temp_{\itemp}} \pot_\ipot(\yy,\xx)~.
\end{equation}
In the defined HL-MRF, the weight of the $\ipot$-th hinge-loss potential is set to the weight of the template from
which it was derived, i.e., $ \param_\ipot = \Param_{\itemp}$, for each  $\ipot \in \temp_{\itemp}$.
Equivalently, we can rewrite the hinge-loss energy function as
\begin{equation}
\energy(\yy, \xx) = \PParam^\top\PPot(\yy, \xx)~,
\end{equation}
where $\PPot(\yy, \xx) = (\Pot_1(\yy,\xx),\dots,\Pot_\ntemp(\yy,\xx))$.

\subsection{Structured Perceptron and Approximate Maximum Likelihood Estimation}
\label{sec:maxlikelihood}

The canonical approach for learning parameters $\PParam$ is to maximize the log-likelihood of training data.
The partial derivative of the log-likelihood with respect to a parameter $\Param_q$ is
\begin{equation}
\frac{\partial \log \prob(\yy | \xx)}{\partial \Param_\itemp} 
= \E_{\PParam} \left[ \Pot_\itemp(\yy, \xx) \right]
- \Pot_\itemp(\yy, \xx),
\end{equation}
where $\E_{\PParam}$ is the expectation under the distribution defined by $\PParam$.
For a smoother ascent, it is often helpful to divide the $\itemp$-th component of
the gradient by the number of groundings $|\temp_\itemp|$ of the $\itemp$-th template \citep{lowd:pkdd07}, which we do in our experiments.
Computing the expectation is intractable, so we use a common approximation \citep[e.g.,][]{collins:emnlp02, singla:aaai05, poon:uai11}: 
the values of the potentials at the most probable setting of $\yy$ with the current parameters, i.e., a MAP state.
Using a MAP state makes this learning approach a structured variant of voted perceptron \citep{collins:emnlp02}, and we expect it to do best when the space of explored distributions has relatively low entropy.
Following voted perceptron, we take steps of fixed length in the direction of the gradient, then average the points after all steps.
Any step that is outside the feasible region is projected back before continuing.

\subsection{Maximum Pseudolikelihood Estimation}
\label{sec:maxpseudo}

An alternative to structured perceptron is \emph{maximum-pseudolikelihood estimation} (MPLE) \citep{besag:jsr75},
which maximizes the likelihood of each variable conditioned on all other variables, i.e.,
\begin{align}
\prob^*(\yy | \xx)
	&= \prod_{i=1}^{\ny} \prob^*(\y_i | \MB(\y_i), \xx) \\
	&= \prod_{i=1}^{\ny} \frac{1}{\partition_\iy(\PParam, \yy, \xx)} \exp\left[ -\energy^i(\y_i,\yy,\xx) \right] ; \\
\partition_\iy(\PParam,\yy, \xx)
	&= \int_{\y_i}\exp\left[ -\energy^i(\y_i,\yy,\xx) \right] ; \\
\energy^i(\y_i,\yy,\xx)
	&= \sum_{\ipot : \iy \in \pot_\ipot} \param_\ipot \pot_j\left( \{\y_i \cup \yy\rem{i}\},\xx \right) .
\end{align}
Here, $\iy \in \pot_\ipot$ means that $\y_i$ is involved in $\pot_j$, and $\MB(\y_\iy)$ denotes the \emph{Markov blanket} of $\y_i$---that is, the set of variables that co-occur with $\y_\iy$ in any potential function. The partial derivative of the log-pseudolikelihood with respect to $\Param_\itemp$ is
\begin{equation}
\frac{\partial\log\prob^*(\yy | \xx)}{\partial \Param_\itemp}
	= \sum_{i=1}^{\ny} \E_{\y_i | \MB} \left[ \sum_{\ipot \in \temp_\itemp : \iy \in \pot_\ipot} \pot_\ipot(\yy,\xx) \right] - \Pot_\itemp(\yy,\xx)~.
\end{equation}
Computing the pseudolikelihood gradient does not require joint inference and takes time linear in the size of $\yy$.
However, the integral in the above expectation does not readily admit a closed-form antiderivative,
so we approximate the expectation.
When a variable is unconstrained, the domain of integration is a one-dimensional interval on the real number line,
so Monte Carlo integration quickly converges to an accurate estimate of the expectation.

We can also apply MPLE when the constraints are not too interdependent.
For example, for linear equality constraints over disjoint groups of variables (e.g., variable sets that must sum to 1.0),
we can block-sample the constrained variables by sampling uniformly from a simplex.
These types of constraints are often used to represent categorical labels.
We can compute accurate estimates quickly because these blocks are typically low-dimensional.

\subsection{Large-Margin Estimation}
\label{sec:largemargin}

A different approach to learning drops the probabilistic interpretation of the model and
views HL-MRF\ inference as a prediction function.
Large-margin estimation (LME) shifts the goal of learning from producing accurate probabilistic models to instead producing accurate MAP predictions.
The learning task is then to find weights $\PParam$ that separate the ground truth from other nearby states by a large margin.
We describe in this section a large-margin method based on the cutting-plane approach for structural support vector machines \citep{joachims:ml09}.

The intuition behind large-margin structured prediction is that the ground-truth state should have energy lower than any alternate state by a large margin. 
In our setting, the output space is continuous, so we parameterize this margin criterion with a continuous loss function. 
For any valid output state $\altyy$, a large-margin solution should satisfy
\begin{equation}
\energy(\yy, \xx) \le \energy(\altyy, \xx) - \loss(\yy, \altyy),~~\forall \altyy,
\end{equation}
where the loss function $\loss(\yy, \altyy)$ measures the disagreement between a state $\altyy$ and the training label state $\yy$.
A common assumption is that the loss function decomposes over the prediction components, i.e.,
$\loss(\yy, \altyy) = \sum_\iy \loss(\y_\iy, \alty_\iy)$.
In this work, we use the $\ell_1$ distance as the loss function, so $\loss(\yy, \altyy) = \sum_\iy \| \y_\iy - \alty_\iy \|_1$.
Since we do not expect all problems to be perfectly separable, we relax the large-margin constraint with a penalized slack $\slack$. 
We obtain a convex learning objective for a large-margin solution
\begin{equation}
\begin{aligned}
\min_{\PParam \ge 0} ~ & ~ \frac{1}{2} ||\PParam||^{2} + C \slack \\
 \mathrm{s.t.} ~ & ~ \PParam^{\top} ( \Pot(\yy, \xx) - \Pot(\altyy, \xx) ) \le - \loss(\yy, \altyy) + \slack,~~\forall \altyy,
\end{aligned}
\end{equation}
where $\Pot(\yy, \xx) = (\Pot_1(\yy,\xx),\dots,\Pot_\ntemp(\yy,\xx))$ and $C > 0$ is a user-specified parameter.
This formulation is analogous to the margin-rescaling approach by \citet{joachims:ml09}. Though such a structured objective is natural and intuitive, its number of constraints is the cardinality of the output space, which here is infinite.
Following their approach, we optimize subject to the infinite constraint set using a \emph{cutting-plane algorithm}: 
we greedily grow a set $\constraints$ of constraints by iteratively adding the worst-violated constraint
given by a \emph{separation oracle}, then updating $\PParam$ subject to the current constraints.
The goal of the cutting-plane approach is to efficiently find the set of active constraints at the solution for the full objective, without having to enumerate the infinite inactive constraints. 
The worst-violated constraint is
\begin{equation}
\argmin_{\altyy} \PParam^{\top} \Pot(\altyy, \xx) - \loss(\yy, \altyy).
\end{equation}
The separation oracle performs loss-augmented inference by adding additional potentials to the HL-MRF.
For ground truth in $\{0,1\}$, these loss-augmenting potentials are also examples of hinge-losses,
and thus adding them simply creates an augmented HL-MRF.
The worst-violated constraint is then computed as standard inference on the loss-augmented HL-MRF.
However, ground truth values in the interior $(0,1)$ cause any distance-based loss to be concave,
which require the separation oracle to solve a non-convex objective.
In this case, we use the \emph{difference of convex functions algorithm}
\citep{an:aor05} to find a local optimum.
Since the concave portion of the loss-augmented inference objective pivots around the ground truth
value, the subgradients are 1 or $-1$, depending on whether the current value is greater than the ground
truth.
We simply choose an initial direction for interior labels by rounding, and flip the direction of the
subgradients for variables whose solution states are not in the interval corresponding to the subgradient
direction until convergence.

Given a set $\constraints$ of constraints, we solve the SVM objective as in the primal form
\begin{equation}
\begin{aligned}
\min_{\PParam \ge 0} ~ & ~ \frac{1}{2} ||\PParam||^{2} + C \slack \\
 \mathrm{s.t.} ~ & ~ \constraints.
\end{aligned}
\end{equation}
We then iteratively invoke the separation oracle to find the worst-violated constraint. 
If this new constraint is not violated, or its violation is within numerical tolerance, we have found the max-margin solution. 
Otherwise, we add the new constraint to $\constraints$, and repeat.

One fact of note is that the large-margin criterion always requires some slack for HL-MRFs with squared potentials.
Since the squared hinge potential is quadratic and the loss is linear, there always exists a small enough distance from
the ground truth such that an absolute (i.e., linear) distance is greater than the squared distance.
In these cases, the slack parameter trades off between the peakedness of the learned quadratic energy function
and the margin criterion.

\subsection{Evaluation of Learning}
\label{sec:experiments}

To demonstrate the flexibility and effectiveness of learning with HL-MRFs, we test them on four diverse
tasks: node labeling, link labeling, link prediction, and image completion.\footnote{Code is available at \url{https://github.com/stephenbach/bach-jmlr17-code}.}
Each of these experiments represents a problem domain that is best solved with structured-prediction approaches because their dependencies
are highly structural.
The experiments show that HL-MRFs perform as well as or better than canonical approaches. 

For these diverse tasks, we compare against a number of competing methods. 
For node and link labeling, we compare HL-MRFs to discrete Markov random fields (MRFs).
We construct them with Markov logic networks (MLNs) \citep{richardson:ml06}, which template discrete MRFs using logical rules
similarly to PSL.
We perform inference in discrete MRFs using Gibbs sampling,
and we find approximate MAP states during learning using the search algorithm MaxWalkSat \citep{richardson:ml06}.
For link prediction for preference prediction, a task that is inherently continuous and nontrivial to encode in discrete logic, 
we compare against Bayesian probabilistic matrix factorization (BPMF) \citep{salakhutdinov:icml08}. 
Finally, for image completion, we run the same experimental setup as \citet{poon:uai11} and compare against the results they
report, which include tests using sum product networks, deep belief networks \citep{hinton:science06},
and deep Boltzmann machines \citep{salakhutdinov:aistats09}.

We train HL-MRFs and discrete MRFs with all three learning methods: structured perceptron (SP), maximum
pseudolikelihood estimation(MPLE), and large-margin estimation (LME).
When appropriate, we evaluate statistical significance using a paired t-test with rejection threshold 0.01. 
We describe the HL-MRFs used for our experiments using the PSL rules that define them.
To investigate the differences between linear and squared potentials we use both in our experiments.
HL-MRF-L refers to a model with all linear potentials and HL-MRF-Q to one with all squared potentials.
When training with SP and MPLE, we use 100 gradient steps and a step size of 1.0 (unless otherwise noted), and we average the iterates as in voted perceptron.
For LME, we set $C=0.1$.
We experimented with various settings, but the scores of HL-MRFs and discrete MRFs were not sensitive to changes.

\subsubsection{Node Labeling}
\label{sec:document_classification}

When classifying documents, links between those documents---such as hyperlinks, citations, or shared authorship---provide extra signal
beyond the local features of individual documents.
Collectively predicting document classes with these links tends to improve accuracy \citep{sen:aimag08}. 
We classify documents in citation networks using data from the Cora and Citeseer scientific paper
repositories.
The Cora data set contains 2,708 papers in seven categories, and 5,429 directed citation links. 
The Citeseer data set contains 3,312 papers in six categories, and 4,591 directed citation links.
Let the predicate \texttt{Category/2} represent the category of each document and \texttt{Cites/2} represent a citation from
one document to another.

The prediction task is, given a set of seed documents whose labels are observed, to infer the remaining document classes by
propagating the seed information through the network. For each of 20 runs, we split the data sets 50/50 into training and testing
partitions, and seed half of each set. 
To predict discrete categories with HL-MRFs we predict the category with the highest predicted value.

We compare HL-MRFs to discrete MRFs on this task.
For prediction, we performed 2500 rounds of Gibbs sampling, 500 of which were discarded as burn-in.
We construct both using the same logical rules, which simply encode the tendency for a class to propagate across citations.
For each category \texttt{"C\_i"}, we have the following two rules, one for each direction of citation.
\begin{align*}
&\texttt{Category(A, "C\_i") \psland\ Cites(A, B) \pslimpliesright\ Category(B, "C\_i")}\\
&\texttt{Category(A, "C\_i") \psland\ Cites(B, A) \pslimpliesright\ Category(B, "C\_i")}
\end{align*}
We also constrain the atoms of the \texttt{Category/2} predicate to sum to 1.0 for a given document as follows.
\[
\texttt{Category(D, +C) = 1.0 .}
\]
Table~\ref{tab:classification} lists the results of this experiment.
HL-MRFs are the most accurate predictors on both data sets.
Both variants of HL-MRFs are also much faster than discrete MRFs.
See Table~\ref{tab:timing} for average inference times over five folds.

\begin{table}
\caption{Average accuracy of classification by HL-MRFs and discrete MRFs.
Scores statistically equivalent to the best scoring method are typed in bold.}
\label{tab:classification}
\begin{center}
\begin{tabular}{lccc}
\toprule
  & Citeseer & Cora\\
\midrule
HL-MRF-Q (SP) & \textbf{0.729} & \textbf{0.816}\\
HL-MRF-Q (MPLE) & \textbf{0.729} & \textbf{0.818}\\
HL-MRF-Q (LME) & 0.683 & 0.789\\
\addlinespace
HL-MRF-L (SP) & \textbf{0.724} & 0.802\\
HL-MRF-L (MPLE) & \textbf{0.729} & \textbf{0.808}\\
HL-MRF-L (LME) & 0.695 & 0.789\\
\addlinespace
MRF (SP) & 0.686 & 0.756\\
MRF (MPLE) & 0.715 & 0.797\\
MRF (LME) & 0.687 & 0.783\\
\bottomrule
\end{tabular}
\end{center}
\end{table}

\subsubsection{Link Labeling}
\label{sec:trust}

\begin{table}
\caption{Average area under ROC and precision-recall curves of social-trust prediction by HL-MRFs and discrete MRFs.
Scores statistically equivalent to the best scoring method by metric are typed in bold.}\label{tab:epinions}
\begin{center}
\begin{tabular}{lrrr}
\toprule
 & ROC & P-R (+) & P-R (-) \\
\midrule
HL-MRF-Q (SP) & \textbf{0.822} & \textbf{0.978} & \textbf{0.452}\\
HL-MRF-Q (MPLE) & \textbf{0.832} & \textbf{0.979} & \textbf{0.482}\\
HL-MRF-Q (LME) & \textbf{0.814} & \textbf{0.976} & \textbf{0.462}\\
\addlinespace
HL-MRF-L (SP) & 0.765 & 0.965 & 0.357\\
HL-MRF-L (MPLE) & 0.757 & 0.963 & 0.333\\
HL-MRF-L (LME) & 0.783 & 0.967 & \textbf{0.453}\\
\addlinespace
MRF (SP) & 0.655 & 0.942 & 0.270\\
MRF (MPLE) & 0.725 & 0.963 & 0.298\\
MRF (LME) & 0.795 & \textbf{0.973} & \textbf{0.441}\\
\bottomrule
\end{tabular}
\end{center}
\end{table}

An emerging problem in the analysis of online social networks is the task of inferring the level of trust between individuals.
Predicting the strength of trust relationships can provide useful information for viral marketing, recommendation engines, and
 internet security.
 HL-MRFs with linear potentials have been applied by \citet{huang:sbp13} to this task, showing superior results with models
 based on sociological theory.
 We reproduce their experimental setup using their sample of the signed Epinions trust network,
 orginally collected by \citet{richardson:iswc03}, in which users indicate whether
 they trust or distrust other users.
 We perform eight-fold cross-validation.
 In each fold, the prediction algorithm observes the entire unsigned social network and all but 1/8 of the trust ratings.
 We measure prediction accuracy on the held-out 1/8.
 The sampled network contains 2,000 users, with 8,675 signed links.
 Of these links, 7,974 are positive and only 701 are negative, making it a sparse prediction task.

We use a model based on the social theory of \emph{structural balance}, which suggests that social structures are governed by a
system that prefers triangles that are considered balanced.
Balanced triangles have an odd number of positive trust relationships; thus, considering all possible directions of links that
form a triad of users, there are sixteen logical implications of the following form.
\[
\texttt{Trusts(A,B) \psland\ Trusts(B,C) \pslimpliesright\ Trusts(A,C)}
\]
\citet{huang:sbp13} list all sixteen of these rules, a reciprocity rule, and a prior in their \emph{Balance-Recip} model,
which we omit to save space. 

Since we expect these structural implications to vary in accuracy, learning weights for these rules provides
better models.
Again, we use these rules to define HL-MRFs and discrete MRFs, and we train them using various learning algorithms.
For inference with discrete MRFs, we perform 5000 rounds of Gibbs sampling, of which the first 500 are burn-in.
We compute three metrics: the area under the receiver operating characteristic (ROC) curve, and the areas under the
precision-recall curves for positive trust and negative trust. 
On all three metrics, HL-MRFs with squared potentials score significantly higher. 
The differences among the learning methods for squared HL-MRFs are insignificant,
but the differences among the models is statistically significant for the ROC metric.
For area under the precision-recall curve for positive trust, discrete MRFs trained with LME are statistically tied with the best score,
and both HL-MRF-L and discrete MRFs trained with LME are statistically tied with the best area under the precision-recall curve for
negative trust. 
The results are listed in Table~\ref{tab:epinions}.

Though the random fold splits are not the same, using the same experimental setup,
\citet{huang:sbp13} also scored the precision-recall area for negative trust of standard trust prediction algorithms
EigenTrust \citep{kamvar:www03} and TidalTrust \citep{golbeck:thesis05}, which scored 0.131 and 0.130, respectively.
The logical models based on structural balance that we run here are significantly more accurate, and HL-MRFs more than
discrete MRFs.

In addition to comparing favorably with regard to predictive accuracy, inference in HL-MRFs is also much faster than in discrete
MRFs.
Table~\ref{tab:timing} lists average inference times on five folds of three prediction tasks: Cora, Citeseer, and Epinions.
This illustrates an important difference between performing structured prediction via convex inference
versus sampling in a discrete prediction space: convex inference can be much faster.

\begin{table}
\caption{Average inference times (reported in seconds) of single-threaded HL-MRFs and discrete MRFs.
}\label{tab:timing}
\begin{center}
\begin{tabular}{lrrr}
\toprule
 & Citeseer & Cora & Epinions \\
\midrule
HL-MRF-Q & 0.42 & 0.70 & 0.32 \\
HL-MRF-L  & 0.46 & 0.50 & 0.28 \\
MRF & 110.96 & 184.32 & 212.36 \\
\bottomrule
\end{tabular}
\end{center}
\end{table}

\subsubsection{Link Prediction}
\label{sec:jester}

Preference prediction is the task of inferring user attitudes (often quantified by ratings) toward a set of items. 
This problem is naturally structured, since a user's preferences are often interdependent, as are an item's ratings. 
\emph{Collaborative filtering} is the task of predicting unknown ratings using only a subset of observed ratings. 
Methods for this task range from simple nearest-neighbor classifiers to complex latent factor models.
More generally, this problem is an instance of link prediction, since the goal is to predict links indicating preference between
users and content.
Since preferences are ordered rather than Boolean, it is natural to represent them with the continuous variables of HL-MRFs,
with higher values indicating greater preference.
To illustrate the versatility of HL-MRFs, we design a simple, interpretable collaborative filtering model for predicting humor
preferences.
We test this model on the Jester dataset, a repository of ratings from 24,983 users on a set of 100 jokes \citep{goldberg:ir01}. 
Each joke is rated on a scale of $[-10,+10]$, which we normalize to $[0,1]$. 
We sample a random 2,000 users from the set of those who rated all 100 jokes, which we then split into 1,000 train
and 1,000 test users. 
From each train and test matrix, we sample a random 50\% to use as the observed features $\xx$;
the remaining ratings are treated as the variables $\yy$.

Our HL-MRF model uses an item-item similarity rule:
\[
\texttt{SimRating(J1, J2) \psland\ Likes(U, J1) \pslimpliesright\ Likes(U, J2)}
\]
where \texttt{J1} and \texttt{J2} are jokes and \texttt{U} is a user;
the predicate \texttt{Likes/2} indicates the degree of preference (i.e., rating value); 
and \texttt{SimRating/2} is a closed predicate that measures the mean-adjusted cosine similarity between
the observed ratings of two jokes. 
We also include the following rules to enforce that \texttt{Likes(U,J)} concentrates around the observed average rating of user
\texttt{U} (represented with the predicate \texttt{AvgUserRating/1}) and item \texttt{J}
(represented with the predicate \texttt{AvgJokeRating/1}), and the global average (represented with
the predicate \texttt{AvgRating/1}).
\begin{align*}
&\texttt{AvgUserRating(U) \pslimpliesright\ Likes(U, J)} \\
&\texttt{Likes(U, J) \pslimpliesright\ AvgUserRating(U)} \\
&\texttt{AvgJokeRating(J) \pslimpliesright\ Likes(U, J)} \\
&\texttt{Likes(U, J) \pslimpliesright\ AvgJokeRating(J)} \\
&\texttt{AvgRating("constant") \pslimpliesright\ Likes(U, J)} \\
&\texttt{Likes(U, J) \pslimpliesright\ AvgRating("constant")}
\end{align*}
The atom \texttt{AvgRating("constant")} takes a placeholder constant as an argument, since there is only one
grounding of it for the entire HL-MRF.
Again, all three of these predicates are closed and computed using averages of observed ratings.
In all cases, the observed ratings are taken only from the training data for learning (to avoid leaking
information about the test data) and only from the test data during testing.

We compare our HL-MRF model to a canonical latent factor model,
\emph{Bayesian probabilistic matrix factorization} (BPMF) \citep{salakhutdinov:icml08}.
BPMF is a fully Bayesian treatment and is therefore considered ``parameter-free;" the only parameter that must be
specified is the rank of the decomposition.
Based on settings used by \citet{xiong:sdm10}, we set the rank of the decomposition to 30 and use 100 iterations of burn in
and 100 iterations of sampling.
For our experiments, we use the code of \citet{xiong:sdm10}.
Since BPMF does not train a model, we allow BPMF to use all of the training matrix during the prediction phase.

Table~\ref{tab:jester} lists the normalized mean squared error (NMSE) and normalized mean absolute error (NMAE), averaged over 10 random splits. Though BPMF produces the best scores, the improvement over HL-MRF-L (LME) is not significant in NMAE. 

\begin{table}
\caption{Normalized mean squared/absolute errors (NMSE/NMAE) for preference prediction using the Jester dataset.
The lowest errors are typed in bold.
}
\label{tab:jester}
\begin{center}
\begin{tabular}{lcc}
\toprule
& NMSE & NMAE \\
\midrule
HL-MRF-Q (SP) & 0.0554 & 0.1974 \\
HL-MRF-Q (MPLE) & 0.0549 & 0.1953 \\
HL-MRF-Q (LME) & 0.0738 & 0.2297 \\
\addlinespace
HL-MRF-L (SP) & 0.0578 & 0.2021 \\
HL-MRF-L (MPLE) & 0.0535 & 0.1885 \\
HL-MRF-L (LME) & 0.0544 & \textbf{0.1875} \\
\addlinespace
BPMF & \textbf{0.0501} & \textbf{0.1832}\\
\bottomrule
\end{tabular}
\end{center}
\end{table}

\subsubsection{Image Completion}
\label{sec:faces}

\begin{table}
\caption{Mean squared errors per pixel for image completion. HL-MRFs produce the most accurate completions on the Caltech101 and the left-half Olivetti faces, and only sum-product networks produce better completions on Olivetti bottom-half faces. Scores for other methods are reported in \citet{poon:uai11}.}\label{tab:vision}
\begin{center}
\begin{tabular}{lcccccc}
\toprule
 & HL-MRF-Q (SP) & SPN & DBM & DBN & PCA & NN\\
\midrule
Caltech-Left&  1741 & 1815 & 2998 & 4960 & 2851 & 2327 \\
Caltech-Bottom & 1910 & 1924 & 2656 & 3447 & 1944 & 2575\\
Olivetti-Left & 927 &  942 & 1866 & 2386 & 1076 & 1527 \\
Olivetti-Bottom & 1226 & 918 & 2401 & 1931 & 1265 & 1793\\
\bottomrule
\end{tabular}
\end{center}
\end{table}

\begin{figure}
\begin{center}
\includegraphics[width=2.5in]{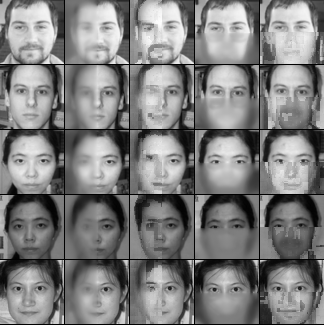}
\hspace{.1in}
\includegraphics[width=2.5in]{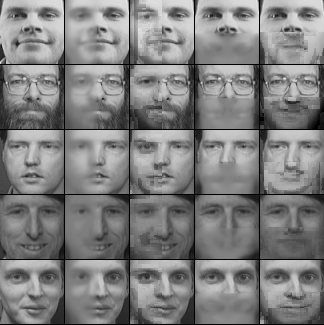}
\end{center}
\caption{Example results on image completion of Caltech101 (left) and Olivetti (right) faces. From left to right in each column:
(1) true face, left side predictions by (2) HL-MRFs and (3) SPNs, and bottom half predictions by (4) HL-MRFs and (5) SPNs.
SPN completions are downloaded from \citet{poon:uai11}.}
\label{fig:faces}
\end{figure}

Digital image completion requires models that understand how pixels relate to each other, such that when some pixels are
unobserved, the model can infer their values from parts of the image that are observed.
We construct pixel-grid HL-MRFs for image completion.
We test these models using the experimental setup of \citet{poon:uai11}:
we reconstruct images from the Olivetti face data set and the Caltech101 face category.
The Olivetti data set contains 400 images, 64 pixels wide and tall, and the Caltech101 face category contains 435 examples of faces, 
which we crop to the center 64 by 64 patch, as was done by \citet{poon:uai11}.
Following their experimental setup, we hold out the last fifty images and predict either the left half of the image or the bottom half. 

The HL-MRFs in this experiment are much more complex than the ones in our other experiments because we allow each pixel to
have its own weight for the following rules, which encode agreement or disagreement between neighboring pixels:
\begin{align*}
&\texttt{Bright("P\_ij", I) \psland\ North("P\_ij", Q) \pslimpliesright\ Bright(Q, I)} \\
&\texttt{Bright("P\_ij", I) \psland\ North("P\_ij", Q) \pslimpliesright\ !Bright(Q, I)} \\
&\texttt{!Bright("P\_ij", I) \psland\ North("P\_ij", Q) \pslimpliesright\ Bright(Q, I)} \\
&\texttt{!Bright("P\_ij", I) \psland\ North("P\_ij", Q) \pslimpliesright\ !Bright(Q, I)}
\end{align*}
where \texttt{Bright("P\_ij", I)} is the normalized brightness of pixel \texttt{"P\_ij"} in image \texttt{I},
and \texttt{North("P\_ij", Q)} indicates that \texttt{Q} is the north neighbor of \texttt{"P\_ij"}.
We similarly include analogous rules for the south, east, and west neighbors, as well as the pixels
mirrored across the horizontal and vertical axes.
This setup results in up to 24 rules per pixel, (boundary pixels may not have north, south, east, or west neighbors)
which, in a 64 by 64 image, produces 80,896 PSL rules.

We train these HL-MRFs using SP with a 5.0 step size on the first 200 images of each data set and test on the last fifty.
For training, we maximize the data log-likelihood of uniformly random held-out pixels for each training image, allowing for
generalization throughout the image.
Table~\ref{tab:vision} lists our results and others reported by \citet{poon:uai11} for sum-product networks (SPN),
deep Boltzmann machines (DBM), deep belief networks (DBN), principal component analysis (PCA), and nearest neighbor (NN).
HL-MRFs produce the best mean squared error on the left- and bottom-half settings for the Caltech101 set and the left-half setting
in the Olivetti set.
Only sum product networks produce lower error on the Olivetti bottom-half faces. 
Some reconstructed faces are displayed in Figure~\ref{fig:faces}, where the shallow, pixel-based HL-MRFs produce comparably
convincing images to sum-product networks, especially in the left-half setting, where HL-MRFs can learn which pixels are likely to
mimic their horizontal mirror.
While neither method is particularly good at reconstructing the bottom half of faces, the qualitative difference between the deep
SPN and the shallow HL-MRF completions is that SPNs seem to hallucinate different faces, often with some artifacts,
while HL-MRFs predict blurry shapes roughly the same pixel intensity as the observed, top half of the face. 
The tendency to better match pixel intensity helps HL-MRFs score better quantitatively on the Caltech101 faces, where the lighting
conditions are more varied than in Olivetti faces.

Training and predicting with these HL-MRFs takes little time.
In our experiments, training each model takes about 45 minutes on a 12-core machine, while predicting takes under a second
per image.
While \citet{poon:uai11} report faster training with SPNs, both HL-MRFs and SPNs clearly belong to a class of faster models when
compared to DBNs and DBMs, which can take days to train on modern hardware.



\section{Related Work}
\label{sec:related}

Researchers in artificial intelligence and machine learning have long been interested in predicting
interdependent unknowns using structural dependencies.
Some of the earliest work in this area is inductive logic programming (ILP) \citep{muggleton:lp94}, in
which structural dependencies are described with first-order logic.
Using first-order logic has several advantages.
First, it can capture many types of dependencies among variables, such as correlations, anti-correlations,
and implications.
Second, it can compactly specify dependencies that hold across many different sets of propositions by
using variables as wildcards that match entities in the data.
These features enable the construction of intuitive, general-purpose models that are easily applicable
or adapted to different domains.
Inference for ILP finds the propositions that satisfy a query, consistent with a relational knowledge base.
However, ILP is limited by its difficulty in coping with uncertainty.
Standard ILP approaches only model dependencies which hold universally, and such dependencies are
rare in real-world data.

Another broad area of research, probabilistic methods, directly models uncertainty over unknowns.
Probabilistic graphical models (PGMs) \citep{koller:book09} are a family of formalisms for
specifying joint distributions over interdependent unknowns through graphical structures.
The graphical structure of a PGM generally represents conditional independence relationships among
random variables.
Explicitly representing conditional independence relationships allows a distribution to be more compactly
parametrized.
For example, in the worst case, a discrete distribution could be represented by an exponentially large table
over joint assignments to the random variables.
However, describing the distribution in smaller, conditionally independent pieces can be much more
compact.
Similar benefits apply to continuous distributions.
Algorithms for probabilistic inference and learning can also operate over the conditionally independent
pieces described by the graph structure.
They are therefore straightforward to apply to a wide variety of distributions.
Categories of PGMs include Markov random fields (MRFs), Bayesian networks (BNs),
and dependency networks (DNs).
Constructing PGMs often requires careful design, and models are usually constructed for single tasks and data sets.

More recently, researchers have sought to combine the advantages of relational and probabilistic
approaches, creating the field of {\em statistical relational learning} (SRL) \citep{getoor:book07}.
SRL techniques build probabilistic models of relational data, i.e., data composed of entities and
relationships connecting them.
Relational data is most often described using a relational calculus, but SRL techniques are also
equally applicable to similar categories of data that go by other names, such as graph data or network data.
Modeling relational data is inherently complicated by the large number of interconnected and overlapping
structural dependencies that are typically present.
This complication has motivated two directions of work.
The first direction is algorithmic, seeking inference and learning methods that scale up to high dimensional models.
The other direction is both user-oriented and---as a growing body of evidence shows---supported by
learning theory, seeking formalisms for compactly specifying entire groups of dependencies in the model
that share both form and parameters.
Specifying these grouped dependencies, often in the form of templates via a domain-specific language,
is convenient for users.
Most often in relational data the structural dependencies hold without
regard to the identities of entities, instead being induced by an entity's class (or classes) and the
structure of its relationships with other entities.
Therefore, many SRL models and languages give users the ability to specify dependencies in this abstract form and ground out
models over specific data sets based on these definitions.
In addition to convenience, recent work in learning theory says that repeated dependencies with tied parameters
can be the key to generalizing from a few---or even one---large, structured training example(s) \citep{london:jmlr16}.

A related field to SRL is {\em structured prediction} (SP) \citep{bakir:book07, nowozin:book16}, which generalizes the
tasks of classification and regression to the task of predicting structured objects.
The loss function used during learning is generalized to a task-appropriate loss function
that scores disagreement between predictions and the true structures.
Often, models for structured prediction take the form of energy functions that are linear in their
parameters.
Therefore, prediction with such models is equivalent to MAP inference for MRFs.
A distinct branch of SP is learn-to-search methods, in which the problem is decomposed into a series of one-dimension
prediction problems.
The challenge is to learn a good order in which to predict the components of the structure, so that each one-dimension
prediction problem can be conditioned on the most useful information.
Examples of learn-to-search methods include incremental structured perceptron \citep{collins:acl04}, SEARN \citep{daume:ml09},
DAgger \citep{ross:aistats11}, and AggreVaTe \citep{ross:arxiv14}.

In this paper we focus on SP methods that perform joint prediction directly.
Better understanding the differences and relative advantages of joint-prediction methods and learn-to-search methods
is an important direction for future work.
In the rest of this section we survey models and domain-specific languages for SP and SRL (Section~\ref{sec:related_models}),
inference methods (Section~\ref{sec:related_inference}), and learning methods (Section~\ref{sec:related_learning}).

\subsection{Models and Languages}
\label{sec:related_models}

SP and SRL encompass many approaches.
One broad area of work---of which PSL is a part---uses first-order logic and other relational formalisms
to specify templates for PGMs.
Probabilistic relational models \citep{friedman:ijcai99} define templates for BNs in terms of a
database schema, and they can be grounded out over instances of that schema to create BNs.
Relational dependency networks \citep{neville:jmlr07} template RNs using structured query language
(SQL) queries over a relational schema.
Markov logic networks (MLNs) \citep{richardson:ml06} use first-order logic to define Boolean MRFs.
Each logical clause in a first-order knowledge base is a template for a set of potentials when the MLN
is grounded out over a set of propositions.
Whether each proposition is true is a Boolean random variable, and the potential has a value of one when
the corresponding ground clause is satisfied by the propositions and zero when it is not.
(MLNs are formulated such that higher values of the energy function are more probable.)
Clauses can either be weighted, in which case the potential has the weight of the clause that templated it,
or unweighted, in which case in must hold universally, as in ILP.
In these ways, MLNs are similar to PSL.
Whereas MLNs are defined over Boolean variables, PSL is a templating language for HL-MRFs, which are
defined over continuous variables.
However, these continuous variables can be used to model discrete quantities.
See Section~\ref{sec:logic} for more information on the relationships between HL-MRFs and discrete MRFs,
and Section~\ref{sec:experiments} for empirical comparisons between the two.
As we show, HL-MRFs and PSL scale much better while retaining the rich expressivity and accuracy of their discrete counterparts.
In addition, HL-MRFs and PSL can reason directly about continuous data.

PSL is part of a broad family of probabilistic programming languages \citep{gordon:icse14}.
The goals of probabilistic programming and SRL often overlap.
Probabilistic programming seeks to make constructing probabilistic models easy for the end user,
and separate model specification from the development of inference and learning algorithms.
If algorithms can be developed for the entire space of models covered by a language, then it is easy for users to experiment
with including and excluding different model components.
It also makes it easy for existing models to benefit from improved algorithms.
Separation of model specification and algorithms is also useful in SRL for the same reasons.
In this paper we emphasize designing algorithms that are flexible enough to support the full class of HL-MRFs.
Examples of probabilistic programming languages include
IBAL \citep{pfeffer:ijcai01},
BLOG \citep{milch:ijcai05},
Markov logic \citep{richardson:ml06},
ProbLog \citep{deraedt:ijcai07},
Church \citep{goodman:uai08},
Figaro \citep{pfeffer:techreport09},
FACTORIE \citep{mccallum:nips09},
Anglican \citep{wood:aistats14},
and Edward \citep{tran:arxiv16}.

Other formalisms have also been proposed for probabilistic reasoning over continuous domains and other domains equipped with semirings.
Hybrid Markov logic networks \citep{wang:aaai08} mix discrete and continuous variables.
In addition to the dependencies over discrete variables supported by MLNs, they support soft equality constraints between two variables of the same form as those defined by squared arithmetic rules in PSL, as well as linear potentials of the form $\y_1 - \y_2$ for a soft inequality constraint $\y_1 > \y_2$.
Inference in hybrid MLNs is intractable.
\citet{wang:aaai08} propose a random walk algorithm for approximate MAP inference.
Another related formalism is aProbLog \citep{kimmig:aaai11}, which generalizes ProbLog to allow clauses to be annotated with elements from a semiring, generalizing ProbLog's support for clauses annotated with probabilities.
Many common inference tasks can be generalized from this perspective as algebraic model counting \citep{kimmig:jal16}.
The PITA system \citep{riguzzi:iclp11} for probabilistic logic programming can also be viewe as implementing inference over various semirings.

\subsection{Inference}
\label{sec:related_inference}

Whether viewed as MAP inference for an MRF or SP without probabilistic semantics, searching over a
structured space to find the optimal prediction is an important but difficult task.
It is NP-hard in general \citep{shimony:ai94}, so much work has focused on approximations
and identifying classes of problems for which it is tractable.
A well-studied approximation technique is local consistency relaxation (LCR) \citep{wainwright:book08}.
Inference is first viewed as an equivalent optimization over the realizable expected values of the potentials,
called the marginal polytope.
When the variables are discrete and each potential is an indicator that a subset of variables is in a certain state,
this optimization becomes a linear program.
Each variable in the program is the marginal probability that a variable is a particular state or the variables
associated with a potential are in a particular joint state.
The marginal polytope is then the set of marginal probabilities that are globally consistent.
The number of linear constraints required to define the marginal polytope is exponential in the size of the
problem, however, so the linear program has to be relaxed in order to be tractable.
In a local consistency relaxation, the marginal polytope is relaxed to the local polytope, in which the
marginals over variables and potential states are only locally consistent in the sense that
each marginal over potential states sums to the marginal distributions over the associated variables.

A large body of work has focused on solving the LCR objective quickly.
Typically, off-the-shelf convex optimization methods do not scale well for large graphical models
and structured predictors \citep{yanover:jmlr06}, so a large branch of research has investigated highly
scalable message-passing algorithms.
One approach is dual decomposition (DD) \citep{sontag:optchapter11}, which solves a problem
dual to the LCR objective.
Many DD algorithms use coordinate descent, such as TRW-S \citep{komolgorov:pami06},
MSD \citep{werner:pami07}, MPLP \citep{globerson:nips07}, and ADLP \citep{meshi:ecml11}.
Other DD algorithms use subgradient-based approaches \citep[e.g.,][]{jojic:icml10, komodakis:pami11, schwing:nips12}.

Another approach to solving the LCR objective uses message-passing algorithms to solve the problem directly in
its primal form.
One well-known algorithm is that of \citet{ravikumar:jmlr10}, which uses proximal optimization, a
general approach that iteratively improves the solution by searching for nearby improvements.
The authors also provide rounding guarantees for when the relaxed solution is integral, i.e., the relaxation
is tight, allowing the algorithm to converge faster.
Another message-passing algorithm that solves the primal objective is AD$^3$ \citep{martins:jmlr15},
which uses the alternating direction method of multipliers (ADMM).
AD$^3$ optimizes objective~(\ref{eq:localvariational}) for binary, pairwise MRFs and supports the
addition of certain deterministic constraints on the variables.
A third example of a primal message-passing algorithm is APLP \citep{meshi:ecml11}, which is the primal
analog of ADLP.
Like AD$^3$, it uses ADMM to optimize the objective.

Other approaches to approximate inference include tighter linear programming relaxations \citep{sontag:uai08, sontag:uai12}.
These tighter relaxations enforce local consistency on variable subsets that are larger than individual
variables, which makes them {\em higher-order local consistency relaxations}.
\citet{mezuman:uai13} developed techniques for special cases of higher-order relaxations, such as when the MRF contains cardinality
potentials, in which the probability of a configuration depends on the number of variables in a particular state.
Researchers have also explored nonlinear convex programming relaxations, e.g., \citet{ravikumar:icml06} and 
\citet{kumar:cvpr06}.

Previous analyses have identified particular subclasses whose local consistency
relaxations are tight, i.e., the maximum of the relaxed program is exactly the maximum of the original problem. 
These special classes include graphical models with tree-structured dependencies, models with submodular potential functions,  models encoding bipartite matching problems, and those with \emph{nand} potentials and perfect graph structures \citep{wainwright:book08,schrijver:book03,jebara:uai09,foulds:aistats11}.
Researchers have also studied performance guarantees of other subclasses of the first-order local consistency relaxation.
\citet{kleinberg:jacm02} and \citet{chekuri:discmath05} considered the metric labeling problem.
\citet{feldman:infotheory05} used the local consistency relaxation to decode binary linear codes.

In this paper we examine the classic problem of MAX SAT---finding a joint Boolean assignment to a set of propositions that
maximizes the sum of a set of weighted clauses that are satisfied---as an instance of SP.
Researchers have also considered approaches to solving MAX SAT other than the one one we study, the randomized
algorithm of \citet{goemans:discmath94}.
One line of work focusing on convex programming relaxations has obtained stronger rounding guarantees than
\citet{goemans:discmath94} by using nonlinear programming, e.g., \citet{asano:alg02} and references therein.
Other work does not use the probabilistic method but instead searches for discrete solutions directly, e.g.,
\citet{mills:jar00}, \citet{larrosa:ai08}, and \citet{choi:cp09}.
We note that one such approach, that of \citet{wah:ijcai97}, is essentially a type of DD formulated for MAX SAT.
A more recent approach blends convex programming and discrete search via mixed integer programming \citep{davies:sat13}.
Additionally, \citet{huynh:ecml09} introduced a linear programming relaxation for MLNs inspired by MAX SAT relaxations,
but the relaxation of general Markov logic provides no known guarantees on the quality of solutions.

Finally, \emph{lifted inference} takes advantage of symmetries in probability distributions to reduce the amount of work required for inference.
Some of the earliest approaches identified repeated dependency structures in PGMs to avoid repeated computations \citep{koller:uai97, pfeffer:uai99}.
Lifted inference has been widely applied in SRL because the templates that are commonly used to define PGMs often induce symmetries.
Various inference techniques for discrete MRFs have been extended to a lifted approach, including belief propagation \citep{jaimovich:uai07, singla:aaai08, kersting:uai09} and Gibbs sampling \citep{venugopal:nips12}.
Approaches to lifted convex optimization \citep{mladenov:aistats12} might be extended to HL-MRFs.
See \citet{desalvobraz:07}, \citet{kersting:ecai12}, and \citet{kimmig:ml15} for more information on lifted inference.

\subsection{Learning}
\label{sec:related_learning}

\citet{taskar:nips03} connected SP and PGMs by showing how to train MRFs
with large-margin estimation, a generalization of the large-margin objective for binary classification used
to train support vector machines \citep{vapnik:book00}.
Large-margin learning is a well-studied approach to train structured predictors because it directly incorporates the
structured loss function into a convex upper bound on the true objective: the regularized expected risk.
The learning objective is to find the parameters with smallest norm such that a linear combination of feature functions assign a better
score to the training data than all other possible predictions.
The amount by which the score of the correct prediction must exceed the score of other predictions is scaled using the structured
loss function.
The objective is therefore encoded as a norm minimization problem subject to many linear constraints,
one for each possible prediction in the structured space.

Structured SVMs \citep{tsochantaridis:jmlr05} extend large-margin estimation to a broad class of structured
predictors and admit a tractable cutting-plane learning algorithm.
This algorithm will terminate in a number of iterations linear in the size of the problem,
and so the computational challenge of large-margin learning for structured prediction comes down to the
task of finding the most violated constraint in the learning objective.
This can be accomplished by optimizing the energy function plus the loss function.
In other words, the task is to find the structure that is the best combination of being favored by the energy function
but unfavored by the loss function.
Often, the loss function decomposes over the components of the prediction space,
so the combined energy function and loss function can often be viewed as simply
the energy function of another structured predictor that is equally challenging or easy to optimize,
such as when the space of structures is a set of discrete vectors and the loss function is the Hamming distance.

It is common during large-margin estimation that no setting of the parameters can predict all the training data without error.
In this case, the training data is said to not be separable, again generalizing the notion of linear separability in
the feature space from binary classification.
The solution to this problem is to add slack variables to the constraints that require the training data to be assigned
the best score.
The magnitude of the slack variables are penalized in the learning objective, so estimation must trade off between
the norm of the parameters and violating the constraints.
\citet{joachims:ml09} extend this formulation to a ``one slack'' formulation, in which a single slack variable is used
for all the constraints across all training examples, which is more efficient.
We use this framework for large-margin estimation for HL-MRFs in Section~\ref{sec:largemargin}.

The repeated inferences required for large-margin learning, one to find the most-violated constraint at each iteration, can
become computationally expensive.
Therefore researchers have explored speeding up learning by interleaving the inference problem with the learning problem.
In the cutting-plane formulation discussed above, the objective is equivalently a saddle-point problem, with the solution at
the minimum with respect to the parameters and the maximum with respect to the inference variables.
\citet{taskar:icml05} proposed dualizing the inner inference problem to form a joint minimization.
For SP problems with a tight duality gap, i.e., the dual problem has the same optimal value as the primal problem,
this approach leads to an equivalent, convex optimization that can be solved for all variables simultaneously.
In other words, the learning and most-violated constraint problems are solved simultaneously, greatly reducing training time.
For problems with non-tight duality gaps, e.g., MAP inference in general, discrete MRFs, \citet{meshi:icml10} showed that
the same principle can be applied by using approximate inference algorithms like dual decomposition to bound the
primal objective.

A related problem to parameter learning is \emph{structure learning}, i.e., identifying an accurate dependency structure for a model.
A common SRL approach is searching over the space of templates for PGMs.
For probabilistic relational models, \citet{friedman:ijcai99} learned structures
described in the vocabulary of relational schemas.
For models that are templated with first-order-logic-like languages, such as 
PSL and MLNs, these approaches take the form of rule learning.
Based on rule-learning techniques from inductive logic programming \citep[e.g.,][]{richards:aaai92, deraedt:ml96} a series of approaches have sought to learn MLN rules from relational data.
Initially, \citet{kok:icml05} learned rules by generating candidates and performing
a beam search to identify rules that improved a weighted pseudolikelihood objective.
Then, \citet{mihalkova:icml07} observed that the previous approach
generated candidate rules without regard to the data, so they introduced an approach that used
the data to guide the proposal of rules via \emph{relational pathfinding}.
\citet{kok:icml10} improved on that by first performing graph clustering to find
common \emph{motifs}, which are common subgraphs, to guide rule proposal.
They observed that modifying a rule set one clause at a time often got stuck in poor
local optima, and by using the motifs as refinement operators instead, they were
able to converge to better optima.
Other approaches to structure learning search directly over grounded PGMs, including $\ell_1$-regularized  pseudolikelihood maximization \citep{ravikumar:annalsofstats10} and grafting \citep{perkins:jmlr03, zhu:kdd10}.
These methods can all be extended to HL-MRFs and PSL.


\section{Conclusion}
\label{sec:conclusion}

In this paper we introduced HL-MRFs, a new class of probabilistic graphical models that unite and generalize several approaches to
modeling relational and structured data: Boolean logic, probabilistic graphical models, and fuzzy logic.
HL-MRFs can capture relaxed, probabilistic inference with Boolean logic and exact, probabilistic inference with fuzzy logic,
making them useful models for both discrete and continuous data.
HL-MRFs also generalize these inference techniques with additional expressivity, allowing for even more flexibility.
HL-MRFs are a significant addition to the the library of machine learning tools because they embody
a useful point in the spectrum of models that trade off between scalability and expressivity.
As we showed, they can be easily applied to a wide range of structured problems in machine learning
and achieve high-quality predictive performance, competitive with or surpassing the performance of
canonical approaches.
However, these other models either do not scale as well, like discrete MRFs, or are not as versatile in their
ability to capture a wide range of problems, like Bayesian probabilistic matrix factorization.

We also introduced PSL, a probabilistic programming language for HL-MRFs.
PSL makes HL-MRFs easy to design, allowing users to encode their ideas for structural dependencies
using an intuitive syntax based on first-order logic.
PSL also helps accelerate a time-consuming aspect of the modeling process: refining a model.
In contrast with other types of models that require specialized inference and learning algorithms
depending on which structural dependencies are included, HL-MRFs can encode many types of
dependencies and scale well with the same inference and learning algorithms.
PSL makes it easy to quickly add, remove, and modify dependencies in the model and rerun inference
and learning, allowing users to quickly improve the quality of their models.
Finally, because PSL uses a first-order syntax, each PSL program actually specifies an entire class of HL-MRFs, parameterized by the
particular data set over which it is grounded.
Therefore, a model or components of a model refined for one data set can easily be applied to others.

Next, we introduced inference and learning algorithms that scale to large problems.
The MAP inference algorithm is far more scalable than standard tools for convex optimization
because it leverages the sparsity that is so common to the dependencies in structured prediction.
The supervised learning algorithms extend standard learning objectives to HL-MRFs.
Together, this combination of an expressive formalism, a user-friendly probabilistic programming
language, and highly scalable algorithms enables researchers and practitioners to easily build large-scale,
accurate models of relational and structured data.\footnote{An open source implementation, tutorials, and data sets are available at \url{http://psl.linqs.org}.}

This paper also lays the foundation for many lines of future work.
Our analysis of local consistency relaxation (LCR) as a hierarchical optimization is a general proof
technique,
and it could be used to derive compact forms for other LCR objectives.
As in the case of MRFs defined using logical clauses, such compact forms can simplify analysis and could
lead to a greater understanding of LCR for other classes of MRFs.
Another important line of work is understanding what guarantees apply to the MAP states of HL-MRFs.
Can anything be said about their ability to approximate MAP inference in discrete models that go beyond the models
already covered by the known rounding guarantees?
Future directions also include developing new algorithms for HL-MRFs.
One important direction is marginal inference for HL-MRFs and algorithms for sampling from them.
Unlike marginal inference for discrete distributions, which computes the marginal probability that a variable is in a particular state,
marginal inference for HL-MRFs requires finding the marginal probability that a variable is in a particular range.
One option for doing so, as well as generating samples from HL-MRFs, is to extend the hit-and-run sampling scheme of 
\citet{broecheler:nips10}.
This method was developed for continuous constrained MRFs with piecewise-linear potentials.
There are also many new domains to which HL-MRFs and PSL can be applied.
With these modeling tools, researchers can design and apply new
solutions to structured prediction problems.


\acks{

We acknowledge the many people who have contributed to the development
of HL-MRFs and PSL.
Contributors include Eriq Augustine, Shobeir Fakhraei, James Foulds, Angelika Kimmig, Stanley Kok, Ben London, Hui Miao, Lilyana Mihalkova,
Dianne P. O'Leary, Jay Pujara, Arti Ramesh, Theodoros Rekatsinas, and V.S. Subrahmanian.
This work was supported by NSF grants CCF0937094 and IIS1218488, and IARPA via DoI/NBC contract number D12PC00337.
The U.S. Government is authorized to reproduce and distribute reprints for governmental purposes notwithstanding any
copyright annotation thereon.
Disclaimer: The views and conclusions contained herein are those of the authors and should not be interpreted as necessarily
representing the official policies or endorsements, either expressed or implied, of IARPA, DoI/NBC, or the U.S. Government.}

\appendix


\section{Proof of Theorem~\ref{thm:equivalence}}
\label{app:equivalence}

In this appendix, we prove the equivalence of objectives~(\ref{eq:lpmaxsat}) and~(\ref{eq:localvariational}).
Our proof analyzes the local consistency relaxation
to derive an equivalent, more compact optimization over only the variable pseudomarginals $\ppmlpvar$
that is identical to the MAX SAT relaxation.
Since the variables are Boolean, we refer to each pseudomarginal
$\pmlpvar_\imsvar(1)$ as simply $\pmlpvar_\imsvar$.
Let $\msvarset_\imsclause^\false$ denote the unique setting such that
$\pot_\imsclause(\msvarset_\imsclause^\false) = 0$.
(I.e., $\msvarset_\imsclause^\false$ is the setting in which each literal in the clause
$\msclause_\imsclause$ is false.)

We begin by reformulating the local consistency relaxation as a hierarchical optimization, first over
the variable pseudomarginals $\ppmlpvar$ and then over the factor pseudomarginals $\ppmlpfactorvar$.
Due to the structure of local polytope $\localpolytope$, the pseudomarginals $\ppmlpvar$ parameterize inner linear
programs that decompose over the structure of the MRF,
such that---given fixed $\ppmlpvar$---there is an independent linear program
$\pmlppot_\imsclause(\ppmlpvar)$ over $\ppmlpfactorvar_\imsclause$ for each clause $\msclause_\imsclause$.
We rewrite objective~(\ref{eq:localvariational}) as
\begin{equation}
\label{eq:logicvariational}
\argmax_{\ppmlpvar \in [0,1]^\ny} \sum_{\msclause_\imsclause \in \msclauseset} \pmlppot_\imsclause(\ppmlpvar),
\end{equation}
where 
\begin{align}
\pmlppot_\imsclause(\ppmlpvar) =
\max_{\ppmlpfactorvar_\imsclause} & ~~\msweight_\imsclause \sum_{\msvarset_\imsclause | \msvarset_\imsclause \neq \msvarset_\imsclause^\false} \pmlpfactorvar_\imsclause(\msvarset_\imsclause) & \label{eq:pmlp}\\
\text{such that}
& \sum_{\msvarset_\imsclause | \msvar_\imsclause(\imsvar) = 1} \pmlpfactorvar_\imsclause(\msvarset_\imsclause) = \pmlpvar_\imsvar & \forall \imsvar \in \msindicatorset_\imsclause^+ \label{eq:posmarginal} \\
& \sum_{\msvarset_\imsclause | \msvar_\imsclause(\imsvar) = 0} \pmlpfactorvar_\imsclause(\msvarset_\imsclause) = 1 - \pmlpvar_\imsvar & \forall \imsvar \in \msindicatorset_\imsclause^- \label{eq:negmarginal} \\
& ~~~\sum_{\msvarset_\imsclause} \pmlpfactorvar_\imsclause(\msvarset_\imsclause) = 1 & \label{eq:simplex} \\
& ~~~\pmlpfactorvar_\imsclause(\msvarset_\imsclause) \geq 0 & \forall \msvarset_\imsclause~. \label{eq:primalnonneg}
\end{align}
It is straightforward to verify that objectives~(\ref{eq:localvariational}) and~(\ref{eq:logicvariational})
are equivalent for MRFs with disjunctive clauses for potentials.
All constraints defining $\localpolytope$ can be derived from the constraint
$\ppmlpvar \in [0,1]^\nmsvar$ and the constraints in the definition of $\pmlppot_\imsclause(\ppmlpvar)$.
We have omitted redundant constraints to simplify analysis.

To make this optimization more compact, we replace each inner linear program
$\pmlppot_\imsclause(\ppmlpvar)$ with an expression that gives its optimal value for any setting of $\ppmlpvar$.
Deriving this expression requires reasoning about any maximizer $\ppmlpfactorvar_\imsclause^\star$ of
$\pmlppot_\imsclause(\ppmlpvar)$, which is guaranteed to exist because problem~(\ref{eq:pmlp}) is
bounded and feasible\footnote{Setting $\pmlpfactorvar_\imsclause(\msvarset_\imsclause)$
to the probability defined by $\ppmlpvar$ under the assumption that the elements of
$\msvarset_\imsclause$ are independent, i.e., the product of the pseudomarginals,
is always feasible.} for any parameters $\ppmlpvar \in [0,1]^\nmsvar$ and $\msweight_\imsclause$.

We first derive a sufficient condition for the linear program to not be fully satisfiable, in the sense that it
cannot achieve a value of $\msweight_\imsclause$, the maximum value of the weighted potential
$\msweight_\imsclause \pot_\imsclause(\msvarset)$.
Observe that, by the objective~(\ref{eq:pmlp}) and the simplex constraint~(\ref{eq:simplex}), showing
that $\pmlppot_\imsclause(\ppmlpvar)$ is not fully satisfiable is equivalent to showing that
$\pmlpfactorvar^\star_\imsclause(\msvarset^\false_\imsclause) > 0$.
\begin{lemma}
\label{le:unsat}
If
\[
\sum_{\iy \in \msindicatorset_\ipot^+} \pmlpvar_\iy + \sum_{\iy \in \msindicatorset_\ipot^-} (1 -\pmlpvar_\iy) < 1~,
\]
then $\pmlpfactorvar^\star_\imsclause(\msvarset^\false_\imsclause) > 0$.
\end{lemma}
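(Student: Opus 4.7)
The plan is to argue that the bound holds for \emph{every} feasible $\ppmlpfactorvar_\imsclause$ in problem~(\ref{eq:pmlp}), not just the maximizer, so the hypothesis forces $\pmlpfactorvar_\imsclause(\msvarset_\imsclause^\false)$ to be strictly positive on the whole feasible polytope. Let $S \triangleq \sum_{\iy \in \msindicatorset_\imsclause^+} \pmlpvar_\iy + \sum_{\iy \in \msindicatorset_\imsclause^-}(1 - \pmlpvar_\iy)$ denote the quantity in the hypothesis. Observe that $S$ is exactly the sum of the marginal probabilities that each individual literal of $\msclause_\imsclause$ is true, by constraints~(\ref{eq:posmarginal}) and~(\ref{eq:negmarginal}).

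The key step is a union-bound-style inequality:
\begin{equation*}
\sum_{\msvarset_\imsclause \neq \msvarset_\imsclause^\false} \pmlpfactorvar_\imsclause(\msvarset_\imsclause)
\;\leq\;
\sum_{\iy \in \msindicatorset_\imsclause^+} \pmlpvar_\iy + \sum_{\iy \in \msindicatorset_\imsclause^-}(1 - \pmlpvar_\iy)
\;=\; S.
\end{equation*}
To see this, note that every $\msvarset_\imsclause \neq \msvarset_\imsclause^\false$ makes at least one literal of $\msclause_\imsclause$ true, so $\pmlpfactorvar_\imsclause(\msvarset_\imsclause)$ contributes to at least one term on the right-hand side when constraints~(\ref{eq:posmarginal}) and~(\ref{eq:negmarginal}) are summed across all $\iy \in \msindicatorset_\imsclause^+ \cup \msindicatorset_\imsclause^-$; combined with nonnegativity~(\ref{eq:primalnonneg}), this yields the inequality (configurations that satisfy multiple literals are simply overcounted on the right).

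Combining this inequality with the simplex constraint~(\ref{eq:simplex}) gives
\begin{equation*}
\pmlpfactorvar_\imsclause(\msvarset_\imsclause^\false)
\;=\; 1 - \sum_{\msvarset_\imsclause \neq \msvarset_\imsclause^\false} \pmlpfactorvar_\imsclause(\msvarset_\imsclause)
\;\geq\; 1 - S \;>\; 0,
\end{equation*}
where the strict inequality uses the hypothesis $S < 1$. Since this holds for every feasible $\ppmlpfactorvar_\imsclause$, it holds in particular for the optimizer $\ppmlpfactorvar^\star_\imsclause$, establishing the claim. There is no real obstacle here; the only point requiring care is the union-bound step, where one must verify that summing the marginal-consistency constraints yields a valid upper bound despite overcounting configurations that satisfy more than one literal, which is resolved by the nonnegativity of the factor pseudomarginals.
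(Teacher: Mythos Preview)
Your proof is correct and follows essentially the same approach as the paper: both sum the marginal-consistency constraints~(\ref{eq:posmarginal}) and~(\ref{eq:negmarginal}) to obtain the bound $\sum_{\msvarset_\imsclause \neq \msvarset_\imsclause^\false} \pmlpfactorvar_\imsclause(\msvarset_\imsclause) \leq S$ via nonnegativity, then combine with the simplex constraint~(\ref{eq:simplex}) to force $\pmlpfactorvar_\imsclause(\msvarset_\imsclause^\false) > 0$. Your observation that the bound holds for every feasible point (not just the optimizer) is a minor strengthening but does not change the argument.
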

\begin{proof}
By the simplex constraint~(\ref{eq:simplex}),
\[
\sum_{\iy \in \msindicatorset_\ipot^+} \pmlpvar_\iy + \sum_{\iy \in \msindicatorset_\ipot^-} (1 -\pmlpvar_\iy)
< \sum_{\msvarset_\imsclause} \pmlpfactorvar^\star_\imsclause(\msvarset_\imsclause)~.
\]
Also, by summing all the constraints~(\ref{eq:posmarginal}) and~(\ref{eq:negmarginal}),
\[
\sum_{\msvarset_\imsclause | \msvarset_\imsclause \neq \msvarset_\imsclause^\false}
\pmlpfactorvar^\star_\imsclause(\msvarset_\imsclause)
\leq \sum_{\iy \in \msindicatorset_\ipot^+} \pmlpvar_\iy + \sum_{\iy \in \msindicatorset_\ipot^-} (1 -\pmlpvar_\iy)~,
\]
because all the components of $\ppmlpfactorvar^\star$ are nonnegative, and---except for
$\pmlpfactorvar^\star_\imsclause(\msvarset^\false_\imsclause)$---they all appear at least once in
constraints~(\ref{eq:posmarginal}) and~(\ref{eq:negmarginal}).
These bounds imply
\[
\sum_{\msvarset_\imsclause | \msvarset_\imsclause \neq \msvarset_\imsclause^\false}
\pmlpfactorvar^\star_\imsclause(\msvarset_\imsclause)
< \sum_{\msvarset_\imsclause} \pmlpfactorvar^\star_\imsclause(\msvarset_\imsclause)~,
\]
which means $\pmlpfactorvar^\star_\imsclause(\msvarset^\false_\imsclause) > 0$, completing the proof.
\end{proof}

We next show that if $\pmlppot_\imsclause(\ppmlpvar)$ is parameterized such that it is not fully
satisfiable, as in Lemma \ref{le:unsat}, then its optimum always takes a particular value defined by $\ppmlpvar$.
\begin{lemma}
\label{le:objunsat}
If $\param_\ipot > 0$ and $\pmlpfactorvar^\star_\imsclause(\msvarset^\false_\imsclause) > 0$,
then
\[
\sum_{\msvarset_\imsclause | \msvarset_\imsclause \neq \msvarset_\imsclause^\false}
\pmlpfactorvar^\star_\imsclause(\msvarset_\imsclause)
= \sum_{\iy \in \msindicatorset_\ipot^+} \pmlpvar_\iy + \sum_{\iy \in \msindicatorset_\ipot^-} (1 -\pmlpvar_\iy)~.
\]
\end{lemma}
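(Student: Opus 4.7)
The plan is to show that under the lemma's hypothesis, $\pmlpfactorvar^\star_\imsclause$ places mass only on assignments in which at most one literal of $\msclause_\imsclause$ is satisfied. Once that structural fact is in hand, the claimed identity is immediate from summing the marginal-consistency constraints. Let $k(\msvarset_\imsclause)$ denote the number of literals of $\msclause_\imsclause$ satisfied by $\msvarset_\imsclause$, so $k(\msvarset^\false_\imsclause) = 0$. Adding constraint~(\ref{eq:posmarginal}) over $\iy \in \msindicatorset_\ipot^+$ and constraint~(\ref{eq:negmarginal}) over $\iy \in \msindicatorset_\ipot^-$, and noting that $\pmlpfactorvar^\star_\imsclause(\msvarset_\imsclause)$ is counted exactly $k(\msvarset_\imsclause)$ times on the left, gives
\[
\sum_{\iy \in \msindicatorset_\ipot^+} \pmlpvar_\iy + \sum_{\iy \in \msindicatorset_\ipot^-} (1 - \pmlpvar_\iy) = \sum_{\msvarset_\imsclause} k(\msvarset_\imsclause)\, \pmlpfactorvar^\star_\imsclause(\msvarset_\imsclause).
\]
Since the desired right-hand side is $\sum_{\msvarset_\imsclause \neq \msvarset^\false_\imsclause} \pmlpfactorvar^\star_\imsclause(\msvarset_\imsclause)$, it suffices to prove that $\pmlpfactorvar^\star_\imsclause(\msvarset_\imsclause) = 0$ whenever $k(\msvarset_\imsclause) \geq 2$, because the $k \leq 1$ terms then agree on both sides.

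The central step is an exchange argument. Suppose for contradiction that some $\msvarset'$ with $k(\msvarset') \geq 2$ has $\pmlpfactorvar^\star_\imsclause(\msvarset') > 0$, and pick any coordinate $\iy_0$ whose literal is satisfied by $\msvarset'$. Let $\msvarset^A$ be $\msvarset^\false_\imsclause$ with coordinate $\iy_0$ flipped, and let $\msvarset^B$ be $\msvarset'$ with coordinate $\iy_0$ flipped; by construction $k(\msvarset^A) = 1$ and $k(\msvarset^B) = k(\msvarset') - 1 \geq 1$, and the four assignments $\msvarset^\false_\imsclause, \msvarset', \msvarset^A, \msvarset^B$ are distinct. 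For small $\epsilon > 0$, decrease $\pmlpfactorvar^\star_\imsclause$ by $\epsilon$ on each of $\msvarset^\false_\imsclause$ and $\msvarset'$, and increase it by $\epsilon$ on each of $\msvarset^A$ and $\msvarset^B$. The total mass is unchanged so~(\ref{eq:simplex}) continues to hold, and~(\ref{eq:primalnonneg}) survives by the two positivity hypotheses. A short case check verifies that the marginal constraints~(\ref{eq:posmarginal})~and~(\ref{eq:negmarginal}) are preserved: for $\iy = \iy_0$ the $+\epsilon$ contribution from $\msvarset^A$ and $-\epsilon$ contribution from $\msvarset'$ cancel, while $\msvarset^\false_\imsclause$ and $\msvarset^B$ do not contribute; for $\iy \neq \iy_0$ the pair $(\msvarset^\false_\imsclause, \msvarset^A)$ agrees on coordinate $\iy$ and neither contributes, and the pair $(\msvarset', \msvarset^B)$ also agrees on coordinate $\iy$ so their $\pm\epsilon$ changes cancel whenever they contribute. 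Since three of the four modified assignments satisfy $\msclause_\imsclause$ while $\msvarset^\false_\imsclause$ does not, the objective~(\ref{eq:pmlp}) strictly increases by $\msweight_\imsclause \epsilon > 0$, contradicting the optimality of $\pmlpfactorvar^\star_\imsclause$.

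The hard part is the marginal-preservation bookkeeping in the exchange, which ultimately reduces to the observation that flipping a single shared coordinate in two paired assignments leaves every other variable marginal unaffected. With $\pmlpfactorvar^\star_\imsclause(\msvarset_\imsclause) = 0$ secured for all $k(\msvarset_\imsclause) \geq 2$, combining this with the summed-marginal identity above yields exactly the equality in the lemma.
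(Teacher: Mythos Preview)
Your proof is correct, and it takes a genuinely different route from the paper. The paper argues via the KKT conditions for the inner linear program~(\ref{eq:pmlp}): from $\pmlpfactorvar^\star_\imsclause(\msvarset^\false_\imsclause) > 0$ and complementary slackness it deduces $\eqkkt_\Delta = 0$, then uses the stationarity conditions for the single-disagreement states to bound $\eqkkt_\imsvar \leq -\msweight_\imsclause$, and finally shows that any state with two or more satisfied literals must have a strictly positive multiplier and hence zero primal mass. You reach the same structural fact---$\pmlpfactorvar^\star_\imsclause(\msvarset_\imsclause) = 0$ whenever $k(\msvarset_\imsclause) \geq 2$---by a direct primal exchange: pairing $(\msvarset^\false_\imsclause, \msvarset')$ with the two single-coordinate flips $(\msvarset^A, \msvarset^B)$ and checking that the shift preserves all marginal and simplex constraints while strictly increasing the objective. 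Your argument is more elementary (no Lagrangian machinery) and slightly more economical: the paper goes on to pin down $\pmlpfactorvar^\star_\imsclause(\msvarset_\imsclause)$ exactly for each $k=1$ state, whereas you bypass that by using the summed-marginal identity $\sum_{\msvarset_\imsclause} k(\msvarset_\imsclause)\,\pmlpfactorvar^\star_\imsclause(\msvarset_\imsclause) = \sum_{\iy \in \msindicatorset_\ipot^+} \pmlpvar_\iy + \sum_{\iy \in \msindicatorset_\ipot^-} (1 - \pmlpvar_\iy)$ directly. The KKT route, on the other hand, gives a bit more information about the structure of the optimizer for free.
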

\begin{proof}
We prove the lemma via the Karush-Kuhn-Tucker (KKT) conditions \citep{karush:thesis39, kuhn:berkeley51}.
Since problem~(\ref{eq:pmlp}) is a maximization of a linear function subject to linear constraints,
the KKT conditions are necessary and sufficient for any optimum $\ppmlpfactorvar_\imsclause^\star$.

Before writing the relevant KKT conditions, we introduce some necessary notation.
For a state $\msvarset_\imsclause$, we need to reason about the variables that disagree with
the unsatisfied state $\msvarset^\false_\imsclause$.
Let
\[
\statedisagrees(\msvarset_\imsclause) \triangleq
\left\{\imsvar \in \msindicatorset_\imsclause^+ \cup \msindicatorset_\imsclause^-
| \msvarset_\imsclause(i) \neq \msvarset^\false_\imsclause(i) \right\}
\]
be the set of indices for the variables that do not have the same
value in the two states $\msvarset_\imsclause$ and $\msvarset^\false_\imsclause$.

We now write the relevant KKT conditions for $\ppmlpfactorvar_\imsclause^\star$.
Let ${\boldsymbol \eqkkt}, {\boldsymbol \ineqkkt}$ be real-valued vectors
where $|{\boldsymbol \eqkkt}| = |\msindicatorset_\imsclause^+| + |\msindicatorset_\imsclause^-| + 1$
and $|{\boldsymbol \ineqkkt}| = |\ppmlpfactorvar_\imsclause|$.
Let each $\eqkkt_\imsvar$ correspond to a constraint~(\ref{eq:posmarginal}) or~(\ref{eq:negmarginal})
for $\imsvar \in \msindicatorset_\imsclause^+ \cup \msindicatorset_\imsclause^-$,
and let $\eqkkt_\Delta$ correspond to the simplex constraint~(\ref{eq:simplex}).
Also, let each $\ineqkkt_{\msvarset_\imsclause}$ correspond to a constraint~(\ref{eq:primalnonneg})
for each $\msvarset_\imsclause$.
Then, the following KKT conditions hold:
\begin{align}
& \ineqkkt_{\msvarset_\imsclause} \geq 0 & \forall \msvarset_\imsclause \label{eq:kktnonneg} \\
& \ineqkkt_{\msvarset_\imsclause} \pmlpfactorvar^\star_\imsclause(\msvarset_\imsclause) = 0 & \forall \msvarset_\imsclause \label{eq:ineqcomp} \\
& \eqkkt_\Delta + \ineqkkt_{\msvarset^\false_\imsclause} = 0 & \label{eq:falsestat} \\
& \msweight_\imsclause + \sum_{\imsvar \in \statedisagrees(\msvarset_\imsclause)} \eqkkt_\imsvar + \eqkkt_\Delta + \ineqkkt_{\msvarset_\imsclause} = 0
& \forall \msvarset_\imsclause \neq \msvarset^\false_\imsclause~. \label{eq:stat}
\end{align}

Since $\pmlpfactorvar^\star_\imsclause(\msvarset^\false_\imsclause) > 0$, by condition~(\ref{eq:ineqcomp}),
$\ineqkkt_{\msvarset^\false_\imsclause} = 0$.
By condition~(\ref{eq:falsestat}), then $\eqkkt_\Delta = 0$.
From here we can bound the other elements of ${\boldsymbol \eqkkt}$.
Observe that for every
$\imsvar~\in~\msindicatorset_\imsclause^+~\cup~\msindicatorset_\imsclause^-$,
there exists a state $\msvarset_\imsclause$ such that
$\statedisagrees(\msvarset_\imsclause) = \{\imsvar\}$.
Then, it follows from condition~(\ref{eq:stat}) that there exists $\msvarset_\imsclause$ such that,
for every $\imsvar \in \msindicatorset_\imsclause^+ \cup \msindicatorset_\imsclause^-$,
\[
\msweight_\imsclause + \eqkkt_\imsvar + \eqkkt_\Delta + \ineqkkt_{\msvarset_\imsclause} = 0~.
\]
Since $\ineqkkt_{\msvarset_\imsclause} \geq 0$ by condition~(\ref{eq:kktnonneg})
and $\eqkkt_\Delta = 0$, it follows that $\eqkkt_\imsvar \leq -\msweight_\imsclause$.
With these bounds, we show that, for any state $\msvarset_\imsclause$,
if $|\statedisagrees(\msvarset_\imsclause)| \geq 2$,
then $\pmlpfactorvar^\star_\imsclause(\msvarset_\imsclause) = 0$.
Assume that for some state $\msvarset_\imsclause$, $|\statedisagrees(\msvarset_\imsclause)| \geq 2$.
By condition~(\ref{eq:stat}) and the derived constraints on ${\boldsymbol \eqkkt}$,
\[
\ineqkkt_{\msvarset_\imsclause} \geq (|\statedisagrees(\msvarset_\imsclause)| - 1) \msweight_\imsclause > 0~.
\]
With condition~(\ref{eq:ineqcomp}), $\pmlpfactorvar^\star_\imsclause(\msvarset_\imsclause) = 0$.
Next, observe that for all $\imsvar \in \msindicatorset_\imsclause^+$
(resp.\ $\imsvar \in \msindicatorset_\imsclause^-$) and for any state $\msvarset_\imsclause$,
if $\statedisagrees(\msvarset_\imsclause) = \{\imsvar\}$, then $\msvar_\imsclause(\imsvar) = 1$
(resp.\ $\msvar_\imsclause(\imsvar) = 0$), and for any other state $\msvarset_\imsclause^\prime$
such that $\msvar_\imsclause^\prime(\imsvar)=1$ (resp.\ $\msvar_\imsclause^\prime(\imsvar)=0$),
$\statedisagrees(\msvarset_\imsclause^\prime) \geq 2$.
By constraint~(\ref{eq:posmarginal}) (resp.\ constraint~(\ref{eq:negmarginal})),
$\pmlpfactorvar^\star(\msvarset_\imsclause) = \pmlpvar_\imsvar$
(resp.\ $\pmlpfactorvar^\star(\msvarset_\imsclause) = 1 - \pmlpvar_\imsvar$).

We have shown that if $\pmlpfactorvar^\star_\imsclause(\msvarset^\false_\imsclause) > 0$,
then for all states $\msvarset_\imsclause$, if $\statedisagrees(\msvarset_\imsclause) = \{\imsvar\}$
and $\imsvar \in \msindicatorset_\imsclause^+$
(resp.\ $\imsvar \in \msindicatorset_\imsclause^-$),
then $\pmlpfactorvar^\star_\imsclause(\msvarset_\imsclause) = \pmlpvar_\imsvar$
(resp.\ $\pmlpfactorvar^\star_\imsclause(\msvarset_\imsclause) = 1 - \pmlpvar_\imsvar$),
and if $|\statedisagrees(\msvarset_\imsclause)| \geq 2$,
then $\pmlpfactorvar^\star_\imsclause(\msvarset_\imsclause) = 0$.
This completes the proof.
\end{proof}

Lemma~\ref{le:unsat} says if $\sum_{\iy \in \msindicatorset_\ipot^+} \pmlpvar_\iy + \sum_{\iy \in \msindicatorset_\ipot^-} (1 -\pmlpvar_\iy) < 1$, then $\pmlppot_\imsclause(\ppmlpvar)$
is not fully satisfiable, and Lemma~\ref{le:objunsat} provides its optimal value.
We now reason about the other case, when $\sum_{\iy \in \msindicatorset_\ipot^+} \pmlpvar_\iy + \sum_{\iy \in \msindicatorset_\ipot^-} (1 -\pmlpvar_\iy) \geq 1$, and we show that this condition is sufficient to ensure that
 $\pmlppot_\imsclause(\ppmlpvar)$ is fully satisfiable.
\begin{lemma}
\label{le:sat}
If $\msweight_\ipot > 0$ and
\[
\sum_{\iy \in \msindicatorset_\ipot^+} \pmlpvar_\iy + \sum_{\iy \in \msindicatorset_\ipot^-} (1 -\pmlpvar_\iy) \geq 1~,
\]
then $\pmlpfactorvar^\star_\imsclause(\msvarset^\false_\imsclause) = 0$.
\end{lemma}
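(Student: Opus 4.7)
The plan is to prove Lemma~\ref{le:sat} by contradiction, leveraging Lemma~\ref{le:objunsat} that has just been established. The hypotheses of Lemma~\ref{le:sat} essentially say that the marginals $\ppmlpvar$ already leave enough ``budget'' in the simplex constraint to push all mass off of $\msvarset_\imsclause^\false$; the content of the proof is simply to verify that Lemma~\ref{le:objunsat} forces this budget to be consumed whenever $\pmlpfactorvar^\star_\imsclause(\msvarset^\false_\imsclause) > 0$, yielding an immediate arithmetic contradiction.

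Concretely, I would suppose for contradiction that $\pmlpfactorvar^\star_\imsclause(\msvarset^\false_\imsclause) > 0$. Combined with $\msweight_\imsclause > 0$, this puts us in the hypothesis of Lemma~\ref{le:objunsat}, so
\begin{equation}
\sum_{\msvarset_\imsclause \neq \msvarset_\imsclause^\false} \pmlpfactorvar^\star_\imsclause(\msvarset_\imsclause)
 \;=\; \sum_{\imsvar \in \msindicatorset_\imsclause^+} \pmlpvar_\imsvar + \sum_{\imsvar \in \msindicatorset_\imsclause^-} (1 - \pmlpvar_\imsvar) \;\geq\; 1,
\end{equation}
where the inequality uses the hypothesis of Lemma~\ref{le:sat}.

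The next step is to invoke the simplex constraint~(\ref{eq:simplex}), which gives
\begin{equation}
\pmlpfactorvar^\star_\imsclause(\msvarset_\imsclause^\false) \;=\; 1 - \sum_{\msvarset_\imsclause \neq \msvarset_\imsclause^\false} \pmlpfactorvar^\star_\imsclause(\msvarset_\imsclause) \;\leq\; 0,
\end{equation}
contradicting the assumption that $\pmlpfactorvar^\star_\imsclause(\msvarset^\false_\imsclause) > 0$. Hence $\pmlpfactorvar^\star_\imsclause(\msvarset^\false_\imsclause) = 0$.

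There is no real obstacle to this argument: once Lemma~\ref{le:objunsat} is in hand, Lemma~\ref{le:sat} is essentially a two-line arithmetic consequence. The only subtlety to keep an eye on is that Lemma~\ref{le:objunsat} was derived under nonnegative weights (so that the KKT reasoning goes through), and here we explicitly assume $\msweight_\imsclause > 0$, matching its hypothesis. Once these are aligned, the proof plan reduces to writing the two displayed lines and observing the contradiction.
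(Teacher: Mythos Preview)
Your proposal is correct and matches the paper's proof essentially line for line: assume $\pmlpfactorvar^\star_\imsclause(\msvarset^\false_\imsclause) > 0$, invoke Lemma~\ref{le:objunsat} to get the sum over non-false states $\geq 1$, and contradict the simplex constraint. The only cosmetic gap is that after the contradiction you conclude $\pmlpfactorvar^\star_\imsclause(\msvarset^\false_\imsclause) = 0$ rather than merely $\leq 0$; the paper closes this by explicitly citing the nonnegativity constraint~(\ref{eq:primalnonneg}), which you should mention as well.
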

\begin{proof}
We prove the lemma by contradiction.
Assume that $\msweight_\ipot > 0$,
$\sum_{\iy \in \msindicatorset_\ipot^+} \pmlpvar_\iy + \sum_{\iy \in \msindicatorset_\ipot^-} (1 -\pmlpvar_\iy) \geq 1$,
and that the lemma is false, $\pmlpfactorvar^\star_\imsclause(\msvarset^\false_\imsclause) > 0$.
Then, by Lemma~\ref{le:objunsat},
\[
\sum_{\msvarset_\imsclause | \msvarset_\imsclause \neq \msvarset_\imsclause^\false}
\pmlpfactorvar^\star_\imsclause(\msvarset_\imsclause) \geq 1~.
\]
The assumption that $\pmlpfactorvar^\star_\imsclause(\msvarset^\false_\imsclause) > 0$ implies
\[
\sum_{\msvarset_\imsclause} \pmlpfactorvar^\star_\imsclause(\msvarset_\imsclause) > 1,
\]
which is a contradiction, since it violates the simplex constraint~(\ref{eq:simplex}).
The possibility that $\pmlpfactorvar^\star_\imsclause(\msvarset^\false_\imsclause) < 0$ is excluded by the nonnegativity 
constraints~(\ref{eq:primalnonneg}).
\end{proof}
For completeness and later convenience, we also state the value of $\pmlppot_\imsclause(\ppmlpvar)$
when it is fully satisfiable.
\begin{lemma}
\label{le:objsat}
If $\pmlpfactorvar^\star_\imsclause(\msvarset^\false_\imsclause) = 0$,
then
\[
\sum_{\msvarset_\imsclause | \msvarset_\imsclause \neq \msvarset_\imsclause^\false}
\pmlpfactorvar^\star_\imsclause(\msvarset_\imsclause) = 1~.
\]
\end{lemma}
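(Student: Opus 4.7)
The plan is essentially immediate: the claim follows directly from the simplex constraint~(\ref{eq:simplex}) in the definition of $\pmlppot_\imsclause(\ppmlpvar)$, which any optimum $\ppmlpfactorvar^\star_\imsclause$ must satisfy.

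First I would invoke feasibility of $\ppmlpfactorvar^\star_\imsclause$, which gives
\begin{equation}
\sum_{\msvarset_\imsclause} \pmlpfactorvar^\star_\imsclause(\msvarset_\imsclause) = 1~.
\end{equation}
Next I would split the sum by separating the unique state $\msvarset^\false_\imsclause$ from the rest, yielding
\begin{equation}
\pmlpfactorvar^\star_\imsclause(\msvarset^\false_\imsclause)
+ \sum_{\msvarset_\imsclause | \msvarset_\imsclause \neq \msvarset_\imsclause^\false}
\pmlpfactorvar^\star_\imsclause(\msvarset_\imsclause) = 1~.
\end{equation}
Finally I would substitute the hypothesis $\pmlpfactorvar^\star_\imsclause(\msvarset^\false_\imsclause) = 0$ to conclude the desired equality.

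There is no real obstacle here; the lemma is a bookkeeping counterpart to Lemma~\ref{le:objunsat}, stated for completeness so that both the satisfied and unsatisfied cases of $\pmlppot_\imsclause(\ppmlpvar)$ can be referenced uniformly in subsequent arguments. The only thing to be careful about is citing the correct constraint from the inner linear program~(\ref{eq:pmlp}) rather than any of the marginalization constraints, since it is the normalization constraint alone that drives the result.
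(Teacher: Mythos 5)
Your proposal is correct and matches the paper's proof, which simply cites the simplex constraint~(\ref{eq:simplex}); you have merely spelled out the one-line argument (split off the $\msvarset^\false_\imsclause$ term and substitute zero). Nothing further is needed.
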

\begin{proof}
The lemma follows from the simplex constraint~(\ref{eq:simplex}).
\end{proof}

We can now combine the previous lemmas into a single expression for the value of
$\pmlppot_\imsclause(\ppmlpvar)$.
\begin{lemma}
\label{le:obj}
For any feasible setting of $\ppmlpvar$,
\[
\pmlppot_\imsclause(\ppmlpvar)
= \msweight_\imsclause \min \left\{ \sum_{\imsvar \in \msindicatorset_\imsclause^+}\pmlpvar_\imsvar + \sum_{\imsvar \in \msindicatorset_\imsclause^-}(1-\pmlpvar_\imsvar), 1 \right\}~.
\]
\end{lemma}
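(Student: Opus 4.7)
The plan is to assemble this lemma directly from the four preceding lemmas by a simple case split on the quantity
\[
S_\imsclause \triangleq \sum_{\imsvar \in \msindicatorset_\imsclause^+}\pmlpvar_\imsvar + \sum_{\imsvar \in \msindicatorset_\imsclause^-}(1-\pmlpvar_\imsvar)~.
\]
The goal is to show that the optimal value $\pmlppot_\imsclause(\ppmlpvar)$ of the inner linear program~(\ref{eq:pmlp}) equals $\msweight_\imsclause \min\{S_\imsclause, 1\}$. I would first dispose of the trivial case $\msweight_\imsclause = 0$, in which the objective of problem~(\ref{eq:pmlp}) is identically zero and so is the right-hand side of the claimed identity. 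For the remainder, assume $\msweight_\imsclause > 0$, and let $\ppmlpfactorvar^\star_\imsclause$ denote any optimizer of~(\ref{eq:pmlp}), which exists by the boundedness and feasibility argument already given in the text.

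Next, I would handle the case $S_\imsclause < 1$. Here Lemma~\ref{le:unsat} immediately yields $\pmlpfactorvar^\star_\imsclause(\msvarset^\false_\imsclause) > 0$, so the hypotheses of Lemma~\ref{le:objunsat} are satisfied and
\[
\pmlppot_\imsclause(\ppmlpvar) = \msweight_\imsclause \sum_{\msvarset_\imsclause \neq \msvarset^\false_\imsclause} \pmlpfactorvar^\star_\imsclause(\msvarset_\imsclause) = \msweight_\imsclause S_\imsclause = \msweight_\imsclause \min\{S_\imsclause, 1\}~,
\]
where the final equality uses $S_\imsclause < 1$. The complementary case $S_\imsclause \geq 1$ is handled by Lemma~\ref{le:sat}, which forces $\pmlpfactorvar^\star_\imsclause(\msvarset^\false_\imsclause) = 0$, and then by Lemma~\ref{le:objsat},
\[
\pmlppot_\imsclause(\ppmlpvar) = \msweight_\imsclause \cdot 1 = \msweight_\imsclause \min\{S_\imsclause, 1\}~.
\]
Combining the two cases gives the claimed identity for every feasible $\ppmlpvar$.

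Since all of the real work, in particular the KKT analysis underlying Lemma~\ref{le:objunsat}, has already been carried out, I do not anticipate any substantive obstacle. The only points deserving care are (i) making sure the boundary $S_\imsclause = 1$ is placed on the ``satisfiable'' side so that the two cases partition the feasible region, which matches the hypothesis $S_\imsclause \geq 1$ of Lemma~\ref{le:sat}, and (ii) noting that feasibility of $\ppmlpvar$ (i.e.\ $\pmlpvar_\imsvar \in [0,1]$) is exactly what is needed to invoke the preceding lemmas on the inner program.
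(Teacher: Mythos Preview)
Your proposal is correct and follows essentially the same approach as the paper: dispose of the trivial case $\msweight_\imsclause=0$, then split on whether $S_\imsclause<1$ or $S_\imsclause\geq 1$, invoking Lemmas~\ref{le:unsat}--\ref{le:objunsat} in the former case and Lemmas~\ref{le:sat}--\ref{le:objsat} in the latter. Your remark about placing the boundary $S_\imsclause=1$ on the satisfiable side is exactly what the paper does as well.
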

\begin{proof}
The lemma is trivially true if $\msweight_\imsclause=0$ since any assignment will yield zero value.
If $\msweight_\imsclause~>~0$, then we consider two cases.
In the first case, if $\sum_{\iy \in \msindicatorset_\ipot^+} \pmlpvar_\iy + \sum_{\iy \in \msindicatorset_\ipot^-} (1 -\pmlpvar_\iy) < 1$,
then, by Lemmas~\ref{le:unsat}~and~\ref{le:objunsat},
\[
\pmlppot_\imsclause(\ppmlpvar)
= \msweight_\imsclause \left( \sum_{\imsvar \in \msindicatorset_\imsclause^+}\pmlpvar_\imsvar
+ \sum_{\imsvar \in \msindicatorset_\imsclause^-}(1-\pmlpvar_\imsvar) \right)~.
\]
In the second case, if
$\sum_{\iy \in \msindicatorset_\ipot^+} \pmlpvar_\iy + \sum_{\iy \in \msindicatorset_\ipot^-} (1 -\pmlpvar_\iy) \geq 1$,
then, by Lemmas~\ref{le:sat}~and~\ref{le:objsat},
\[
\pmlppot_\imsclause(\ppmlpvar) = \msweight_\imsclause~.
\]
By factoring out $\msweight_\imsclause$, we can rewrite this piecewise definition of
$\pmlppot_\imsclause(\ppmlpvar)$ as $\msweight_\imsclause$ multiplied by the minimum of
$\sum_{\iy \in \msindicatorset_\ipot^+} \pmlpvar_\iy
+ \sum_{\iy \in \msindicatorset_\ipot^-} (1 -\pmlpvar_\iy)$
and $1$, completing the proof.
\end{proof}

This leads to our final equivalence result.
\begin{customthm}{\ref{thm:equivalence}}
For an MRF with potentials corresponding to disjunctive logical clauses and associated nonnegative
weights, the first-order local consistency relaxation of MAP inference is equivalent to the
MAX SAT relaxation of \citet{goemans:discmath94}.
Specifically, any partial optimum $\ppmlpvar^\star$ of objective~(\ref{eq:localvariational}) is an optimum
$\mssoftvarset^\star$ of objective~(\ref{eq:lpmaxsat}), and vice versa.
\end{customthm}
\begin{proof}
Substituting the solution of the inner optimization from Lemma~\ref{le:obj} into the local consistency relaxation
objective~(\ref{eq:logicvariational}) gives a projected optimization over only $\ppmlpvar$
which is identical to the MAX SAT relaxation objective~(\ref{eq:lpmaxsat}).
\end{proof}

\vskip 0.2in
\bibliography{15-631}

\end{document}